\newcommand\DoToC{%
  \startcontents
  \printcontents{}{1}{\hrulefill\vskip0pt}
  \vskip0pt \noindent\hrulefill
  }
\def\eqref#1{equation~\ref{#1}}
\def\1{\bm{1}}
\def\rva{{\mathbf{a}}}
\def\rvb{{\mathbf{b}}}
\def\rvc{{\mathbf{c}}}
\def\rvf{{\mathbf{f}}}
\def\rvg{{\mathbf{g}}}
\def\rvu{{\mathbf{i}}}
\def\rvo{{\mathbf{o}}}
\def\rvu{{\mathbf{u}}}
\def\rvx{{\mathbf{x}}}
\def\rvz{{\mathbf{z}}}
\def\rmJ{{\mathbf{J}}}
\def\vz{{\bm{z}}}
\DeclareMathAlphabet{\mathsfit}{\encodingdefault}{\sfdefault}{m}{sl}
\SetMathAlphabet{\mathsfit}{bold}{\encodingdefault}{\sfdefault}{bx}{n}
\newtheorem{theorem}{Theorem}
\newtheorem{example}{Example}
\newtheorem{lemma}{Lemma}
\newtheorem{definition}{Definition}
\newtheorem{assumption}{Assumption}
\definecolor{myblue}{HTML}{b2f0ff}
\definecolor{myblue2}{HTML}{cef5ff}
\definecolor{myblue3}{HTML}{e7faff}
\title{Online Time Series Forecasting with \\Theoretical Guarantees}
\author{%
    Zijian Li$^{1,2}$ \quad
    Changze Zhou$^{3}$ \quad
    Minghao Fu$^{5}$ \quad
    Sanjay Manjunath$^{2}$ \quad
    Fan Feng$^{2,5}$\quad\\
    \textbf{Guangyi Chen}$^{1,2}$\quad \textbf{Yingyao Hu}$^{4,*}$ \quad \textbf{Ruichu Cai}$^{3}$ \quad \textbf{Kun Zhang}$^{1,2}$\thanks{Corresponding authors.}\\
    $^1$Carnegie Mellon University \\
    $^2$Mohamed bin Zayed University of Artificial Intelligence\\
    $^3$Guangdong University of Technology\\
    $^4$ Johns Hopkins University\\
    $^5$ University of California San Diego\\
}
\begin{document}

\maketitle

\begin{abstract}

This paper is concerned with online time series forecasting, where unknown distribution shifts occur over time, i.e., latent variables influence the mapping from historical to future observations. To develop an automated way of online time series forecasting, we propose a \textbf{T}heoretical framework for \textbf{O}nline \textbf{T}ime-series forecasting (\textbf{TOT} in short) with theoretical guarantees. Specifically, we prove that supplying a forecaster with latent variables tightens the Bayes risk—the benefit endures under estimation uncertainty of latent variables and grows as the latent variables achieve a more precise identifiability. To better introduce latent variables into online forecasting algorithms, we further propose to identify latent variables with minimal adjacent observations. Based on these results, we devise a model-agnostic blueprint by employing a temporal decoder to match the distribution of observed variables and two independent noise estimators to model the causal inference of latent variables and mixing procedures of observed variables, respectively. Experiment results on synthetic data support our theoretical claims. Moreover, plug-in implementations built on several baselines yield general improvement across multiple benchmarks, highlighting the effectiveness in real-world applications. 
\end{abstract}

\section{Introduction}

Time series data stream in sequentially like an endless tide. Online time-series forecasting \citep{anava2013online,pham2023learning,laufast} aims to leverage recent $\tau$ observed variables $\rvx_{t-\tau:t}$ to predict the future segment $\rvx_{t+1:T}$. In real-world scenarios, each observation $\rvx_t$ may be partially observed, while a set of unobserved latent variables $\rvz_t$ governs the evolving relationship between past and future. These latent variables introduce unknown distribution shifts \citep{zhang2024addressing,zhao2024proactive}, i.e., the conditional distribution $p(\rvx_{t+1:T}\mid\rvx_{t-\tau:t})$ changes over time, resulting in suboptimal performance of time series forecasting algorithms. Therefore, how to adapt to these distribution shifts automatically is a key challenge for online time series forecasting.

To overcome this challenge, different methods are proposed to handle the time-varying distributions. Specifically, some methods devise different model architectures to adapt to nonstationarity. For example, \cite{pham2022learning} considers that the fast adaptation capability of neural networks can handle the distribution changes, so they propose the fast and slow learning networks to balance fast adaptation to recent changes while preserving the old knowledge. Another similar idea is \cite{lau2025fast}, which harnesses slow and fast streams for coarse predictions and near-future forecasting, respectively. Another solution focuses on concept drift estimation. For example, \cite{zhang2024addressing} detects the temporal distribution shift and updates the model in an aggressive way. Considering that the ground-truth future values are delayed until after the forecast horizon, \cite{zhao2024proactive} first estimates the concept drift and then incorporates it into forecasters. And \cite{cai2025disentangling} estimates the nonstationarity with sparsity constraint by assuming the temporal distribution shifts are led by the unknown interventions. Please refer to Appendix \ref{app:related_works} for more discussion of the related works. 

Despite demonstrating empirical gains in mitigating temporal shifts, these methods leave some theoretical questions unanswered. Specifically, existing approaches incorporating distribution shifts into model architectures \citep{pham2023learning,lau2025fast,zhang2024addressing,zhao2024proactive} offer few theoretical guarantees explaining why conditioning on such estimated shifts yields a systematic reduction in forecasting error. Although recent work \cite{cai2025disentangling} provides theoretical results, it often imposes strict conditions on the data generation process, i.e., nonstationarity is led by interventions on latent variables. Moreover, the uncertainty inherent in distribution shift estimation is rarely considered in the generalization bounds of error risk, leaving the discrepancy between empirical and ground-truth generalization risks uncharacterized. Furthermore, how to shrink the aforementioned gap theoretically and empirically remains underexplored. Therefore, a general theoretical framework for online time-series forecasting is urgently needed.

In this paper, we propose a \textbf{T}hretical framework for \textbf{O}nline \textbf{T}ime-series forecasting (\textbf{TOT}) with theoretical guarantees. Specifically, we first consider a general time series generation process, where the temporal distribution shifts are led by latent variables $\rvz_t$. Sequentially, we show that conditioning the forecaster on $\rvz_t$ tightens the Bayes risk; the reduction persists under estimation noise and improves as identifiability sharpens. We further show that both the latent variables and their causal dynamics can be identified using only four adjacent observations, yielding a concrete, minimal-data criterion. Building on these theoretical results, we devise a model-agnostic blueprint with a temporal decoder to match the marginal distributions of observations and two independent noise estimators to model the temporal dynamics of observed and latent variables. Experiment results on synthetic data verify our theoretical results, and plug-in versions atop different baselines achieve general improvement across several benchmarks. The key contributions of our work are summarized as follows:
\begin{itemize}[itemsep=2pt,topsep=0pt,parsep=0pt,leftmargin=0.3cm]
    \item  We establish formal risk‐bound guarantees for online forecasting under latent-driven distribution shift, furnishing explicit theoretical guidance to enhance forecasting performance.
    \item We prove that both latent states and their temporal causal dynamics can be uniquely identified from only four consecutive observations.
    \item We propose a plug-and-play model architecture and achieve general improvement on several benchmark datasets of online time series forecasting.
\end{itemize}

\section{Problem Setup}
\begin{wrapfigure}{r}{4.5cm}
    \centering
    \vspace{-7mm}
    \includegraphics[width=0.3\columnwidth]{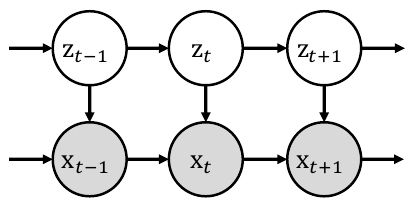}
    \caption{Illustration of the generation process for time series data, where the mapping of observed variables $\rvx_{t-1}\rightarrow \rvx_t$ is influenced by latent variables $\rvz_t$.}
    \vspace{2mm}
    \label{fig:gen}
\end{wrapfigure}
We first introduce a data generation process as shown in Figure \ref{fig:gen}. Specifically, we let $\rvx_t=\{x_{t,1},\cdots,x_{t,n}\}$ be a $n-$dimension random vector that represents the observations. We further assume that they are generated by the historical observations $\rvx_{t-1}$, hidden variables $\rvz_t=\{z_{t,1},\cdots, z_{t,n}\}$, and independent noise $\epsilon_t^{\rvo}$ via a nonlinear mixing function $\rvg$. Moreover, the latent variables $\rvz_{t,i}$ is generated by the time-delayed parents $\text{Pa}_d(z_{t,i})$, instantaneous parents $\text{Pa}_e(z_{t,i})$, and independent noise $\epsilon_{t,i}^{\rvz}$ via latent causal influence $\rvf_i$. Putting them together, the data generation process can be formulated as Equation (\ref{equ:gen}).
\begin{equation}
\label{equ:gen}
\underbrace{\rvx_t = \rvg(\rvz_t, \rvx_{t-1},\epsilon_{t,i}^{\rvo})}_{\text{Nonlinear Mixing Procedure}}, \quad \underbrace{z_{t,i} = \rvf_i(\text{Pa}_d(z_{t,i}), \text{Pa}_e(z_{t,i}), \epsilon_{t,i}^{\rvz})}_{\text{Latent Causal Influence}},\quad \epsilon_{t,i}^{\rvo}\sim p_{\epsilon_{t,i}^{\rvo}}, \quad \epsilon_{t,i}^{\rvz}\sim p_{\epsilon_{t,i}^{\rvz}}.
\end{equation}
To better understand our theoretical results, we provide the definitions of subspace-wise \citep{von2021self} and component-wise identifiability \citep{kong2022partial}. Please refer to Appendix \ref{app:notation} for the description of the notations.
\begin{definition}
[\textbf{Block-wise Identifiability of Latent Variables $\rvz_t$}] 
\label{eq:block_iden}
The block-wise identifiability of $\rvz_t \in \mathbb{R}^n$ means that for ground-truth $\rvz_t$, there exists $\hat{\rvz}_t$ and an invertible function $H:\mathbb{R}^n\rightarrow\mathbb{R}^n$, such that $\rvz_t=H(\hat{\rvz}_t)$.
\end{definition}
\begin{definition}
[\textbf{Component-wise Identifiability of Latent Variables $z_{t,i}$}] 
\label{eq:compo_iden}
The component-wise identifiability of $\rvz_{t} \in \mathbb{R}^n$ is that for each $\rvz_{t,i}, i\in\{1,\cdots,n\}$, there exists a corresponding estimated component $\hat{\rvz}_{t,j},j\in \{1,\cdots,n\}$ and an invertible function $h_i: \mathbb{R}\rightarrow \mathbb{R}$, such that $z_{t,i}=h_i(\hat{z}_{t,j})$. 
\end{definition}

\section{Theoretical Framework for Online Time Series Forecasting}
Based on the aforementioned data generation process, we present theoretical results for online time series forecasting. Specifically, we first demonstrate that incorporating latent variables $\rvz_t$ reduces forecasting risk, and that improved identifiability of these latent variables narrows the gap between estimated and ground-truth risk (Theorem \ref{the:predict}). To ensure identifiability, we first identify the joint distribution of $\rvx_t$ and $\rvz_t$ by matching the marginal distribution of four consecutive observations (Theorem \ref{the:4measurement}). Subsequently, by imposing a sparse constraint on the estimated mixing procedure, i.e. $\hat{\rvz} \rightarrow \hat{\rvx}$, we further establish component-wise identifiability of the latent variables $\rvz_t$ (Theorem \ref{the:component-wise}).

\subsection{Predictive-Risk Analysis}
We begin with the predictive risk analysis regarding different types of inputs of a time series forecaster.
\begin{theorem}
\label{the:predict}
(\textbf{Predictive-Risk Reduction via Temporal Latent Variables}) Let $\rvx_t, \rvz_t$, and $\hat{\rvz}_t$ be the observed variables, ground-truth latent variables, and the estimated latent variables, respectively. We let $\rvx_{t-\tau:t}=\{\rvx_{t-\tau},\cdots,\rvx_t\}$ be the historical $(\tau+1)$-step observed variables. Moreover, we let $\mathcal{R}_{\rvo}, \mathcal{R}_{\rvz},$ and $\mathcal{R}_{\hat{\rvz}}$ be the expected mean squared error for the models that consider $\{\rvx_{t-\tau:t}\}, \{\rvx_{t-\tau:t}, \rvz_t\},$ and $\{\rvx_{t-\tau:t}, \hat{\rvz}_t\}$, respectively. Then, in general, we have $\mathcal{R}_{\rvo}\geq\mathcal{R}_{\hat{\rvz}}\geq \mathcal{R}_{\rvz}$, and if $\rvz_t$ is identifiable we have $\mathcal{R}_{\rvo}>\mathcal{R}_{\hat{\rvz}}=\mathcal{R}_{\rvz}$.
\end{theorem}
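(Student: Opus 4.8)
The plan is to view each risk $\mathcal{R}_\bullet$ as a Bayes risk under squared loss, so that it equals the $L^2$ distance from the target to the closed subspace of predictors measurable with respect to the given input. Writing $Y:=\rvx_{t+1:T}$ for the forecast target and recalling that the MSE-optimal predictor given an input $\sigma$-algebra $\mathcal{I}$ is the conditional mean $\E[Y\mid\mathcal{I}]$, I would set $\mathcal{R}(\mathcal{I})=\E\|Y-\E[Y\mid\mathcal{I}]\|^2$ and identify the three risks with $\mathcal{I}_{\rvo}=\sigma(\rvx_{t-\tau:t})$, $\mathcal{I}_{\hat{\rvz}}=\sigma(\rvx_{t-\tau:t},\hat{\rvz}_t)$, and $\mathcal{I}_{\rvz}=\sigma(\rvx_{t-\tau:t},\rvz_t)$. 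All comparisons then reduce to comparing how much information about $Y$ each $\sigma$-algebra carries.

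The monotonicity engine is the nested-projection (tower) identity: whenever $\mathcal{I}_1\subseteq\mathcal{I}_2$, the Pythagorean decomposition $\E\|Y-\E[Y\mid\mathcal{I}_1]\|^2=\E\|Y-\E[Y\mid\mathcal{I}_2]\|^2+\E\|\E[Y\mid\mathcal{I}_2]-\E[Y\mid\mathcal{I}_1]\|^2$ holds, the cross term vanishing because $Y-\E[Y\mid\mathcal{I}_2]$ is orthogonal to every $\mathcal{I}_2$-measurable function. This yields $\mathcal{R}(\mathcal{I}_1)\ge\mathcal{R}(\mathcal{I}_2)$ with an explicit nonnegative gap. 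Since the $\hat{\rvz}$-model's input merely appends $\hat{\rvz}_t$ to the $\rvo$-model's input, $\mathcal{I}_{\rvo}\subseteq\mathcal{I}_{\hat{\rvz}}$ and the first inequality $\mathcal{R}_{\rvo}\ge\mathcal{R}_{\hat{\rvz}}$ is immediate.

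For $\mathcal{R}_{\hat{\rvz}}\ge\mathcal{R}_{\rvz}$ I would augment the $\hat{\rvz}$-model with the true latent, forming $\mathcal{I}_{\mathrm{joint}}=\sigma(\rvx_{t-\tau:t},\rvz_t,\hat{\rvz}_t)$; monotonicity gives $\mathcal{R}_{\hat{\rvz}}\ge\mathcal{R}(\mathcal{I}_{\mathrm{joint}})$. The generative model of Equation~(\ref{equ:gen}) makes $\hat{\rvz}_t$ a processed version of the observed stream and the true state, so $Y\perp\hat{\rvz}_t\mid(\rvx_{t-\tau:t},\rvz_t)$; hence $(\rvx_{t-\tau:t},\rvz_t)$ is sufficient, $\E[Y\mid\mathcal{I}_{\mathrm{joint}}]=\E[Y\mid\mathcal{I}_{\rvz}]$, and $\mathcal{R}(\mathcal{I}_{\mathrm{joint}})=\mathcal{R}_{\rvz}$, yielding the second inequality. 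For the identifiable regime, Definition~\ref{eq:block_iden} supplies an invertible $H$ with $\rvz_t=H(\hat{\rvz}_t)$, so $\sigma(\hat{\rvz}_t)=\sigma(\rvz_t)$ and therefore $\mathcal{I}_{\hat{\rvz}}=\mathcal{I}_{\rvz}$, forcing $\mathcal{R}_{\hat{\rvz}}=\mathcal{R}_{\rvz}$. The strict inequality $\mathcal{R}_{\rvo}>\mathcal{R}_{\rvz}$ then comes from the explicit gap term $\E\|\E[Y\mid\mathcal{I}_{\rvz}]-\E[Y\mid\mathcal{I}_{\rvo}]\|^2$, which is strictly positive because $\rvz_t$ genuinely drives the transition $\rvx_{t-1}\to\rvx_t$ and is not almost surely a function of $\rvx_{t-\tau:t}$, so the conditional mean of $Y$ fails to be $\mathcal{I}_{\rvo}$-measurable.

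The main obstacle I expect is the second inequality — specifically, formalizing the data-processing/sufficiency assumption $Y\perp\hat{\rvz}_t\mid(\rvx_{t-\tau:t},\rvz_t)$ and arguing it is the natural regime for any estimator constructed from the observed history, so that the estimate injects no predictive information about the future beyond what the true latent already encodes. A secondary, non-degeneracy obstacle is justifying the strictness in the last step: it requires the mild assumption that $\rvz_t$ is not redundant given the observed window, i.e.\ that it has a non-vanishing effect on $\E[Y\mid\cdot]$, which is precisely the latent-driven-shift setting the paper studies.
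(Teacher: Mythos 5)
Your proposal is correct and is essentially the paper's own argument in different clothing: the nested-projection (Pythagorean) identity you use is exactly the law of total variance that the paper applies twice in Appendix~\ref{app:the1}, your joint-$\sigma$-algebra-plus-sufficiency step corresponds to the paper's identity $\mathbb{E}_{\rvz_t\mid\rvx_{t-\tau:t},\hat{\rvz}_t}\mathbb{E}[\rvx_{t+1}\mid\rvx_{t-\tau:t},\rvz_t]=\mathbb{E}[\rvx_{t+1}\mid\rvx_{t-\tau:t},\hat{\rvz}_t]$, and your strictness and identifiability arguments match the paper's treatment of the gap terms $c$ and $e$. The one place you improve on the paper is that you state explicitly the conditional independence $\rvx_{t+1}\perp\hat{\rvz}_t\mid(\rvx_{t-\tau:t},\rvz_t)$ on which that identity silently relies.
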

\textbf{Discussion:} The proof can be found in Appendix \ref{app:the1}. This risk bound can be derived by leveraging the law of total expectation and the law of total variance to decompose the expected mean squared error. Intuitively, Theorem \ref{the:predict} highlights the critical role latent variables play in reducing predictive risk for online time series forecasting. Specifically, it reveals three distinct scenarios:

First, if the ground-truth latent variables $\rvz_t$ have no influence on the observed variables $\rvx_t$, the predictive risk remains unchanged irrespective of whether latent variables are considered, i.e., $\mathcal{R}_{\rvo}=\mathcal{R}_{\hat{\rvz}}=\mathcal{R}_{\rvz}$. However, this scenario rarely occurs in practice because $\rvz_t$ having no influence on the mapping $\rvx_{t-\tau:t}\rightarrow \rvx_{t+1}$ implies that the observed time series data is stationary, and it is typically challenging to collect all relevant observations without any influence from exogenous variables.

Second, when latent variables do influence observed variables, incorporating the ground-truth latent variables reduces the predictive risk compared to using observations alone, leading to $\mathcal{R}_{\rvo}>\mathcal{R}_{\rvz}$. Practically, however, ground-truth latent variables $\rvz_t$ are unknown, and we can only access the estimated latent variables $\hat{\rvz}_t$. If these latent variables are fully identifiable, the estimated latent variables can achieve the same risk reduction as the true latent variables, resulting in $\mathcal{R}_{\rvo}>\mathcal{R}_{\hat{\rvz}}=\mathcal{R}_{\rvz}$. 

Third, if the estimated latent variables $\hat{\rvz}_t$ partially identify the ground-truth latent variables, meaning there exists at least one dimension $j \in {1,\cdots,n}$ for which, for all $i \in {1,\cdots,n}$, no function $h_i$ satisfies $\rvz_{t,i}=h_i(\hat{\rvz}_{t,j})$. Consequently, the estimated latent variables $\hat{\rvz}_t$ capture only certain aspects of the temporal dynamics while omitting others due to incomplete identifiability, resulting in $\mathcal{R}_{\rvo}>\mathcal{R}_{\hat{\rvz}}>\mathcal{R}_{\rvz}$. In the worst case, when the estimated latent variables are completely non-identifiable (e.g., $\hat{\rvz}_t$ are purely random noise), we have $\mathcal{R}_{\rvo}=\mathcal{R}_{\hat{\rvz}}>\mathcal{R}_{\rvz}$.

Based on the aforementioned discussion, we have the following two takeaway conclusions.
\begin{itemize}[itemsep=2pt,topsep=0pt,parsep=0pt,leftmargin=0.6cm]
\item[i.] \underline{\textit{Incorporating latent variables to characterize distribution shifts helps reduce forecasting error.}}
\item[ii.] \underline{\textit{The more accurately the latent variables are identified, the lower the predictive risk.}}
\end{itemize}



\subsection{Identify Joint Distribution of Latent and Observed Variables}

Based on the aforementioned results, we further propose to identify the latent variables from observations. By leveraging four consecutive observed variables, i.e., $\rvx_{t-2}, \rvx_{t-1}, \rvx_t$, and $\rvx_{t+1}$, we can find that the block $(\rvx_t, \rvz_t)$ is block-wise identifiable. For a better explanation of these results, we first introduce the definition of the linear operator as follows:
\begin{definition} [\textbf{Linear Operator} \cite{hu2008,dunford1988linear}] \label{Defn:linear}
Consider two random variables $\rva$ and $\rvb$ with support $\mathcal{A}$ and $\mathcal{B}$, the linear operator $L_{\rvb|\rva}$ is defined as a mapping from a probability function $p_{\rva}$ in some function space $\mathcal{F}(\mathcal{A})$ onto the probability function $p_{\rvb}=L_{\rvb|\rva}\circ p_{\rva}$ in some function space $\mathcal{F}(\mathcal{B})$,
\begin{equation}
    \mathcal{F}(\mathcal{A})\rightarrow \mathcal{F}(\mathcal{B}): p_{\rvb}=L_{\rvb|\rva}\circ p_{\rva}=\int_{\mathcal{A}} p_{\rvb|\rva}(\cdot|\rva)p_{\rva}(\rva)d\rva .
\end{equation}
\vspace{-5mm}
\end{definition}

\begin{theorem}
\label{the:4measurement}
\textbf{(Block-wise Identification under 4 Adjacent Observed Variables.)} Suppose that the observed and latent variables follow the data generation process. By matching the true joint distribution of 4 adjacent observed variables, i.e., $\{\rvx_{t-2}, \rvx_{t-1}, \rvx_t, \rvx_{t+1}\}$, we further consider the following assumptions:
\begin{itemize}[leftmargin=*]
    \item A1 (\underline{Bound and Continuous Density}): The joint distribution of $\rvx, \rvz$ and their marginal and conditional densities are bounded and continuous.
    \item \label{asp:inj} A2 (\underline{Injectivity}): There exists observed variables $\rvx_t$ such that for any $\rvx_t \in \mathcal{X}_t$, there exist a $\rvx_{t-1} \in \mathcal{X}_{t-1}$ and a neighborhood \footnote{Please refer to Appendix \ref{app:neighbor} for the definition of neighborhood.} $\mathcal{N}^r$ around $(\rvx_t, \rvx_{t-1})$ such that, for any $(\bar{\rvx}_t, \bar{\rvx}_{t-1}) \in \mathcal{N}^r$, $L_{ \bar{\rvx}_t, \rvx_{t+1}|\rvx_{t-2}, \bar{\rvx}_{t-1}}$ is injective; $L_{\rvx_{t+1} \mid \rvx_t, \rvz_t}$, $L_{\rvx_t \mid \rvx_{t-2}, \rvx_{t-1}}$ is injective for any $\rvx_t \in \mathcal{X}_t$ and $\rvx_{t-1} \in \mathcal{X}_{t-1}$, respectively. 
    \item \label{asp:uni-spec} A3 (\underline{Uniqueness of Spectral Decomposition}) For any $\rvx_t \in \mathcal{X}_t$ and any $\bar{\rvz}_t \neq \tilde{\rvz}_t \in \mathcal{Z}_t$, there exists a $\rvx_{t-1} \in \mathcal{X}_{t-1}$ and corresponding neighborhood $\mathcal{N}^r$ satisfying Assumption~\ref{asp:inj} such that, for some $(\bar{\rvx}_t, \bar{\rvx}_{t-1}) \in \mathcal{N}^r$ with $\bar{\rvx}_t \neq \rvx_t$, $\bar{\rvx}_{t-1} \neq \rvx_{t-1}$\text{:}
    \begin{itemize}
        \item[i] $k(\rvx_t, \bar{\rvx}_t, \rvx_{t-1}, \bar{\rvx}_{t-1}, \bar{\rvz}_t)<C<\infty$ for any $\rvz_t\in\mathcal{Z}_t$ and some constant $C$.
        \item[ii] $k(\rvx_t, \bar{\rvx}_t, \rvx_{t-1}, \bar{\rvx}_{t-1}, \bar{\rvz}_t) \neq k(\rvx_t, \bar{\rvx}_t, \rvx_{t-1}, \bar{\rvx}_{t-1}, \tilde{\rvz}_t)$, where
        \begin{equation}
            k(\rvx_t, \bar{\rvx}_t, \rvx_{t-1}, \bar{\rvx}_{t-1}, \rvz_t)
        = \frac{p_{\rvx_t \mid \rvx_{t-1}, \rvz_t}(\rvx_t \mid \rvx_{t-1}, \rvz_t) p_{\rvx_t \mid \rvx_{t-1}, \rvz_t}(\bar{\rvx}_t \mid \bar{\rvx}_{t-1}, \rvz_t)}{p_{\rvx_t \mid \rvx_{t-1}, \rvz_t}(\bar{\rvx}_t \mid \rvx_{t-1}, \rvz_t) p_{\rvx_t \mid \rvx_{t-1}, \rvz_t}(\rvx_t \mid \bar{\rvx}_{t-1}, \rvz_t)}.
        \end{equation}
    \end{itemize}
\end{itemize}
Suppose that we have learned $(\hat{\rvg}, \hat{\rvf}, p_{\hat{\epsilon}})$ to achieve Equations (1), then the combination of Markov state $\rvz_t, \rvx_t$ is identifiable, i.e., $[\hat{\rvz}_t, \hat{\rvx}_t] = H(\rvz_t, \rvx_t)$, where $H$ is invertible and differentiable.
\end{theorem}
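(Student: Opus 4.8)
The plan is to treat the pair $(\rvx_t, \rvz_t)$ as a latent Markov state and reduce block-wise identifiability to the uniqueness of a spectral (eigen-) decomposition of observable integral operators, following the nonparametric deconvolution strategy of Hu and Schennach invoked by Definition~\ref{Defn:linear}. First I would read off the conditional-independence structure forced by the generation process in Equation (1): since $\rvx_{t+1} = \rvg(\rvz_{t+1}, \rvx_t, \epsilon_{t+1}^{\rvo})$ with $\rvz_{t+1}$ driven by $\rvz_t$, the future observation is screened off from the past by the state, i.e.\ $\rvx_{t+1} \perp (\rvx_{t-2}, \rvx_{t-1}) \mid (\rvx_t, \rvz_t)$; likewise $\rvx_t = \rvg(\rvz_t, \rvx_{t-1}, \epsilon_t^{\rvo})$ yields $\rvx_t \perp \rvx_{t-2} \mid (\rvx_{t-1}, \rvz_t)$. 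These two relations let me write the joint law of the four observations as a single integral over the latent state,
\[
p(\rvx_{t+1}, \rvx_t \mid \rvx_{t-1}, \rvx_{t-2}) = \int p(\rvx_{t+1}\mid \rvx_t, \rvz_t)\, p(\rvx_t \mid \rvx_{t-1}, \rvz_t)\, p(\rvz_t \mid \rvx_{t-1}, \rvx_{t-2})\, d\rvz_t,
\]
which exhibits the three conditionally-independent ``views'' of $\rvz_t$ — the future $\rvx_{t+1}$, the present $\rvx_t$, and the past pair $(\rvx_{t-1},\rvx_{t-2})$ — required by the operator-diagonalization machinery.

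Next I would lift this integral identity to an equation between the linear operators of Definition~\ref{Defn:linear}, so that each observable conditional density becomes a composition of operators exactly one of which carries the latent index $\rvz_t$. By evaluating at a base point $(\rvx_t,\rvx_{t-1})$ and at a perturbed point $(\bar{\rvx}_t, \bar{\rvx}_{t-1})$ inside the neighborhood $\mathcal{N}^r$ supplied by Assumption A2, and forming the ratio of the two resulting observable operators, I obtain an operator that is \emph{similar} to a multiplication (diagonal) operator acting on functions of $\rvz_t$. Its eigenvalues are precisely the cross-ratio kernel $k(\rvx_t, \bar{\rvx}_t, \rvx_{t-1}, \bar{\rvx}_{t-1}, \rvz_t)$ defined in Assumption A3, and its eigenfunctions are the latent conditionals $p(\cdot \mid \rvx_{t-1}, \rvz_t)$ indexed by $\rvz_t$. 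The injectivity clauses of Assumption A2 are what guarantee that the operators inverted along the way are genuinely invertible, so the construction is a true diagonalization and not merely a factorization.

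I would then invoke Assumption A3 to pin the decomposition down. Part (ii) forces the eigenvalues $k(\cdot,\rvz_t)$ to be distinct across $\bar{\rvz}_t \neq \tilde{\rvz}_t$, so the eigenfunctions are determined up to the standard spectral ambiguities, namely a relabeling (permutation) of the latent index and a scaling of each eigenfunction; part (i) furnishes the boundedness that makes the operator well-posed and ensures these are the only ambiguities. Normalizing each eigenfunction to integrate to one eliminates the scaling, leaving an invertible reindexing of $\rvz_t$. Since $\rvx_t$ is observed and enters the decomposition jointly with $\rvz_t$, this reindexing is exactly an invertible map $H$ with $[\hat{\rvz}_t, \hat{\rvx}_t] = H(\rvz_t, \rvx_t)$, and differentiability of $H$ follows from the continuity in Assumption A1 together with the smoothness of $\rvg$ and $\rvf$.

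The step I expect to be hardest is the middle one: arranging the four observations and the neighborhood perturbations so that the constructed observable operator is similar to a genuinely $\rvz_t$-diagonal operator whose eigenvalues coincide with the kernel $k$, while verifying that every operator inverted in the process is invertible under the injectivity hypotheses of Assumption A2. A secondary difficulty is that the diagonalization ranges over a continuum of latent values rather than a finite matrix, so the uniqueness argument must be stated for the spectrum of a (compact) integral operator with distinct eigenvalues, and one must be careful that it is the joint block $(\rvx_t,\rvz_t)$ — not $\rvz_t$ in isolation — that is recovered up to $H$.
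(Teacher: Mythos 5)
Your proposal follows essentially the same route as the paper's proof: factorize the joint law of $\{\rvx_{t-2},\rvx_{t-1},\rvx_t,\rvx_{t+1}\}$ through the state $(\rvx_t,\rvz_t)$ via the Markov/screening-off structure, lift to linear and diagonal operators, use the neighborhood perturbations of A2 to build an observable operator similar to a multiplication operator whose eigenvalues are the cross-ratio $k$, and then invoke uniqueness of the spectral decomposition of a bounded operator (the paper cites Theorem XV.4.3.5 of Dunford--Schwartz) together with A3, density normalization, and a relabeling map to obtain the invertible $H$. The only place the paper is more explicit than you are is the middle step you flag as hardest: it evaluates the operator identity at all four point pairs $(\rvx_t,\rvx_{t-1})$, $(\bar\rvx_t,\rvx_{t-1})$, $(\rvx_t,\bar\rvx_{t-1})$, $(\bar\rvx_t,\bar\rvx_{t-1})$ and composes two ratio operators $\mathbf{A}$ and $\mathbf{B}$ (rather than a single ratio of two evaluations) to produce the four-term cross-ratio $k$ as the eigenvalue, with eigenfunctions $p_{\rvx_{t+1}\mid\rvx_t,\rvz_t}$.
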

\textbf{Implication and Proof Sketch.} This theory shows that the block $(\rvx_t, \rvz_t)$ can be identified via consecutive observations. Please find the proof in Appendix \ref{app:the2}. Although both \citep{fu2025identification} and our theory employ the technique of eigenvalue-eigenfunction decomposition, our result is a general case of \cite{fu2025identification}, which demonstrates the identifiability results under the assumption that there is no causal relationship between the observed variables. Moreover, we further relax the monotonicity and normalization assumption (Please find the details of this assumption in Appendix \ref{app:mode_assumption}) in \cite{hu2012nonparametric}, which requires that the function form of $p(\rvx_{t+1}|\rvz_t,\rvx_t)\rightarrow \rvz_t$ is known. This adjustment allows our conclusion to better align with real-world scenarios. 

The proof can be summarized into the following three steps. First, under the data generation process in Figure \ref{fig:gen}, we establish the relationship between the linear operators acting on the observed and latent variables. Sequentially, by introducing the neighborhood of $(\rvx_{t-1}, \rvx_t)$, we derive an eigenvalue-eigenfunction decomposition for the observations, accounting for different transition probabilities of observed variables. Finally, by leveraging the uniqueness property of spectral decomposition (Theorem XV.4.3.5 \cite{dunford1988linear}), we demonstrate that the latent variables are block-wise identifiable when the marginal distribution of the observed variables is matched.



\textbf{Connection with Previous Results.} To better understand our results and highlight the contributions of our work in comparison to previous studies, we further discuss the different intuitions between our work and prior works. In \cite{fu2025identification}, due to the absence of causal edges between the observed variables $\rvx_t$, conditioning on $\rvz_t$, the three consecutive observations (i.e., $\rvx_{t-1},\rvx_t,\rvx_{t+1}$) are conditionally independent, which allows us the use different observations to describe the variance of latent variable. Meanwhile, because there are causal edges between the observed variables, and the latent variable $\rvz_t$ influences the mapping $\rvx_{t-1}\rightarrow\rvx_t$, we use different transitions of observations, i.e., $p(\rvx_{t}|\rvx_{t-1},\rvz_t)$ to describe the variance of $\rvz_t$, making the identifiability of latent variables possible.

\textbf{Discussion on Assumptions.} We provide a detailed explanation of the assumptions to clarify their real-world implications and enhance the understanding of our results. 

First, the assumption of the bound and continuous conditional densities is a standard assumption in the works of identifiability of latent variables \citep{hu2008,hu2008identification,fu2025identification}. It implies that the transition probability densities of observed and latent variables are bounded and continuous. This assumption is easy to meet in practice, for instance, the temperature records in the climate data are changing continuously within a reasonable range. 

Second, according to Definition \ref{Defn:linear}, the linear operator $L_{\rvb|\rva}$ denotes the mapping from $p_{\rva}$ to $p_{\rvb}$, and the injectivity of the linear operator means that there is enough variation in the density of $b$ for different distributions of $a$. For example, $L_{\rvx_{t},\rvx_{t+1}|\rvx_{t-1},\rvx_{t-2}}$ means that the historical observations have sufficient influence on the future observations, i.e., the historical values of temperature should have a strong influence on the future values. Please refer to Appendix \ref{app:injective} for more examples of the injectivity of linear operators.

Third, the third assumption implies that the changing of latent variables $\rvz_t$ produces a visibly different influence on the mapping from $\rvx_{t-1}$ to $\rvx_t$. Concretely, we let $\rho_{\rvz_t, 1}=\frac{p_{\rvx_t \mid \rvx_{t-1}, \rvz_t}(\rvx_t \mid \rvx_{t-1}, \rvz_t)}{p_{\rvx_t \mid \rvx_{t-1}, \rvz_t}(\rvx_t \mid \bar{\rvx_{t-1}}, \rvz_t)}$ and $\rho_{\rvz_t, 2}=\frac{p_{\rvx_t \mid \rvx_{t-1}, \rvz_t}(\bar{\rvx}_t \mid \rvx_{t-1}, \rvz_t)}{p_{\bar{\rvx}_t \mid \bar{\rvx}_{t-1}, \rvz_t}(\rvx_t \mid \bar{\rvx_{t-1}}, \rvz_t)}$ be the probability of transitioning rate from two different historical state (i.e., $\rvx_{t-1}$ and $\bar{\rvx}_{t-1}$) to the same current states ($\rvx_t$ or $\bar{\rvx}_t$). Assumption A3 says that for any two different values of latent values, there exist different historical states $\rvx_{t-1}, \bar{\rvx}_{t-1}$, so that at least one of these ratios changes. Intuitively, this means the latent variable must have a sufficiently large influence on $\rvx_{t-1} \rightarrow \rvx_t$. For example, if $\rvz_t$ encodes the information of seasons and $\rvx_t$ denotes the temperature, then the typical day-to-day temperature jump in winter versus summer will differ markedly. Since A3 asks for the existence of any two such states, it is easy to be satisfied in several real-world scenarios with continuous $\rvx_t$. More discussion can be found in Appendix \ref{app:a3}.



\subsection{Component-wise Identification of Latent Variables}
Based on the block-wise identifiability of $(\rvz_t,\rvx_t)$ from Theorem \ref{the:4measurement}, we further show that $\rvz_t$ is component-wise identifiable. For a better understanding of our results, we first provide the definition of the \textbf{Intimate Neighbor Set} \citep{li2024identification, zhang2024causal} as follows.
\begin{definition}[Intimate Neighbor Set \cite{li2024identification,zhang2024causal}]
\label{def: Intimate}
    Consider a Markov network $\mathcal{M}_U$ over variables set $U$, and the intimate neighbor set of variable $u_{t,i}$ is
\begin{equation}\nonumber
\begin{aligned}
\Psi_{\mathcal{M}_{\rvu_t}}\!(u_{t,i}) 
&\triangleq 
\{\,u_{t,j} \mid 
u_{t,j}\text{ is adjacent to }u_{t,i},\\
&\quad\text{and it is also adjacent to all other neighbors of }u_{t,i}, \;
u_{t,j}\!\in\!\rvu_t\!\setminus\!\{u_{t,i}\}\,\}
\end{aligned}
\end{equation}
\end{definition}
Based on this definition, we consider $\rvu_t=\{\rvz_{t-1}, \rvx_{t-1}, \rvz_t, \rvx_t\}$, then the identifiability can be ensured with the help of historical information $(\rvx_{t-1}, \rvz_{t-1})$ and the sparse mixing procedure.
\begin{theorem}
\label{the:component-wise}
\textbf{(Component-wise Identification of $\rvz_t$ under {sparse mixing procedure}.)} For a series of observations $\mathbf{x}_t\in\mathbb{R}^n$ and estimated latent variables $\mathbf{\hat{z}}_{t}\in\mathbb{R}^n$ with the corresponding process $\hat{\rvf}_i, \hat{p}(\epsilon), \hat{g}$, suppose the marginal distribution of observed variables is matched. Let $\mathcal{M}_{\rvu_t}$ be the Markov network over $\mathbf{u}_t \triangleq \{\rvz_{t-1},\rvx_{t-1},\mathbf{z}_{t},\rvx_t\}$. Besides the common assumptions like smooth, positive density, and sufficient variability assumptions in \citep{li2025on}, we further assume:
\begin{itemize}[leftmargin=*]
    \item A4 \underline{(Sparse Mixing Procedure):} For any $\rvz_{t,i}\in \rvz_t$, the intimate neighbor set of $\rvz_{t,i}$ is an empty set.
\end{itemize}
When the observational equivalence is achieved with the minimal number of edges of the estimated mixing procedure, there exists a permutation $\pi$ of the estimated latent variables, such that $\rvz_{t,i}$ and $\hat{\rvz}_{t,\pi(i)}$ is one-to-one corresponding, i.e., $\rvz_{t,i}$ is component-wise identifiable.
\end{theorem}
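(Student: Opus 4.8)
The plan is to upgrade the block-wise guarantee of Theorem~\ref{the:4measurement} to a component-wise one by combining the Markov-network (conditional-independence) structure over $\rvu_t$ with the minimal-edge sparsity of the estimated mixing procedure. By Theorem~\ref{the:4measurement} the matched model satisfies $[\hat{\rvz}_t,\hat{\rvx}_t]=H(\rvz_t,\rvx_t)$ for an invertible, differentiable $H$; since the marginal distribution of the observations is matched, we may identify the $\rvx$-block with $\rvx_t$ itself, so the remaining indeterminacy is an invertible reparametrization $\hat{\rvz}_t=\phi(\rvz_t,\rvx_t)$ of the latents at each observation context. The goal is to show that the Jacobian $\partial\hat{\rvz}_t/\partial\rvz_t$ is, up to a fixed permutation $\pi$, diagonal, which is exactly the statement that $\rvz_{t,i}$ and $\hat{\rvz}_{t,\pi(i)}$ are in one-to-one correspondence through invertible element-wise maps.

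First I would translate the conditional-independence relations in $\mathcal{M}_{\rvu_t}$ into analytic constraints. Under the smoothness and positivity assumptions, two variables are non-adjacent in $\mathcal{M}_{\rvu_t}$ if and only if the corresponding off-diagonal entry of the Hessian of the log joint density $\log p(\rvu_t)$ vanishes identically. Invoking the sufficient-variability assumption of \citep{li2025on}, I would assemble enough linearly independent such equations---evaluated across varying values of the conditioning history $(\rvz_{t-1},\rvx_{t-1})$---to pin down the support pattern of the cross derivatives. Applying the change of variables $H$ to these Hessian identities relates the zero pattern forced by the true network to the zero pattern admissible for the estimated network, thereby constraining which entries of $\partial\hat{\rvz}_t/\partial\rvz_t$ can be nonzero.

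The crux is then a minimality-versus-structure argument driven by Assumption~A4. I would argue by contradiction: suppose $\phi$ is not component-wise, i.e., some estimated coordinate $\hat{\rvz}_{t,j}$ depends on two distinct true coordinates $\rvz_{t,i}$ and $\rvz_{t,i'}$. Because the estimated mixing $\hat{\rvz}_t\to\hat{\rvx}_t$ must realize the matched observation distribution, such an entanglement forces the estimated mixing graph to carry the union of the edges incident to $\rvz_{t,i}$ and $\rvz_{t,i'}$, so the estimated procedure cannot have strictly fewer edges than the true one---unless the surplus edges can be removed, which would require $\rvz_{t,i}$ (or $\rvz_{t,i'}$) to lie in the intimate neighbor set of the other in $\mathcal{M}_{\rvu_t}$ (Definition~\ref{def: Intimate}). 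Assumption~A4 declares every such intimate neighbor set empty, so no edge cancellation is possible and the minimal-edge estimated model must keep the two latent directions separated. Running this for every pair yields that $\partial\hat{\rvz}_t/\partial\rvz_t$ has exactly one nonzero entry per row and column, i.e., a permutation composed with element-wise invertible maps, establishing component-wise identifiability.

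The main obstacle I anticipate is the middle linkage: rigorously showing that an off-diagonal Jacobian entry of $\phi$ necessarily manifests as an extra, non-removable edge in the estimated Markov network, and that removability is governed precisely by the intimate-neighbor condition rather than by some coincidental cancellation in the instantaneous latent dynamics $\text{Pa}_e(z_{t,i})$. Controlling these instantaneous dependencies---so that the edges attributable to the mixing $\hat{\rvz}_t\to\hat{\rvx}_t$ are cleanly separated from the edges attributable to the latent transitions---is where the sufficient-variability assumption and careful bookkeeping of the Hessian support will have to do the heavy lifting.
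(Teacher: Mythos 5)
Your proposal follows essentially the same route as the paper: it lifts the block-wise result of Theorem~\ref{the:4measurement} to the extended block $\rvu_t=\{\rvz_{t-1},\rvx_{t-1},\rvz_t,\rvx_t\}$, converts non-adjacency in $\mathcal{M}_{\rvu_t}$ into vanishing cross-derivatives of the log conditional density, uses the sufficient-variability assumption to force the support of the Jacobian of the relabeling map, and then invokes edge-minimality together with the empty intimate-neighbor sets of A4 to rule out entanglement of latent coordinates. The ``crux'' you flag is exactly where the paper deploys the IDOL-style linear system, whose only solution is the vanishing of products such as $\frac{\partial c_{t,i}}{\partial \hat{c}_{t,k}}\cdot\frac{\partial c_{t,j}}{\partial \hat{c}_{t,l}}=0$, followed by the Markov-network isomorphism and the case analysis on adjacent versus non-adjacent pairs; this supplies the rigorous version of your edge-cancellation argument, so your plan is sound.
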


\textbf{Proof Sketch and Connection with Existing Results. }Proof can be found in the Appendix \ref{app:prop1}. For simplicity, we omit the subscript in conditional distributions. We exploit the sparsity of the mixing procedure from $\rvz_t$ to $\rvx_t$. Concretely, if a latent component $\rvz_{t,i}$ does not contribute to an observed variable $\rvx_{t,j}$, where $i,j\in\{\1,\cdots,n\}$, there is no causal edge between them. In the Markov network over $\rvu_t$, implying the conditional independence of  $z_{t,i} \perp \!\!\! \perp x_{t,j}|\rvu_t \backslash \{z_{t,i},x_{t,j}\}, \forall i,j\in\{1,\cdots,n\}$. This conditional independence implies that the corresponding mixed second‐order partial derivative of $p(\rvz_t,\rvx_t|\rvz_{t-1},\rvx_{t-1})$ is zero, i.e. $\frac{\partial^2 p(\rvz_t,\rvx_t|\rvz_{t-1},\rvx_{t-1})}{\partial z_{t,i}\partial x_{t,j}}=0$, which can further yields a linear system via sufficient changes assumptions in \cite{li2025on} and further results in identifiability of $\rvz_t$. More discussion can be found in Appendix \ref{app:sparse mixing}.


\begin{wrapfigure}{r}{8.5cm}
    \centering
    \includegraphics[width=0.6\columnwidth]{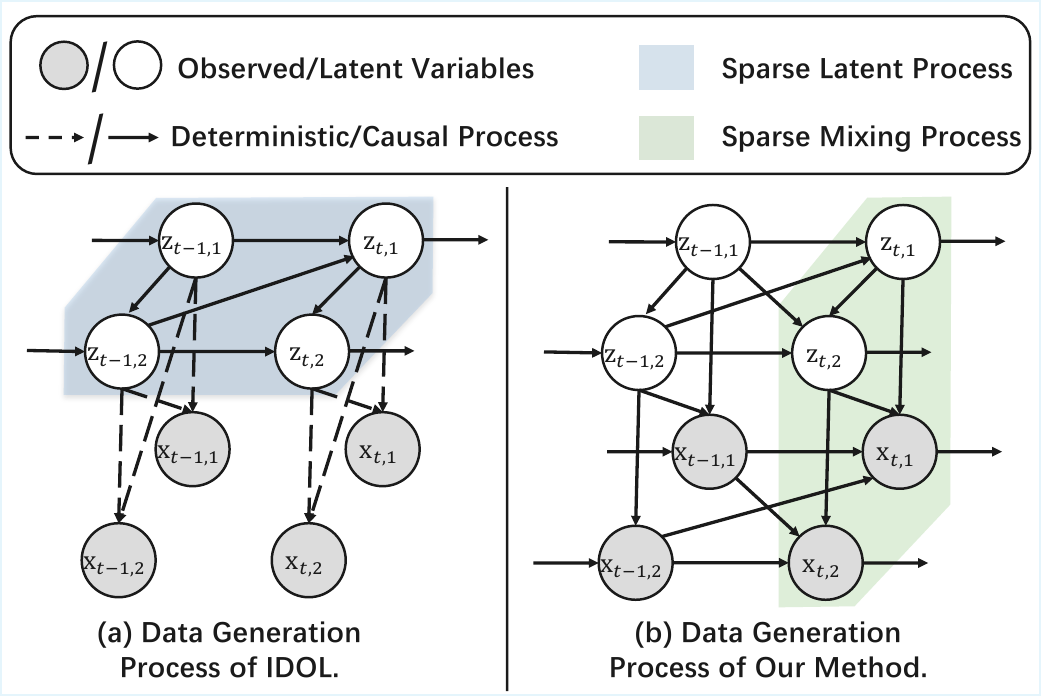}
    \caption{Illustration of the difference between sparse causal influence and sparse mixing procedure.}
    \label{fig:difference}
\end{wrapfigure}
While our argument leverages the same technique used in \cite{li2025on}, deriving zero‐derivative conditions from sparsity in the Markov network, our contribution is orthogonal in three main respects: \textbf{1) Noisy Mixing Procedure.} \cite{li2025on} assume an invertible, noise‐free mixing from latent to observed variables. In contrast, we allow additional noise in the mixing procedure $\rvz_t\rightarrow \rvx_t$, thereby accommodating measurement error in $\rvx_t$. Since the real-world datasets always contain measurement uncertainty, our model explicitly accounts for observation noise, making our theoretical results suitable for real-world scenarios. \textbf{2) Temporal relations among Observations.} Previous methods on temporal causal representation learning \cite{li2025on,yao2021learning,yao2022temporally} usually assume that there are no causal links between $\rvx_{t-1}$  and $\rvx_t$ due to the difficulties of reconstructing $\rvz_t$ directly. However, our method allows the temporal relationships among observed variables, which is more suitable for real-world scenarios. 

\textbf{3) Different Sparsity Assumptions.} Although IDOL \cite{li2024identification} and LSTD \cite{cai2025disentangling} also leverage the sparsity assumption, we would like to highlight that these sparsity assumptions are different. Specifically, IDOL imposes sparsity constraints on the latent process as shown in Figure \ref{fig:difference} (a). We instead assume sparsity in the mixing procedure from $\rvz_t$ to $\rvx_t$ as shown in Figure \ref{fig:difference} (b). According to the data generation process shown in Equation (\ref{equ:gen}), there is no instantaneous relations within $\rvx_t$, the intimate set of any $z_{t,i}$ are more easy to be empty, implying that the sparsity condition in our work is simpler to be met in practice.

\section{Model-Agnostic Blueprint}
\begin{wrapfigure}{r}{6cm}
    \centering
    \includegraphics[width=0.45\columnwidth]{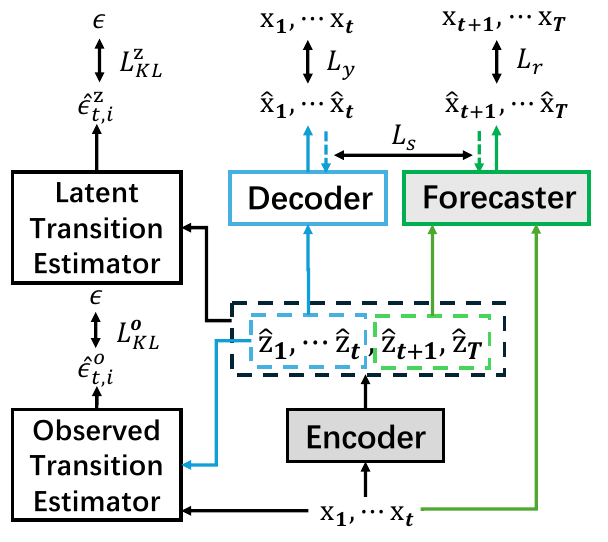}
    \caption{Illustration of the proposed plug-and-play architecture, where the gray blocks (encoder and forecaster) can be replaced with different backbones. The dashed lines denote the backpropagation.}
    \vspace{-10mm}
    \label{fig:model}
\end{wrapfigure}
The aforementioned results tell us the importance of introducing the latent variables $\rvz_t$ into online time series forecasting and how to identify them theoretically. To show how to identify the latent variables empirically, we further devise a general neural architecture as shown in Figure \ref{fig:model}, where the gray blocks can be replaced with different backbone networks by decomposing the forecasting models into encoders and forecasters. We let $\rvx_{1:t}$ and $\rvx_{t+1:T}$ be historical and future observation, respectively. Since the proposed architecture is built upon the variational autoencoder \citep{kingma2013auto,zhang2018advances,blei2017variational}, we first derive the evidence lower bound (ELBO) in Equation (\ref{equ:elbo}).
\begin{equation}
\label{equ:elbo}
\begin{split}
    ELBO =& \underbrace{\mathbb{E}_{q(\rvz_{1:T}|\rvx_{1:t})}\ln p(\rvx_{1:T}|\rvz_{1:T})}_{L_r \text{ and } L_y} \\&- \underbrace{\beta D_{KL}(q(\rvz_{1:T} |{\rvx_{1:t}})||p(\rvz_{1:T}))}_{L_{KL}^{\rvz}},
\end{split}
\end{equation}
where $\beta$ denotes the hyper-parameters and $D_{KL}$ is the Kullback–Leibler divergence. $q(\rvz_{1:T}|\rvx_{1:t})$ is used to approximate the prior distribution and implemented by encoders. $p(\rvx_{1:T}|\rvz_{1:T})$ is used to reconstruct the historical observation $\rvx_{1:t}$ and predict the future observation $\rvx_{t+1:T}$, which are implemented by decoders and forecasters. The encoder $\phi(\cdot)$ and decoder $\psi(\cdot)$ can be formalized as:.
\begin{equation}
\hat{\rvz}_{1:T} = \phi(\rvx_{1:t}),\quad \hat{\rvx}_{1:t} = \psi(\hat{\rvz}_{1:t}).
\end{equation}
And the details of the forecaster can be found in Subsection \ref{sec:forecaster}.

\subsection{Latent and Observed Transition Estimators}
To estimate the prior distribution of latent variables $\rvz_t$, we further model the transitions $\rvz_{t-1}\rightarrow \rvz_t$ and $\rvz_t,\rvx_{t-1}\rightarrow\rvx_{t}$ by estimating the independent noise $\hat{\epsilon}_t^{\rvz}$ and $\hat{\epsilon}_t^{\rvo}$, respectively. 
For the latent transition, we first let $r_i^{\rvz}$ be a set of trainable inverse transition function that take $z_{t,i}$ and $\rvz_{t-1}$ as input and estimate the noise term $\hat{\epsilon}_{t,i}^{\rvz}$, i.e., $\hat{\epsilon}_{t,i}^{\rvz}=r_i^{\rvz}(z_{t,i}, \rvz_{t-1})$. And each $r_i$ is implemented by Multi-layer Perceptron networks (MLPs). As a result, we can develop a transformation $\kappa_{\rvz}:(\hat{\rvz}_{t-1}, \hat{\rvz}_t)\rightarrow(\hat{\rvz}_{t-1},\hat{\epsilon}_t^{\rvz})$, whose Jacobian is ${\mathbf{J}_{\kappa_{\rvz}}=
    \begin{pmatrix}
        \mathbb{I}&0\\
        \mathbf{J}_d^{\rvz} & \mathbf{J}_e^{\rvz}
    \end{pmatrix}}$, where $\mathbf{J}_d^{\rvz} \!\!=\!\!\text{diag}\!\left(\frac{\partial r_i^{\rvz}}{\partial \hat{z}_{t-1,i}}\!\right)$ and $\mathbf{J}_e^{\rvz}\!\!=\!\!\text{diag}\!\left(\frac{\partial r_i^{\rvz}}{\partial \hat{z}_{t,i}}\!\right)$. Sequentially, we have Equation (\ref{equ:change1}) via the change of variables formula.
\begin{equation}
\label{equ:change1}
    \log p(\hat{\rvz}_t, \hat{\rvz}_{t-1})=\log p(\hat{\rvz}_{t-1},\hat{\epsilon}_t^{\rvz}) + \log |\frac{\partial r_i^{\rvz}}{\partial z_{t,i}}|.
\end{equation}

According to the generation process, the noise $\hat{\epsilon}_{t,i}^{\rvz}$ should be independent of $\rvz_{t-1}$, so we can enforce the independence of the estimated noise term $\hat{\epsilon}_{t, i}^{\rvz}$. And Equation (\ref{equ:change1}) can be further rewritten as
\begin{equation}
    \log p(\hat{\rvz}_{1:T})=\log p(\hat{z}_1) + \sum_{\tau=2}^T\left(\sum_{i=1}^n \log p(\hat{\epsilon}_{\tau,i}^{\rvz}) + \sum_{i=1}^n \log |\frac{\partial r_i^{\rvz}}{\partial z_{t,i}}|  \right),
\end{equation}
where $p(\hat{\epsilon}_{\tau,i})$ is assumed a Gaussian distribution. Details of prior derivation are in Appendix \ref{app:prior}.

According to the ELBO in Equation (\ref{equ:elbo}), we can model the mapping from $\rvz_{1:T}$ to $\rvx_{1:T}$ via any neural architecture directly. But the temporal relations might be omitted since they are not explicitly modeled, resulting a suboptimal performance. Therefore, we employ a similar solution to model the transition of observed variables. Specifically, we use another MLP to implement $r_i^{\rvo}$ to estimate noise term $\hat{\epsilon}_{t,i}^{\rvo}$, i.e., $\hat{\epsilon}_{t,i}^{\rvo}=r_i^{\rvo}(\hat{\rvz}_t, \rvx_{t-1},\rvx_{t,i})$. Therefore, we can devise another transformation $\kappa_{\rvo}:(\hat{\rvz}_t, \rvx_{t-1}, \rvx_{t})\rightarrow(\hat{\rvz}_t, \rvx_{t-1}, \hat{\epsilon}_{t}^{\rvo})$ and the corresponding Jacobian ${\mathbf{J}_{\kappa}=
    \begin{pmatrix}
        \mathbb{I}&0&0\\
        0&\mathbb{I} &0 \\
        *&*&\mathbf{J}^{\rvo}
    \end{pmatrix}}$,
where $\mathbf{J}^{\rvo}=\text{diag}\left(\frac{\partial r_i^{\rvo}}{\partial x_{t,i}}\right)$. And we further have Equation (\ref{equ:change2}):
\begin{equation}
\label{equ:change2}
\begin{split}
    \log p(\rvx_t,\rvx_{t-1},\hat{\rvz}_t) \!=\! \log p(\rvx_{t-1}, \hat{\rvz}_t, \hat{\epsilon}_{t}^{\rvo}) \!+\! \log|\frac{\partial r_i^{\rvo}}{\partial x_{t,i}}|   \Rightarrow  \log p(\rvx_t|\rvx_{t-1},\hat{\rvz}_t) \!=\! \log p(\hat{\epsilon}_{t}^{\rvo}) \!+\! \log|\frac{\partial r_i^{\rvo}}{\partial x_{t,i}}|
\end{split}
\end{equation}

As a results, we can model the relationship among $\hat{\rvz}_t, \rvx_{t-1},$ and $\rvx_t$ by enforcing the independence of the estimated noise term $\hat{\epsilon}_{t,i}^{\rvo}$. In practice, we assume $p(\hat{\epsilon}_{\tau,i}^{\rvo})$ also follows a Gaussian distribution, so we employ a prior Gaussian distribution to constrain $p(\hat{\epsilon}_{\tau,i}^{\rvo})$, which is denoted by $L_{KL}^{\rvo}$.


\subsection{Observation Residual Forecaster and Sparsity Constraint on Mixing Procedure}
\label{sec:forecaster}
According to Theorem \ref{the:predict}, we should leverage the historical observations and the estimated latent variables for forecasting. Hence, different from previous methods \citep{cai2023causal,fu2025identification} that only use latent variables, we devise a ``residual'' forecaster network as shown in Equation (\ref{equ:residual}).
\begin{equation}
\label{equ:residual}
    \hat{\rvx}_{t+1:T} = \varphi(\hat{\rvz}_{t+1:T}, \eta(\rvx_{1:t})),
\end{equation}
where $\eta$ is an MLP for dimensionality reduction. Guided by the results of Theorem \ref{the:component-wise}, the mixing procedure is assumed to be sparse. However, without further constraints, the estimated mixing structure induced by the MLP-based architecture $r_i^{\rvo}$ may be dense since we only restrict the independence of the estimated noise. Moreover, the spurious causal edges may lead to the incorrect estimation of $p(\rvx_{1:T})$, which further results in the suboptimal forecasting performance. To solve this problem, we propose to enforce the sparsity of the estimated mixing procedure by applying L1 penalty on the partial derivative of $\hat{\rvx}_{t}$ with respect to $\hat{\rvz}_t$, which are shown as follows:
\begin{equation}
\small
    L_s = \sum_{t=1}^T\sum_{i,j\in\{1,\cdots,n\}}\left|\frac{\partial \hat{x}_{t,i}}{\partial \hat{z}_{t,j}}\right|_1.
\end{equation}
Finally, the total loss of the proposed method can be summarized as follows:
\begin{equation}
    L_{\text{total}} = L_y + \alpha L_r -\beta (L_{KL}^{\rvz} - L_{KL}^{\rvo}) + \gamma L_s,
\end{equation}
where $\alpha,\beta$, and $\gamma$ are hyper-parameters.

\section{Experiment}
\subsection{Synthetic Experiment}

\textbf{Data Generation} We generate the simulated time series data with the fixed latent and observed causal process. We provide different synthetic datasets with different time lags, different dimensions of latent variables, latent structure, and mixing structure, respectively. Due to the page limitation, we put other synthetic experiment results in Appendix \ref{app:more_syn_exp} and report the results of Dataset A and B in the main text. Dataset A strictly follows the data generation process in Figure \ref{equ:gen}. For dataset B, we do not consider the causal influence of $\rvx_{t-1}$ when generating $\rvx_t$, i.e., $\rvx_t = \rvg(\rvz_t, \epsilon_t^{\rvo})$. Please refer to Appendix \ref{app:data_gen} for the details of the synthetic data and implementation details of our method.
\begin{table}[t]
\tiny
  \centering
  \caption{Experiment results on simulation data for identifiability and forecasting evaluation.}
  \begin{subtable}[t]{0.4\linewidth}
    \centering
    \label{tab:methods}
    \begin{tabular}{c|ccc}
      \toprule
       Method & TOT & IDOL & TDRL \\
      \midrule
       A & \textbf{0.9258(0.0034)} & 0.3788(0.0245) & 0.3572(0.0523) \\
       B & \textbf{0.9324(0.0078)} & 0.8593(0.0092) & 0.8073(0.0786) \\
      \bottomrule
    \end{tabular}
    \caption{MCC results for identifiability evaluation.}
  \end{subtable}
  \hfill
  \begin{subtable}[t]{0.55\linewidth}
    \centering
    \label{tab:latents}
    \begin{tabular}{c|ccc}
      \toprule
       Inputs & Ours ($\rvx_t$ and $\hat{\rvz}_t$) & Upper Bound ($\rvx_t$ and ${\rvz}_t$) & Baseline ($\rvx_t$) \\
      \midrule
       A & 0.5027(0.0045) & \textbf{0.4978(0.0002)} & 0.7005(0.0088) \\
       B & 0.4395(0.0097) & \textbf{0.4195(0.0017)} & 0.6532(0.0062) \\
      \bottomrule
    \end{tabular}
    \caption{MSE results for forecasting evaluation.}
  \end{subtable}
  \label{tab:synthetic_results}
\end{table}

\begin{table}[t]
\centering
\caption{ Mean Square Error (MSE) results on the different datasets and different backbones.}
\label{tab:mse}
\resizebox{\textwidth}{!}{%
\begin{tabular}{@{}c|c|cc|cc|cc|cc|cc@{}}
\toprule
Models                    & Len & LSTD  & \begin{tabular}[c]{@{}c@{}}LSTD+\\ TOT\end{tabular} & proceed-T & \begin{tabular}[c]{@{}c@{}}proceed-T+\\ TOT\end{tabular} & OneNet & \begin{tabular}[c]{@{}c@{}}OneNet+\\ TOT\end{tabular} & OneNet-T & \begin{tabular}[c]{@{}c@{}}OneNet-T+\\ TOT\end{tabular} & Online-T & \begin{tabular}[c]{@{}c@{}}Online-T+\\ TOT\end{tabular} \\ \midrule
\multirow{3}{*}{ETTh2}    & 1   & 0.377 & \textbf{0.374}                                      & 1.537     & \textbf{1.001}                                           & 0.380   & \textbf{0.361}                                        & 0.411    & \textbf{0.364}                                          & 0.502    & \textbf{0.385}                                          \\
                          & 24  & 0.543 & \textbf{0.532}                                      & 2.908     & \textbf{2.360}                                            & 0.532  & \textbf{0.525}                                        & 0.772    & \textbf{0.691}                                          & 0.830     & \textbf{0.733}                                          \\
                          & 48  & 0.616 & \textbf{0.564}                                      & 4.056     & \textbf{3.956}                                           & 0.609  & \textbf{0.562}                                        & 0.806    & \textbf{0.773}                                          & 1.183    & \textbf{0.874}                                          \\ \midrule
\multirow{3}{*}{ETTm1}    & 1   & \textbf{0.081} & \textbf{0.081}                             & 0.106     & \textbf{0.102}                                           & \textbf{0.082}  & \textbf{0.082}                               & 0.082    & \textbf{0.077}                                          & 0.085    & \textbf{0.077}                                          \\
                          & 24  & \textbf{0.102} & 0.107                                      & 0.531     & \textbf{0.530}                                           & 0.098  & \textbf{0.096}                                        & 0.212    & \textbf{0.154}                                          & 0.258    & \textbf{0.221}                                          \\
                          & 48  & \textbf{0.115} & 0.117                                      & 0.704     & \textbf{0.697}                                           & 0.108  & \textbf{0.098}                                        & 0.223    & \textbf{0.177}                                          & 0.283    & \textbf{0.246}                                          \\ \midrule
\multirow{3}{*}{WTH}      & 1   & 0.153 & \textbf{0.153}                                      & 0.346     & \textbf{0.313}                                           & 0.156  & \textbf{0.151}                                        & 0.171    & \textbf{0.150}                                          & 0.206    & \textbf{0.145}                                          \\
                          & 24  & 0.136 & \textbf{0.116}                                      & 0.707     & \textbf{0.703}                                           & 0.175  & \textbf{0.149}                                        & 0.293    & \textbf{0.263}                                          & 0.308    & \textbf{0.265}                                          \\
                          & 48  & 0.157 & \textbf{0.152}                                      & 0.959     & \textbf{0.956}                                           & 0.200  & \textbf{0.158}                                        & 0.310    & \textbf{0.263}                                          & 0.302    & \textbf{0.276}                                          \\ \midrule
\multirow{3}{*}{ECL}      & 1   & 2.112 & \textbf{2.038}                                      & 3.270     & \textbf{3.131}                                           & 2.351  & \textbf{2.301}                                        & 2.470    & \textbf{2.211}                                          & 3.309    & \textbf{2.172}                                          \\
                          & 24  & 1.422 & \textbf{1.390}                                      & 5.907     & \textbf{5.852}                                           & 2.074  & \textbf{2.000}                                        & 4.713    & \textbf{4.671}                                          & 11.339   & \textbf{4.381}                                          \\
                          & 48  & \textbf{1.411} & 1.413                                      & \textbf{7.192}     & 7.403                                           & 2.201  & \textbf{2.143}                                        & 4.567    & \textbf{4.445}                                          & 11.534   & \textbf{4.574}                                          \\ \midrule
\multirow{3}{*}{Traffic}  & 1   & 0.231 & \textbf{0.229}                                      & 0.333     & \textbf{0.312}                                           & 0.241  & \textbf{0.222}                                        & 0.236    & \textbf{0.211}                                          & 0.334    & \textbf{0.212}                                          \\
                          & 24  & 0.398 & \textbf{0.397}                                      & 0.413     & \textbf{0.410}                                           & 0.438  & \textbf{0.354}                                        & 0.425    & \textbf{0.401}                                          & 0.481    & \textbf{0.386}                                          \\
                          & 48  & 0.426 & \textbf{0.421}                                      & \textbf{0.454}     & \textbf{0.454}                                  & 0.473  & \textbf{0.377}                                        & 0.451    & \textbf{0.407}                                          & 0.503    & \textbf{0.413}                                          \\ \midrule
\multirow{3}{*}{Exchange} & 1   & \textbf{0.013} & \textbf{0.013}                             & 0.012     & \textbf{0.009}                                           & 0.017  & \textbf{0.015}                                        & 0.031    & \textbf{0.017}                                          & 0.113    & \textbf{0.010}                                          \\
                          & 24  & 0.039 & \textbf{0.037}                                      & 0.129     & \textbf{0.102}                                           & 0.047  & \textbf{0.030}                                        & 0.060    & \textbf{0.042}                                          & 0.116    & \textbf{0.035}                                          \\
                          & 48  & 0.043 & \textbf{0.042}                                      & 0.267     & \textbf{0.195}                                           & 0.062  & \textbf{0.039}                                        & 0.065    & \textbf{0.051}                                          & 0.168    & \textbf{0.040}                                          \\ \bottomrule
\end{tabular}%
}
\end{table}
\textbf{Baselines.} To evaluate our theoretical claims, we consider two different tasks. Specifically, to evaluate Theorem \ref{the:4measurement} and \ref{the:component-wise}, we consider the identifiability performance of latent variables with Mean Correlation Coefficient (MCC) metric and choose IDOL \cite{li2025on} and TDRL \cite{yao2022temporally} as baselines. To evaluate Theorem \ref{the:predict}, we consider the performance of forecasting with Mean Square Error (MSE) metric. We choose MLPs as a forecasting model. Moreover, we let the models with only $\rvx_t$, with $\rvx_t, \hat{\rvz}_t$, and with $\rvx_t, {\rvz}_t$ be baselines, our method and upper bound, respectively. We repeat each method over three different random seeds and report the mean and standard deviation.

\textbf{Results and Discussion.} Experiment results of the simulation datasets are shown in Table \ref{tab:synthetic_results}. The identification results in Table \ref{tab:synthetic_results} (a) show that 1) the proposed TOT can well identify the latent variables, while the other methods can hardly achieve it since they do not model the temporal causal inference of observations. 2) Even when the temporal causal inference of observations is omitted, TOT still achieves a better performance, since IDOL and TDRL are not suitable to the noisy mixing procedure. The forecasting results in Table \ref{tab:synthetic_results} (b) also align with the results from Theorem $\ref{the:predict}$. Specifically, the MSE results of our method and upper bound method outperform than that of baseline, i.e., only considering the historical observations, demonstrating the importance of latent variables. We also find that our forecasting results are close to but weaker than that of the upper bound method, indirectly reflecting that the proposed method can well identify the latent variables.

\subsection{Real-world Experiment}
\subsection{Experiment Setup}
\textbf{Datasets} We follow the setting of \cite{wen2023onenet} and consider the following datasets. \textbf{ETT}  is an electricity transformer temperature dataset, which contains two separate datasets $\{\text{ETTh2, ETTm1}\}$. \textbf{Exchange} is the daily exchange rate dataset from eight foreign countries. \textbf{Weather }is recorded at the Weather Station at the Max Planck Institute for Biogeochemistry in Germany. \textbf{ECL} is an electricity-consuming load dataset with the electricity consumption. \textbf{Traffic} is a dataset of traffic speeds collected from the California Transportation Agencies Performance Measurement System.

\textbf{Baseline}: We consider the following methods as our backbone networks: OneNet \cite{wen2024onenet} (including two model variants), Procced-T (TCN is abbreviated as T) \cite{zhao2024proactive}, Online-T \cite{zinkevich2003online}, and LSTD \cite{cai2025disentangling}. For each method, we only modify the model architecture and keep the training strategy fixed. We repeat each method over three random seeds. Since some methods report the best results on the original paper, we show the best results on the main text. We also provide the experiment results with mean and variance over three random seeds in Appendix \ref{app:mean_and_std}. Please refer to Appendix \ref{app:imple_details} and \ref{app:more_exp_result} for implementation details and more experiment results based on other backbone networks, respectively.

\begin{table}[]
\centering
\caption{Mean Absolute Error (MAE) results on the different datasets and different backbones.}
\label{tab:mae}
\resizebox{\textwidth}{!}{%
\begin{tabular}{@{}c|c|cc|cc|cc|cc|cc@{}}
\toprule
Models                    & Len & LSTD  & \begin{tabular}[c]{@{}c@{}}LSTD+\\ TOT\end{tabular} & proceed-T & \begin{tabular}[c]{@{}c@{}}proceed-T+\\ TOT\end{tabular} & OneNet & \begin{tabular}[c]{@{}c@{}}OneNet+\\ TOT\end{tabular} & OneNet-T & \begin{tabular}[c]{@{}c@{}}OneNet-T+\\ TOT\end{tabular} & Online-T & \begin{tabular}[c]{@{}c@{}}Online-T+\\ TOT\end{tabular} \\ \midrule
\multirow{3}{*}{ETTh2}    & 1   & 0.347 & \textbf{0.346}                                      & 0.447     & \textbf{0.401}                                           & 0.348  & \textbf{0.346}                                        & 0.374    & \textbf{0.350}                                          & 0.436    & \textbf{0.348}                                          \\
                          & 24  & 0.411 & \textbf{0.390}                                      & 0.659     & \textbf{0.615}                                           & 0.407  & \textbf{0.403}                                        & 0.511    & \textbf{0.508}                                          & 0.547    & \textbf{0.490}                                          \\
                          & 48  & 0.423 & \textbf{0.420}                                      & 0.767     & \textbf{0.728}                                           & 0.436  & \textbf{0.435}                                        & 0.543    & \textbf{0.521}                                          & 0.589    & \textbf{0.530}                                          \\ \midrule
\multirow{3}{*}{ETTm1}    & 1   & \textbf{0.187} & \textbf{0.187}                             & 0.190     & \textbf{0.185}                                           & 0.187  & \textbf{0.184}                                        & 0.191    & \textbf{0.183}                                          & 0.214    & \textbf{0.180}                                          \\
                          & 24  & \textbf{0.217} & 0.237                                      & 0.447     & \textbf{0.442}                                           & 0.225  & \textbf{0.224}                                        & 0.319    & \textbf{0.288}                                          & 0.381    & \textbf{0.346}                                          \\
                          & 48  & \textbf{0.249} & \textbf{0.249}                             & 0.521     & \textbf{0.505}                                           & 0.238  & \textbf{0.229}                                        & 0.371    & \textbf{0.312}                                          & 0.403    & \textbf{0.369}                                          \\ \midrule
\multirow{3}{*}{WTH}      & 1   & 0.200 & \textbf{0.200}                                      & 0.143     & \textbf{0.138}                                           & 0.201  & \textbf{0.192}                                        & 0.221    & \textbf{0.198}                                          & 0.276    & \textbf{0.189}                                          \\
                          & 24  & 0.223 & \textbf{0.207}                                      & 0.382     & \textbf{0.376}                                           & \textbf{0.225}  & \textbf{0.229}                               & 0.345    & \textbf{0.333}                                          & 0.367    & \textbf{0.326}                                          \\
                          & 48  & 0.242 & \textbf{0.229}                                      & 0.493     & \textbf{0.491}                                           & 0.279  & \textbf{0.239}                                        & 0.356    & \textbf{0.340}                                          & 0.362    & \textbf{0.336}                                          \\ \midrule
\multirow{3}{*}{ECL}      & 1   & 0.226 & \textbf{0.221}                                      & 0.286     & \textbf{0.282}                                           & 0.254  & \textbf{0.253}                                        & 0.411    & \textbf{0.302}                                          & 0.635    & \textbf{0.214}                                          \\
                          & 24  & 0.292 & \textbf{0.278}                                      & 0.387     & \textbf{0.380}                                           & \textbf{0.333}  & \textbf{0.333}                               & 0.513    & \textbf{0.403}                                          & 1.196    & \textbf{0.324}                                          \\
                          & 48  & 0.294 & \textbf{0.289}                                      & 0.431     & \textbf{0.422}                                           & \textbf{0.348}  & 0.357                                        & 0.534    & \textbf{0.402}                                          & 1.235    & \textbf{0.343}                                          \\ \midrule
\multirow{3}{*}{Traffic}  & 1   & 0.225 & \textbf{0.224}                                      & 0.268     & \textbf{0.256}                                           & 0.240  & \textbf{0.221}                                        & 0.236    & \textbf{0.203}                                          & 0.284    & \textbf{0.200}                                          \\
                          & 24  & 0.316 & \textbf{0.313}                                      & 0.291     & \textbf{0.289}                                           & 0.346  & \textbf{0.300}                                        & 0.346    & \textbf{0.313}                                          & 0.385    & \textbf{0.298}                                          \\
                          & 48  & 0.332 & \textbf{0.328}                                      & \textbf{0.308}     & 0.309                                           & 0.371  & \textbf{0.318}                                        & 0.355    & \textbf{0.322}                                          & 0.380    & \textbf{0.320}                                          \\ \midrule
\multirow{3}{*}{Exchange} & 1   & 0.070 & \textbf{0.069}                                      & 0.063     & \textbf{0.051}                                           & 0.085  & \textbf{0.078}                                        & 0.117    & \textbf{0.085}                                          & 0.169    & \textbf{0.055}                                          \\
                          & 24  & 0.132 & \textbf{0.129}                                      & 0.211     & \textbf{0.189}                                           & 0.148  & \textbf{0.118}                                        & 0.166    & \textbf{0.137}                                          & 0.213    & \textbf{0.124}                                          \\
                          & 48  & \textbf{0.142} & \textbf{0.142}                             & 0.300     & \textbf{0.260}                                           & 0.170  & \textbf{0.137}                                        & 0.173    & \textbf{0.152}                                          & 0.258    & \textbf{0.132}                                          \\ \bottomrule
\end{tabular}%
}
\end{table}
\begin{figure}[t]
    \centering
    \includegraphics[width=\linewidth]{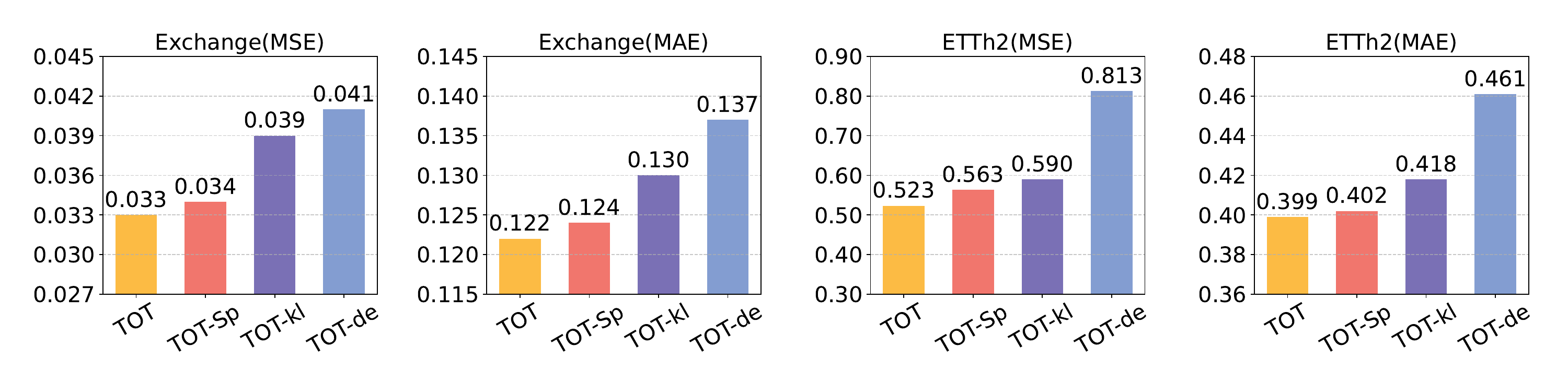}
    \caption{ Ablation study on the Exchange and ETTh2 datasets. }
    \label{fig:ablation}
\end{figure}
\subsection{Results and Discussions}
Experiment results of different benchmarks under different backbone networks are shown in Table \ref{tab:mse} and \ref{tab:mae}. According to the experiment results, we can draw the following conclusions: (1) Our framework consistently outperforms all backbones on most datasets, demonstrating its effectiveness in real-world scenarios. (2) The gains with the OneNet backbone are larger than with LSTD or Proceed, since OneNet models only dependencies of observed variables, whereas LSTD and Proceed already leverage some information of distribution shifts led by latent variables. (3) Compared to LSTD, our improvements are smaller on a few tasks like ECL, this is because LSTM employs a similar neural architecture to identify latent variables. However, we still match or exceed its performance on most benchmarks since our method is suitable for more flexible and realistic real-world scenarios. We further devise three model variants named TOT-Sp, TOT-kl, and TOT-de, which remove the sparsity constraint $L_s$, KL divergence $L_{KL}$, and the reconstruction loss $L_r$, respectively. Experiment results in Figure \ref{fig:ablation} reflect the necessity of each loss term and model architecture.



\section{Conclusion}
This paper presents a general theoretical and practical framework for online time-series forecasting under distribution shifts caused by latent variables. We first show that introducing these latent variables yields a provable reduction in Bayes risk, and that this benefit scales with the precision of latent-state identifiability. We then show that both the latent states and their underlying causal transitions can be uniquely identified from just four consecutive observations, under mild injectivity and variability conditions. Building on these results, we design a plug-and-play architecture with an additional temporal decoder and two independent noise estimators. Extensive experiments on synthetic and real-world benchmarks confirm our theoretical guarantees and demonstrate consistent improvements over several baselines. Future work aims to extend these results to related tasks, such as causal discovery on time series data and video understanding. \textbf{Limitation: }Our theoretical results assume fully observed, discretely sampled data. Extending this framework to continuous, irregular, or multi-rate sampling time series data remains an important and meaningful direction.

\section{Acknowledgment}

The authors would like to thank the anonymous reviewers for helpful comments and suggestions
during the reviewing process. The authors would like to acknowledge the support from NSF Award No. 2229881, AI Institute for Societal Decision Making (AI-SDM), the National Institutes of Health (NIH) under Contract R01HL159805, and grants from Quris AI, Florin Court Capital, and MBZUAI-WIS Joint Program, and the Al Deira Causal Education project. Moreover, this research was supported in part by National Science and Technology Major Project (2021ZD0111501), National Science Fund for Excellent Young Scholars (62122022), Natural Science Foundation of China (U24A20233, 62206064, 62206061, 62476163, 62406078), Guangdong Basic and Applied Basic Research Foundation (2024A1515011901, 2023A04J1700, 2023B1515120020).

\bibliographystyle{plainnat}
\bibliography{reference}
\clearpage
\clearpage

\clearpage
\appendix
  \textit{\large Supplement to}\\ \ \\
      {\Large \bf ``Online Time Series Forecasting with Theoretical Guarantees''}\
\newcommand{\beginsupplement}{%
	\renewcommand{\thetable}{A\arabic{table}}%

        \setcounter{equation}{0}
	\renewcommand{\theequation}{A\arabic{equation}}%

	\renewcommand{\thefigure}{A\arabic{figure}}%
	
	\setcounter{algorithm}{0}
	\renewcommand{\thealgorithm}{A\arabic{algorithm}}%
	
	\setcounter{section}{0}\renewcommand{\theHsection}{A\arabic{section}}%

    \setcounter{theorem}{0}
    \renewcommand{\thetheorem}{A\arabic{theorem}}%

    \setcounter{corollary}{0}
    \renewcommand{\thecorollary}{A\arabic{corollary}}%

        \setcounter{lemma}{0}
    \renewcommand{\thelemma}{A\arabic{lemma}}%

}

\beginsupplement
{\large Appendix organization:}

\DoToC 
\clearpage

\section{Related Works}
\label{app:related_works}

\subsection{Time Series Forecasting}

Recent advancements in time series forecasting have been driven by the application of deep learning techniques, which have proven to be highly effective in this area. These methods can be broadly categorized into several groups. First, Recurrent Neural Networks (RNNs) are commonly used for capturing temporal dependencies by leveraging their recursive structure and memory to model hidden state transitions over time \citep{graves2012long,lai2018modeling,salinas2020deepar}. Another popular approach is based on Temporal Convolutional Networks (TCNs), which employ a shared convolutional kernel to model hierarchical temporal patterns and extract relevant features \citep{bai2018empirical,wang2023micn,wu2022timesnet}. Additionally, simpler yet highly effective methods, such as Multi-Layer Perceptrons (MLP) \citep{oreshkin2019n,zeng2023transformers,zhang2022less,li2024and} and state-space models \citep{gu2022parameterization,gu2021combining,gu2021efficiently}, have also been utilized in forecasting tasks. Among these, Transformer-based models have emerged as particularly noteworthy, demonstrating significant progress in the time series forecasting domain \citep{kitaev2020reformer,liu2021pyraformer,wu2021autoformer,zhou2021informer}. Despite the success of these methods, they are generally designed for offline data processing, which limits their applicability to real-time, online training scenarios. 

\subsection{Online Time Series Forecasting}

The rapid growth of training data and the need for real-time updates have made online time series forecasting more popular than offline methods \citep{anava2013online,liu2016online,gultekin2018online,aydore2019dynamic}. Recent approaches include \cite{pan2024structural}, which uses structural consistency regularization and memory replay to retain temporal dependencies, and \cite{luan2024online}, which applies tensor factorization for low-complexity online updates. Additionally, \cite{mejri2024novel} addresses nonlinear forecasting by mapping low-dimensional series to high-dimensional spaces for better adaptation. Online forecasting is widely used in practice due to continuous data and frequent concept drift. Models are trained over multiple rounds, where they predict and incorporate new observations to refine performance. Recent work, such as \citep{pham2022learning,cai2025disentangling,lau2025fast} and \cite{wen2024onenet}, focuses on optimizing fast adaptation and information retention. However, simultaneously adapting to new data while retaining past knowledge can lead to suboptimal results, highlighting the need to decouple long- and short-term dependencies for improved predictions. However, most of these methods rarely explore the theoretical guarantees for online time series forecasting.

\subsection{Continual Learning}
Our work is also related to continual learning. Continual learning is an emerging field focused on developing intelligent systems that can sequentially learn tasks with limited access to prior experience \citep{lopez2017gradient}. A key challenge in continual learning is balancing the retention of knowledge from current tasks with the flexibility to learn future tasks, known as the stability-plasticity dilemma \citep{lin1992self,grossberg2013adaptive}. Inspired by neuroscience, various continual learning algorithms have been developed. This approach aligns with the needs of online time series forecasting, where continuous learning allows models to update in real time as new data arrives, improving their ability to adapt to changing data dynamics and enhancing forecasting accuracy.


\subsection{Causal Representation Learning}

To ensure the identifiability of latent variables, Independent Component Analysis (ICA) has been widely used for causal representation learning \citep{yao2023multi,scholkopf2021toward,liu2023causal,gresele2020incomplete}. Traditional ICA methods assume a linear mixing function between latent and observed variables \citep{comon1994independent,hyvarinen2013independent,lee1998independent,zhang2007kernel}, but this is often impractical. To address this, studies have proposed assumptions for nonlinear ICA, such as sparse generation processes and auxiliary variables \citep{zheng2022identifiability,hyvarinen1999nonlinear,hyvarinen2024identifiability,khemakhem2020ice,li2023identifying}. For example, Aapo et al. confirmed identifiability by assuming latent sources belong to the exponential family, with auxiliary variables like domain and time indices \citep{khemakhem2020variational,hyvarinen2016unsupervised,hyvarinen2017nonlinear,hyvarinen2019nonlinear}. In contrast, Zhang et al. showed that nonlinear ICA can achieve component-wise identifiability without the exponential family assumption \citep{kong2022partial,xie2023multi,kong2023identification,yan2024counterfactual}.

Other studies also use sparsity assumptions to achieve identifiability without supervised signals. For instance, Lachapelle et al. applied sparsity regularization to discover latent components \citep{lachapelle2023synergies,lachapelle2022partial}, while Zhang et al. used sparse structures to maintain identifiability under distribution shifts \citep{zhang2024causal}. Nonlinear ICA has been used for time series identifiability \citep{hyvarinen2016unsupervised,yan2024counterfactual,huang2023latent,halva2020hidden,lippe2022citris}. Aapo et al. used variance changes to detect nonstationary time series data identifiability, while permutation-based contrastive learning was applied for stationary time series \citep{hyvarinen2016unsupervised}. More recently, techniques like TDRL \citep{yao2022temporally}, LEAP \citep{yao2021learning} and IDOL \citep{li2025on} incorporated independent noise and variability features. Additionally, Song et al. identified latent variables without domain-specific observations \citep{song2024temporally}, and Imant et al. used multimodal comparative learning for modality identifiability \citep{daunhawer2023identifiability}. Yao et al. showed that multi-perspective causal representations remain identifiable despite incomplete observations \citep{yao2023multi}. However, these methods typically assume invertibility in the mixing process. This paper relaxes that assumption and provides identifiability guarantees for online time series forecasting.

\section{Notations}
\label{app:notation}

This section collects the notations used in the theorem proofs for clarity and consistency.
\begin{table}[H]
\caption{List of notations, explanations, and corresponding values.}
\centering
\resizebox{\textwidth}{!}{
\begin{tabular}{cll}
\toprule
$\textbf{Index}$ & $\textbf{Explanation}$ & \textbf{Support} \\
\midrule
$n$ & Number of variables & $n \in \mathbb{N}^+$\\
$i,j,k,l$ & Index of latent variables & $i,j,k,l \in \{1,\cdots,n\}$ \\
$t$ & Time index & $t \in \mathbb{N}^+$ \\
\midrule
$\textbf{Variable}$ & &  \\
\midrule
$\mathcal{X}_t$ & Support of observed variables in time-index $t$ & $\mathcal{X}_t \subseteq \mathbb{R}^{n}$ \\
$\mathcal{Z}_t$ & Support of latent variables & $\mathcal{Z}_t \subseteq \mathbb{R}^{n}$ \\
$\rvx_t$ & Observed variables in time-index $t$ & $\rvx_t \in \mathbb{R}^n$\\
$\rvz_t$ & Latent variables in time-index $t$ & $\rvz_t \in \mathbb{R}^n$ \\
$\rvu_t$ & $\{\rvz_{t-1}, \rvx_{t-1}, \rvz_t, \rvx_t\} $ & $\rvu_t \in \mathbb{R}^{4\times n}$ \\
$\boldsymbol{\epsilon}_{t}^{\rvo}$ & Independent noise of mixing procedure & $\boldsymbol{\epsilon}_{t}^{\rvo} \sim p_{\epsilon_{t}^{\rvo}}$ \\
$\boldsymbol{\epsilon}_{t,i}^{\rvz}$ & Independent noise of the latent transition of $\rvz_{t,i}$ & $\boldsymbol{\epsilon}_{t,i}^{\rvz} \sim p_{\epsilon_{t,i}^{\rvz}}$ \\
\midrule
$\textbf{Function}$ & & \\
\midrule
$p_{a \mid b}(\cdot \mid b)$ & Density function of $a$ given $b$ & / \\
$\rvg(\cdot)$ & Nonlinear mixing function & $\mathbb{R}^{2\times n +1 } \rightarrow \mathbb{R}^{n}$ \\
$f_i(\cdot)$ & Transition function of $\rvz_{t,i}$ & $\mathbb{R}^{n+1} \rightarrow \mathbb{R}$ \\
$h(\cdot)$ & Invertible transformation from $\rvz_t$ to $\hat{\rvz}_t$ & $\mathbb{R}^{n} \rightarrow \mathbb{R}^{n}$ \\
$\pi(\cdot)$ & Permutation function & $\mathbb{R}^{n} \rightarrow \mathbb{R}^{n}$ \\
$\mathcal{F}$ & Function space & / \\
$\mathcal{M}_{\rvu_t}$ & Markov network over $\rvu_t$ & / \\
$\phi_l$ & Encoder for $\hat{\rvz}_t^l$ & / \\
$\psi_l$ & Decoder & / \\
$r_{t,i}^{\rvz}$ & Noise estimator of $\hat{\epsilon}_{t,i}^{\rvz}$. & / \\
$r_{t,i}^{\rvo}$ & Noise estimator of $\hat{\epsilon}_{t,i}^{\rvo}$. & / \\
\midrule
$\textbf{Symbol}$ & & \\
\midrule
$\mathcal{R}$ &Bayes Risk  & / \\
$\mathbf{J}_{\kappa}$ $ $ & Jacobian matrix of $r_t^l$ & / \\
\bottomrule


\end{tabular}
}
\end{table}

\section{Proof}
\subsection{Proof of Theorem \ref{the:predict}}
\label{app:the1}
\begin{theorem}
(\textbf{Predictive-Risk Reduction via Temporal Latent Variables}) Let $\rvx_t, \rvz_t$, and $\hat{\rvz}_t$ be the observed variables, ground-truth latent variables, and the estimated latent variables, respectively. We let $\rvx_{t-\tau:t}=\{\rvx_{t-\tau},\cdots,\rvx_t\}$ be the historical $(\tau+1)$-step observed variables. Moreover, we let $\mathcal{R}_{\rvo}, \mathcal{R}_{\rvz},$ and $\mathcal{R}_{\hat{\rvz}}$ be the expected mean squared error for the models that consider $\{\rvx_{t-\tau:t}\}, \{\rvx_{t-\tau:t}, \rvz_t\},$ and $\{\rvx_{t-\tau:t}, \hat{\rvz}_t\}$, respectively. Then, in general, we have $\mathcal{R}_{\rvo}\geq\mathcal{R}_{\hat{\rvz}}\geq \mathcal{R}_{\rvz}$, and if $\rvz_t$ is identifiable we have $\mathcal{R}_{\rvo}>\mathcal{R}_{\hat{\rvz}}=\mathcal{R}_{\rvz}$.
\end{theorem}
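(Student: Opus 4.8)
The plan is to recast each expected mean squared error as the Bayes risk of its conditioning set, so that the ordering of the three risks reduces to a comparison of information content. Writing $\rvy \triangleq \rvx_{t+1:T}$ for the forecasting target, the MSE-optimal predictor given any input set $S$ is the conditional mean $\E[\rvy \mid S]$, and the attained risk is the expected conditional variance $\mathcal{R}_{S} = \E\!\left[\Var(\rvy \mid S)\right]$ (read as the trace of the conditional covariance for vector targets, which is linear and so behaves identically). First I would record the monotonicity fact that follows from the law of total variance: whenever $S \subseteq S'$ as conditioning $\sigma$-algebras, the decomposition $\Var(\rvy \mid S) = \E[\Var(\rvy \mid S') \mid S] + \Var(\E[\rvy \mid S'] \mid S)$ gives, after taking expectations, $\mathcal{R}_S = \mathcal{R}_{S'} + \E[\Var(\E[\rvy \mid S']\mid S)] \ge \mathcal{R}_{S'}$. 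Thus enlarging the conditioning set never increases the Bayes risk, and the residual term quantifies the gap.

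With $S_{\rvo} = \{\rvx_{t-\tau:t}\}$, $S_{\rvz} = \{\rvx_{t-\tau:t}, \rvz_t\}$, and $S_{\hat{\rvz}} = \{\rvx_{t-\tau:t}, \hat{\rvz}_t\}$, the inequality $\mathcal{R}_{\rvo} \ge \mathcal{R}_{\hat{\rvz}}$ is immediate from $S_{\rvo} \subseteq S_{\hat{\rvz}}$, and likewise $\mathcal{R}_{\rvo} \ge \mathcal{R}_{\rvz}$. The remaining and more delicate inequality $\mathcal{R}_{\hat{\rvz}} \ge \mathcal{R}_{\rvz}$ cannot follow from set inclusion alone, since $\hat{\rvz}_t$ is an estimate rather than a coarsening of $\rvz_t$. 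Here I would invoke the structure of the generation process: because $\hat{\rvz}_t$ is produced by an encoder acting on the observations, the true state $\rvz_t$ together with the history screens the target off from the estimate, i.e. the \emph{sufficiency} relation $\rvy \perp\!\!\!\perp \hat{\rvz}_t \mid (\rvx_{t-\tau:t}, \rvz_t)$ holds. This gives $\E[\Var(\rvy \mid S_{\rvz})] = \E[\Var(\rvy \mid \rvx_{t-\tau:t}, \rvz_t, \hat{\rvz}_t)]$, and applying the monotonicity fact to $S_{\hat{\rvz}} \subseteq \{\rvx_{t-\tau:t}, \rvz_t, \hat{\rvz}_t\}$ yields $\mathcal{R}_{\rvz} \le \mathcal{R}_{\hat{\rvz}}$, completing the chain $\mathcal{R}_{\rvo} \ge \mathcal{R}_{\hat{\rvz}} \ge \mathcal{R}_{\rvz}$.

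For the strict statement under identifiability I would use both ends of the decomposition. When $\rvz_t$ is identifiable, Definition \ref{eq:block_iden} supplies an invertible $H$ with $\rvz_t = H(\hat{\rvz}_t)$, so conditionally on the history $\hat{\rvz}_t$ and $\rvz_t$ generate the same $\sigma$-algebra; hence $S_{\hat{\rvz}}$ and $S_{\rvz}$ induce identical conditional means and $\mathcal{R}_{\hat{\rvz}} = \mathcal{R}_{\rvz}$ exactly. The strict drop $\mathcal{R}_{\rvo} > \mathcal{R}_{\rvz}$ then follows from the residual term $\mathcal{R}_{\rvo} - \mathcal{R}_{\rvz} = \E[\Var(\E[\rvy \mid \rvx_{t-\tau:t}, \rvz_t] \mid \rvx_{t-\tau:t})]$, which is strictly positive precisely when $\E[\rvy \mid \rvx_{t-\tau:t}, \rvz_t]$ is not a function of $\rvx_{t-\tau:t}$ alone, i.e. when $\rvz_t$ genuinely influences the conditional mean of the future. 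I expect the main obstacle to be pinning down and justifying the sufficiency assumption $\rvy \perp\!\!\!\perp \hat{\rvz}_t \mid (\rvx_{t-\tau:t}, \rvz_t)$, since it is exactly what separates the genuine estimation regime from the degenerate case where $\hat{\rvz}_t$ is a deterministic function of the observation window (which would force $\mathcal{R}_{\hat{\rvz}} = \mathcal{R}_{\rvo}$); everything else is a bookkeeping application of the law of total variance.
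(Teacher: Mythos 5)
Your proposal is correct and follows essentially the same route as the paper: both reduce each risk to an expected conditional variance and obtain the ordering from iterated applications of the law of total variance, with the gap $\mathcal{R}_{\hat{\rvz}}-\mathcal{R}_{\rvz}$ equal to the same residual term $\mathbb{E}\left[\mathrm{Var}_{\rvz_t\mid\rvx_{t-\tau:t},\hat{\rvz}_t}\left(\mathbb{E}[\,\cdot\mid\rvx_{t-\tau:t},\rvz_t]\right)\right]$ that vanishes exactly under identifiability. The one genuine service you render is making explicit the screening condition $\rvx_{t+1}\perp\!\!\!\perp\hat{\rvz}_t\mid(\rvx_{t-\tau:t},\rvz_t)$, which the paper uses silently when it collapses $\mathbb{E}_{\rvz_t\mid\rvx_{t-\tau:t},\hat{\rvz}_t}\mathbb{E}[\rvx_{t+1}\mid\rvx_{t-\tau:t},\rvz_t]$ into $\mathbb{E}[\rvx_{t+1}\mid\rvx_{t-\tau:t},\hat{\rvz}_t]$; your observation that this holds because the encoder computes $\hat{\rvz}_t$ from the observations is exactly the justification the paper omits.
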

\begin{proof}
Suppose that the observed variables $\rvx_t$, ground-truth latent variables $\rvz_t$, and estimated latent variables $\hat{\rvz}_t$ follow the data generation process as shown in Figure \ref{fig:gen}. We let $\mathcal{F}_{\rvo}:=\sigma(\rvx_{t-\tau:t}), \mathcal{F}_{\rvz}:=\sigma(\rvx_{t-\tau:t},\rvz_t)$, and $\mathcal{F}_{\hat{\rvz}}:=\sigma(\rvx_{t-\tau:t},\hat{\rvz})$ be the information $\sigma$‑algebras generated by the variables available to the forecaster in the three settings (only observed variables, observed and ground truth latent variables, observed and the estimated latent variables). And the corresponding optimal Bayes forecaster can be formalized as $\hat{\rvx}_{t+1}^{(\rvo)}:=\mathbb{E}[\rvx_{t+1}|\rvx_{t-\tau:t}]$, $\hat{\rvx}_{t+1}^{(\rvz)}:=\mathbb{E}[\rvx_{t+1}|\rvx_{t-\tau:t},\rvz_t]$, and $\hat{\rvx}_{t+1}^{(\hat{\rvz})}:=\mathbb{E}[\rvx_{t+1}|\rvx_{t-\tau:t},\hat{\rvz}_t]$, respectively. Then we let $\mathcal{R}_{\rvo}, \mathcal{R}_{\rvz},$ and $\mathcal{R}_{\hat{\rvz}}$ be the corresponding Bayes risk. By using the law of total expectation, we have: 
\begin{equation}
\begin{split}
\label{equ:expectation_law}
    \mathcal{R}_{\rvo}=\mathbb{E}_{\rvx_{t+1},\rvx_{t-\tau:t}}[(\rvx_{t+1}-\hat{\rvx}_{t+1}^{(\rvo)})^2]&=\mathbb{E}_{\rvx_{t-\tau:t}}[\mathbb{E}_{\rvx_{t+1}|\rvx_{t-\tau:t}}[(\rvx_{t+1}-\hat{\rvx}_{t+1}^{(\rvo)})^2]]\\&=\mathbb{E}_{\rvx_{t-\tau:t}}[\text{Var}(\rvx_{t+1}|\rvx_{t-\tau:t})],
\end{split}
\end{equation}
\begin{equation}
\begin{split}
    \mathcal{R}_{\rvz}=\mathbb{E}_{\rvx_{t+1},\rvx_{t-\tau:t},\rvz_t}[(\rvx_{t+1}-\hat{\rvx}_{t+1}^{(\rvz)})^2]&=\mathbb{E}_{\rvx_{t-\tau:t},\rvz_t}[\mathbb{E}_{\rvx_{t+1}|\rvx_{t-\tau:t},\rvz_t}[(\rvx_{t+1}-\hat{\rvx}_{t+1}^{(\rvz)})^2]]\\&=\mathbb{E}_{\rvx_{t-\tau:t},\rvz_t}[\text{Var}(\rvx_{t+1}|\rvx_{t-\tau:t},\rvz_t)], 
\end{split}
\end{equation}
\begin{equation}
\begin{split}
    \mathcal{R}_{\hat{\rvz}}=\mathbb{E}_{\rvx_{t+1},\rvx_{t-\tau:t},\hat{\rvz}_t}[(\rvx_{t+1}-\hat{\rvx}_{t+1}^{(\hat{\rvz})})^2]&=\mathbb{E}_{\rvx_{t-\tau:t},\hat{\rvz}_t}[\mathbb{E}_{\rvx_{t+1}|\rvx_{t-\tau:t},\hat{\rvz}_t}[(\rvx_{t+1}-\hat{\rvx}_{t+1}^{(\hat{\rvz})})^2]]\\&=\mathbb{E}_{\rvx_{t-\tau:t},\hat{\rvz}_t}[\text{Var}(\rvx_{t+1}|\rvx_{t-\tau:t},\hat{\rvz}_{t})].
\end{split}
\end{equation}

By using the law of total variance, Equation (\ref{equ:expectation_law}) can be written as:
\begin{equation}
\label{equ:r1}
\begin{split}
    \underbrace{\mathbb{E}_{\rvx_{t-\tau:t}}[\text{Var}(\rvx_{t+1}|\rvx_{t-\tau:t})]}_{\mathcal{R}_{\rvo}} =& \underbrace{\mathbb{E}_{\rvx_{t-\tau:t},\rvz_t}[\text{Var}(\rvx_{t+1}|\rvx_{t-\tau:t},\rvz_t)]}_{\mathcal{R}_{\rvz}} \\&+ \underbrace{\mathbb{E}_{\rvx_{t-\tau:t}}[\text{Var}_{\rvz_t|\rvx_{t-\tau:t}}(\mathbb{E}[\rvx_{t+1}|\rvx_{t-\tau:t},\rvz_t])]}_{c},
\end{split}
\end{equation}
and
\begin{equation}
\label{equ:r2}
\begin{split}
    \underbrace{\mathbb{E}_{\rvx_{t-\tau:t}}[\text{Var}(\rvx_{t+1}|\rvx_{t-\tau:t})]}_{\mathcal{R}_{\rvo}} =& \underbrace{\mathbb{E}_{\rvx_{t-\tau:t},\hat{\rvz}_t}[\text{Var}(\rvx_{t+1}|\rvx_{t-\tau:t},\hat{\rvz}_t)]}_{\mathcal{R}_{\hat{\rvz}}} \\&+ \underbrace{\mathbb{E}_{\rvx_{t-\tau:t}}[\text{Var}_{\hat{\rvz}_t|\rvx_{t-\tau:t}}(\mathbb{E}[\rvx_{t+1}|\rvx_{t-\tau:t},\hat{\rvz}_t])]}_{e}.
\end{split}
\end{equation}

Suppose in Equation (\ref{equ:r1}), $c=0$. That implies $\text{Var}_{\rvz_t|\rvx_{t-\tau:t}}(\mathbb{E}[\rvx_{t+1}|\rvx_{t-\tau:t},\rvz_t]) = 0$, meaning that $\rvz_t$ does not have any influence on the mapping $\rvx_{t-\tau:t}\rightarrow\rvx_{t+1}$, which is false because $\rvz_t \not\perp\!\!\!\perp \rvx_{t+1} \mid\rvx_{t-\tau:t}$. This leads to a contradiction, which implies $c>0$ and hence $\mathcal{R}_{\rvo} > \mathcal{R}_{\rvz}$.

Consider Equation (\ref{equ:r2}). Here, in general, for any $\hat{\rvz}_t$ we have $e\geq0$ and hence $\mathcal{R}_{\rvo}\geq\mathcal{R}_{\hat{\rvz}}$.



Then we leverage the law of total variance again and have:
\begin{equation}
\label{equ:r3}
\begin{split}
    &\text{Var}_{\rvz_t|\rvx_{t-\tau:t}}(\mathbb{E}(\rvx_{t+1}|\rvx_{t-\tau:t},\rvz_t)) \\=& \mathbb{E}_{\hat{\rvz}_t|\rvx_{t-\tau:t}}[\text{Var}_{\rvz_t|\rvx_{t-\tau:t},\hat{\rvz}_t}(\mathbb{E}[\rvx_{t+1}|\rvx_{t-\tau:t},\rvz_t])] + \text{Var}_{\hat{\rvz}_t|\rvx_{t-\tau:t}}(\mathbb{E}_{\rvz_t|\rvx_{t-\tau:t},\hat{\rvz}_t}\mathbb{E}[\rvx_{t+1}|\rvx_{t-\tau:t},\rvz_t])\\=&\mathbb{E}_{\hat{\rvz}_t|\rvx_{t-\tau:t}}[\text{Var}_{\rvz_t|\rvx_{t-\tau:t},\hat{\rvz}_t}(\mathbb{E}[\rvx_{t+1}|\rvx_{t-\tau:t},\rvz_t])] + \text{Var}_{\hat{\rvz}_t|\rvx_{t-\tau:t}}(\mathbb{E}[\rvx_{t+1}|\rvx_{t-\tau:t},\hat{\rvz}_t])
\end{split}
\end{equation}
Then we take the expectation on both sides of Equation (\ref{equ:r3}) and have :
\begin{equation}
\begin{split}
    \underbrace{\mathbb{E}_{\rvx_{t-\tau:t}}[\text{Var}_{\rvz_t|\rvx_{t-\tau:t}}(\mathbb{E}[\rvx_{t+1}|\rvx_{t-\tau:t},\rvz_t])]}_{c} =& \underbrace{\mathbb{E}_{\rvx_{t-\tau:t}}[\text{Var}_{\hat{\rvz}_t|\rvx_{t-\tau:t}}(\mathbb{E}[\rvx_{t+1}|\rvx_{t-\tau:t},\hat{\rvz}_t])]}_{e} \\&+ \mathbb{E}_{\hat{\rvz}_t,\rvx_{t-\tau:t}}[\text{Var}_{\rvz_t|\rvx_{t-\tau:t},\hat{\rvz}_t}(\mathbb{E}[\rvx_{t+1}|\rvx_{t-\tau:t},\rvz_t])]
\end{split}
\end{equation}
Similarly, $\mathbb{E}_{\hat{\rvz}_t,\rvx_{t-\tau:t}}[\text{Var}_{\rvz_t|\rvx_{t-\tau:t},\hat{\rvz}_t}(\mathbb{E}[\rvx_{t+1}|\rvx_{t-\tau:t},\rvz_t])]=0$ means that $\text{Var}_{\rvz_t|\rvx_{t-\tau:t},\hat{\rvz}_t}(\mathbb{E}[\rvx_{t+1}|\rvx_{t-\tau:t},\rvz_t])=0$, implying that $\mathbb{E}[\rvx_{t+1}|\rvx_{t-\tau:t},\rvz_t]$ is a constant, i.e., $\rvz_t$ and $\hat{\rvz}_t$ have a one-to-one correspondence. Therefore, in general, $c \geq e$, and $c=e$ iff $\rvz_t$ is identifiable.

By combining Equation (\ref{equ:r1}) and (\ref{equ:r2}), in general, we have $\mathcal{R}_{\rvo}\geq\mathcal{R}_{\hat{\rvz}}\geq \mathcal{R}_{\rvz}$, and if $\rvz_t$ is identifiable we have $\mathcal{R}_{\rvo}>\mathcal{R}_{\hat{\rvz}}=\mathcal{R}_{\rvz}$.



\end{proof}

\subsection{Proof of Theorem \ref{the:4measurement}.}
\label{app:the2}
For a better understanding of our proof, we begin by introducing an additional operator to represent the point-wise distributional transformation. For generality, we denote two variables as $\rva$ and $\rvb$, with corresponding support sets $\mathcal{A}$ and $\mathcal{B}$.

\begin{definition}(Diagonal Operator) \label{def:diagonal operator}
Consider two random variable $a$ and $b$, density functions $p_a$ and $p_b$ are defined on some support $\mathcal{A}$ and $\mathcal{B}$, respectively. The diagonal operator $D_{b \mid a}$ maps the density function $p_a$ to another density function $D_{b \mid a} \circ p_a$ defined by the pointwise multiplication of the function $p_{b \mid a}$ at a fixed point $b$:
\begin{equation}
    p_{b \mid a}(b \mid \cdot)p_a = D_{b \mid a} \circ p_a, \text{where } D_{b \mid a} = p_{b \mid a}(b \mid \cdot).
\end{equation}
\end{definition}
\begin{theorem}
\textbf{(Block-wise Identification under 4 Adjacent Observed Variables.)} Suppose that the observed and latent variables follow the data generation process. By matching the true joint distribution of 4 adjacent observed variables, i.e., $\{\rvx_{t-2}, \rvx_{t-1}, \rvx_t, \rvx_{t+1}\}$, we further consider the following assumptions:
\begin{itemize}[leftmargin=*]
    \item A1 (\underline{Bound and Continuous Density}): The joint distribution of $\rvx, \rvz$ and their marginal and conditional densities are bounded and continuous.
    \item \label{asp:inj} A2 (\underline{Injectivity}): There exists observed variables $\rvx_t$ such that for any $\rvx_t \in \mathcal{X}_t$, there exist a $\rvx_{t-1} \in \mathcal{X}_{t-1}$ and a neighborhood \footnote{Please refer to Appendix \ref{app:neighbor} for the definition of neighborhood.} $\mathcal{N}^r$ around $(\rvx_t, \rvx_{t-1})$ such that, for any $(\bar{\rvx}_t, \bar{\rvx}_{t-1}) \in \mathcal{N}^r$, $L_{ \bar{\rvx}_t, \rvx_{t+1}|\rvx_{t-2}, \bar{\rvx}_{t-1}}$ is injective; $L_{\rvx_{t+1} \mid \rvx_t, \rvz_t}$, $L_{\rvx_t \mid \rvx_{t-2}, \rvx_{t-1}}$ is injective for any $\rvx_t \in \mathcal{X}_t$ and $\rvx_{t-1} \in \mathcal{X}_{t-1}$, respectively. 
    \item \label{asp:uni-spec} A3 (\underline{Uniqueness of Spectral Decomposition}) For any $\rvx_t \in \mathcal{X}_t$ and any $\bar{\rvz}_t \neq \tilde{\rvz}_t \in \mathcal{Z}_t$, there exists a $\rvx_{t-1} \in \mathcal{X}_{t-1}$ and corresponding neighborhood $\mathcal{N}^r$ satisfying Assumption~\ref{asp:inj} such that, for some $(\bar{\rvx}_t, \bar{\rvx}_{t-1}) \in \mathcal{N}^r$ with $\bar{\rvx}_t \neq \rvx_t$, $\bar{\rvx}_{t-1} \neq \rvx_{t-1}$\text{:}
    \begin{itemize}
        \item[i] $k(\rvx_t, \bar{\rvx}_t, \rvx_{t-1}, \bar{\rvx}_{t-1}, \bar{\rvz}_t)<C<\infty$ for any $\rvz_t\in\mathcal{Z}_t$ and some constant $C$.
        \item[ii] $k(\rvx_t, \bar{\rvx}_t, \rvx_{t-1}, \bar{\rvx}_{t-1}, \bar{\rvz}_t) \neq k(\rvx_t, \bar{\rvx}_t, \rvx_{t-1}, \bar{\rvx}_{t-1}, \tilde{\rvz}_t)$, where
        \begin{equation}
            k(\rvx_t, \bar{\rvx}_t, \rvx_{t-1}, \bar{\rvx}_{t-1}, \rvz_t)
        = \frac{p_{\rvx_t \mid \rvx_{t-1}, \rvz_t}(\rvx_t \mid \rvx_{t-1}, \rvz_t) p_{\rvx_t \mid \rvx_{t-1}, \rvz_t}(\bar{\rvx}_t \mid \bar{\rvx}_{t-1}, \rvz_t)}{p_{\rvx_t \mid \rvx_{t-1}, \rvz_t}(\bar{\rvx}_t \mid \rvx_{t-1}, \rvz_t) p_{\rvx_t \mid \rvx_{t-1}, \rvz_t}(\rvx_t \mid \bar{\rvx}_{t-1}, \rvz_t)}.
        \end{equation}
    \end{itemize}
\end{itemize}
Suppose that we have learned $(\hat{\rvg}, \hat{\rvf}, p_{\hat{\epsilon}})$ to achieve Equations (1), then the combination of Markov state $\rvz_t, \rvx_t$ is identifiable, i.e., $[\hat{\rvz}_t, \hat{\rvx}_t] = H(\rvz_t, \rvx_t)$, where $H$ is invertible and differentiable.
\end{theorem}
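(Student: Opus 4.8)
The plan is to follow the operator-diagonalization strategy of \cite{hu2008,hu2012nonparametric}, adapted to accommodate the causal edge $\rvx_{t-1}\to\rvx_t$ through the double-ratio kernel $k$ appearing in A3. First I would read off the conditional independencies implied by the generation process in Figure~\ref{fig:gen}: conditioning on the Markov state $(\rvz_t,\rvx_t)$ makes the future $\rvx_{t+1}$ independent of the past $(\rvx_{t-2},\rvx_{t-1})$, while conditioning on $(\rvz_t,\rvx_{t-1})$ makes $\rvx_t$ independent of $\rvx_{t-2}$. These give the integral factorization
\begin{equation}
p(\rvx_{t+1},\rvx_t,\rvx_{t-1},\rvx_{t-2})=\int p(\rvx_{t+1}\mid\rvx_t,\rvz_t)\,p(\rvx_t\mid\rvx_{t-1},\rvz_t)\,p(\rvz_t,\rvx_{t-1},\rvx_{t-2})\,d\rvz_t,
\end{equation}
which, once $\rvx_{t-1}$ is fixed, I would rewrite in operator form via Definitions~\ref{Defn:linear} and~\ref{def:diagonal operator} as a composition of a linear operator $L_{\rvx_{t+1}\mid\rvx_t,\rvz_t}$, a diagonal operator $D_{\rvx_t\mid\rvx_{t-1},\rvz_t}$ acting in the $\rvz_t$ representation, and $L_{\rvz_t\mid\rvx_{t-2},\rvx_{t-1}}$.

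Second, I would introduce the neighborhood $\mathcal{N}^r$ from A2 and, for a nearby pair $(\bar\rvx_t,\bar\rvx_{t-1})$, form the composed operator
\begin{equation}
A:=L_{\bar\rvx_t,\rvx_{t+1}\mid\rvx_{t-2},\bar\rvx_{t-1}}^{-1}\,L_{\rvx_t,\rvx_{t+1}\mid\rvx_{t-2},\rvx_{t-1}},
\end{equation}
where the inverse exists thanks to the injectivity asserted in A2. Substituting the factorization above, the operators $L_{\rvx_{t+1}\mid\rvx_t,\rvz_t}$ and $L_{\rvz_t\mid\rvx_{t-2},\rvx_{t-1}}$ cancel in pairs, and what remains is a diagonal operator in the $\rvz_t$ basis whose eigenvalues are exactly the double ratio $k(\rvx_t,\bar\rvx_t,\rvx_{t-1},\bar\rvx_{t-1},\rvz_t)$ of A3 and whose eigenfunctions are the latent-indexed conditional densities. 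This is the eigenvalue--eigenfunction decomposition of the observable operator $A$.

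Third, I would invoke the uniqueness of the spectral decomposition (Theorem XV.4.3.5 of \cite{dunford1988linear}). Assumption A1 supplies boundedness and continuity so that $A$ is a well-defined bounded operator; A3(i) keeps the eigenvalues bounded, $k<C<\infty$; and A3(ii) guarantees that distinct latent values $\bar\rvz_t\neq\tilde\rvz_t$ yield distinct eigenvalues for at least one admissible $(\bar\rvx_t,\bar\rvx_{t-1})\in\mathcal{N}^r$, so the spectrum genuinely separates the latent states. The uniqueness theorem then forces any estimated model $(\hat\rvg,\hat\rvf,p_{\hat\epsilon})$ that reproduces the same joint distribution of the four observations to share the same eigenfunctions and eigenvalues, up to the unavoidable relabeling and rescaling of the latent index. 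Since the observed coordinates are matched by construction and the latent coordinate is recovered up to an invertible reparametrization, the block $(\hat\rvz_t,\hat\rvx_t)$ relates to $(\rvz_t,\rvx_t)$ through an invertible, differentiable map $H$, which is the claimed block-wise identifiability.

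The hardest part will be the second step: because of the causal edge $\rvx_{t-1}\to\rvx_t$, I cannot lean on three conditionally independent views as in \cite{fu2025identification}; instead the latent information has to be extracted from how the transition $p(\rvx_t\mid\rvx_{t-1},\rvz_t)$ changes across different histories, which is precisely what the double ratio $k$ encodes. Verifying that the neighborhood construction makes the $L$-operators cancel cleanly, and that the residual diagonal operator is genuinely diagonalizable with ordering and scaling as the only freedom, is the delicate point, and it is exactly where A2 and A3 must be used together to rule out degenerate or coinciding eigenvalues.
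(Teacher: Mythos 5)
Your overall strategy matches the paper's: factorize the four-way joint density through the Markov state, rewrite it as a composition of linear and diagonal operators, extract an observable operator whose spectral decomposition has eigenvalues $k(\rvx_t,\bar\rvx_t,\rvx_{t-1},\bar\rvx_{t-1},\rvz_t)$ and eigenfunctions $p_{\rvx_{t+1}\mid\rvx_t,\rvz_t}(\cdot\mid\rvx_t,\rvz_t)$, and invoke Theorem XV.4.3.5 of Dunford--Schwartz for uniqueness. However, your second step contains a genuine gap: the single composition $A=L_{\bar\rvx_t,\rvx_{t+1}\mid\rvx_{t-2},\bar\rvx_{t-1}}^{-1}\,L_{\rvx_t,\rvx_{t+1}\mid\rvx_{t-2},\rvx_{t-1}}$ does not diagonalize. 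Expanding it via the factorization gives
\begin{equation}
A=L_{\rvz_t\mid\rvx_{t-2},\bar\rvx_{t-1}}^{-1}\,D_{\bar\rvx_t\mid\bar\rvx_{t-1},\rvz_t}^{-1}\,L_{\rvx_{t+1}\mid\bar\rvx_t,\rvz_t}^{-1}\,L_{\rvx_{t+1}\mid\rvx_t,\rvz_t}\,D_{\rvx_t\mid\rvx_{t-1},\rvz_t}\,L_{\rvz_t\mid\rvx_{t-2},\rvx_{t-1}},
\end{equation}
in which nothing cancels: the outer factors involve different conditioning values $\rvx_{t-1}\neq\bar\rvx_{t-1}$ and the inner factors involve $\rvx_t\neq\bar\rvx_t$, so neither pair collapses to the identity. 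This is not a bookkeeping issue; the eigenvalue $k$ is a ratio of the transition density at all \emph{four} combinations $(\rvx_t,\rvx_{t-1})$, $(\bar\rvx_t,\rvx_{t-1})$, $(\rvx_t,\bar\rvx_{t-1})$, $(\bar\rvx_t,\bar\rvx_{t-1})$, and your $A$ only touches two of them, so it cannot possibly produce $k$ as its spectrum.

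The fix, which is what the paper does, is to evaluate the operator identity at all four pairs in the neighborhood and form \emph{two} ratios, each changing only one coordinate at a time: $\mathbf{A}=L_{\rvx_{t+1},\rvx_t,\rvx_{t-1},\rvx_{t-2}}L_{\rvx_{t+1},\bar\rvx_t,\rvx_{t-1},\rvx_{t-2}}^{-1}$ (same $\rvx_{t-1}$, so $L_{\rvz_t,\rvx_{t-1},\rvx_{t-2}}$ cancels) and $\mathbf{B}=L_{\rvx_{t+1},\bar\rvx_t,\bar\rvx_{t-1},\rvx_{t-2}}L_{\rvx_{t+1},\rvx_t,\bar\rvx_{t-1},\rvx_{t-2}}^{-1}$ (same $\bar\rvx_{t-1}$). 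The product $\mathbf{AB}$ then telescopes to $L_{\rvx_{t+1}\mid\rvx_t,\rvz_t}\,D_{\rvx_t,\bar\rvx_t,\rvx_{t-1},\bar\rvx_{t-1},\rvz_t}\,L_{\rvx_{t+1}\mid\rvx_t,\rvz_t}^{-1}$, which is the similarity transform of a diagonal operator with the double ratio $k$ on the diagonal, and the rest of your argument (boundedness from A3(i), distinctness of eigenvalues from A3(ii), resolution of scaling by density normalization and of ordering by the relabeling map $H$) goes through as you describe.
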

\begin{proof}
By the definition of data generation process, the observed density $p_{\rvx_{t+1}, \rvx_t, \rvx_{t-1}, \rvx_{t-2}}$ equals
\begin{equation}
\begin{split}
&p_{\rvx_{t+1}, \rvx_t, \rvx_{t-1}, \rvx_{t-2}} \notag\\
=& \iint p_{\rvx_{t+1}, \rvx_t, \rvz_t, \rvz_{t-1}, \rvx_{t-1}, \rvx_{t-2}} \, d\rvz_t d\rvz_{t-1} \notag \\
=& \iint p_{\rvx_{t+1} \mid \rvx_t, \rvx_{t-1}, \rvx_{t-2}, \rvz_t, \rvz_{t-1}} p_{\rvx_t, \rvz_t \mid \rvx_{t-1}, \rvx_{t-2}, \rvz_{t-1}} p_{\rvz_{t-1} , \rvx_{t-1}, \rvx_{t-2}} \, d\rvz_t d\rvz_{t-1} \notag \\ 
=& \iint p_{\rvx_{t+1} \mid \rvx_t, \rvz_t} p_{\rvx_t, \rvz_t \mid \rvx_{t-1}, \rvz_{t-1}} p_{\rvz_{t-1}, \rvx_{t-1}, \rvx_{t-2}} \, d\rvz_t d\rvz_{t-1} \notag \\
=& \iint p_{\rvx_{t+1} \mid \rvx_t, \rvz_t} p_{\rvx_t \mid \rvx_{t-1}, \rvz_t, \rvz_{t-1}} p_{\rvz_t \mid \rvx_{t-1}, \rvx_{t-2}, \rvz_{t-1}} p_{\rvx_{t-1}, \rvx_{t-2}, \rvz_{t-1}} \, d\rvz_t d\rvz_{t-1}. \\ 
=& \iint p_{\rvx_{t+1} \mid \rvx_t, \rvz_t} p_{\rvx_t \mid \rvx_{t-1}, \rvz_t, \rvz_{t-1}} p_{\rvz_t, \rvx_{t-1}, \rvx_{t-2}, \rvz_{t-1}} d\rvz_t d\rvz_{t-1}.  \label{eq:joint-density}
\end{split}
\end{equation}
According to the property of Markov process,
\begin{align}
p_{\rvx_{t+1}, \rvx_t, \rvx_{t-1}, \rvx_{t-2}} 
&= \int p_{\rvx_{t+1} \mid \rvx_t, \rvz_t} p_{\rvx_t \mid \rvx_{t-1}, \rvz_t} \left( \int p_{\rvz_t, \rvz_{t-1}, \rvx_{t-1}, \rvx_{t-2}} \, d\rvz_{t-1} \right) d\rvz_t \notag \\
&= \int p_{\rvx_{t+1} \mid \rvx_t, \rvz_t} p_{\rvx_t \mid \rvx_{t-1}, \rvz_t} p_{\rvz_t, \rvx_{t-1}, \rvx_{t-2}} \, d\rvz_t. \label{eq:factorized}
\end{align}

In operator notation, given values of $(\rvx_t, \rvx_{t-1}) \in \mathcal{X}_t \times \mathcal{X}_{t-1}$, this is
\begin{equation}  \label{equ:lemma1}
L_{\rvx_{t+1}, \rvx_t, \rvx_{t-1}, \rvx_{t-2}} 
= L_{\rvx_{t+1} \mid \rvx_t, \rvz_t} D_{\rvx_t \mid \rvx_{t-1}, \rvz_t} L_{\rvz_t, \rvx_{t-1}, \rvx_{t-2}}.
\end{equation}

After obtaining the representation of observed density function, furthermore, the structure of Markov process implies the following two equalities:
\begin{align}
p_{\rvx_{t+1}, \rvx_t, \rvx_{t-1}, \rvx_{t-2}} 
&= \int p_{\rvx_{t+1} \mid \rvx_t, \rvz_t} p_{\rvx_t, \rvz_t, \rvx_{t-1}, \rvx_{t-2}} \, d\rvz_t, \notag \\
p_{\rvx_t, \rvz_t, \rvx_{t-1}, \rvx_{t-2}} 
&= \int p_{\rvx_t, \rvz_t \mid \rvx_{t-1}, \rvz_{t-1}} p_{\rvz_{t-1}, \rvx_{t-1}, \rvx_{t-2}} \, d\rvz_{t-1}. \label{eq:lemma2_density}
\end{align}

In operator notation, for fixed $\rvx_t, \rvx_{t-1}$, the above equations are expressed:
\begin{align}
L_{\rvx_{t+1}, \rvx_t, \rvx_{t-1}, \rvx_{t-2}} 
&= L_{\rvx_{t+1} \mid \rvx_t, \rvz_t} L_{\rvx_t, \rvz_t, \rvx_{t-1}, \rvx_{t-2}}, \notag \\
L_{\rvx_t, \rvz_t, \rvx_{t-1}, \rvx_{t-2}} 
&= L_{\rvx_t, \rvz_t \mid \rvx_{t-1}, \rvz_{t-1}} L_{\rvz_{t-1}, \rvx_{t-1}, \rvx_{t-2}}. \label{eq:lemma2_operators}
\end{align}

Substituting the second line into the first, we get
\begin{align}
L_{\rvx_{t+1}, \rvx_t, \rvx_{t-1}, \rvx_{t-2}} 
&= L_{\rvx_{t+1} \mid \rvx_t, \rvz_t} L_{\rvx_t, \rvz_t \mid \rvx_{t-1}, \rvz_{t-1}} L_{\rvz_{t-1}, \rvx_{t-1}, \rvx_{t-2}} \notag \\
\Leftrightarrow L_{\rvx_t, \rvz_t \mid \rvx_{t-1}, \rvz_{t-1}} L_{\rvz_{t-1}, \rvx_{t-1}, \rvx_{t-2}} 
&= L_{\rvx_{t+1} \mid \rvx_t, \rvz_t}^{-1} L_{\rvx_{t+1}, \rvx_t, \rvx_{t-1}, \rvx_{t-2}}. \label{eq:lemma2_step1}
\end{align}

The second line uses Assumption~\ref{asp:inj}. Next, we eliminate $L_{\rvz_{t-1}, \rvx_{t-1}, \rvx_{t-2}}$ from the above. Again, using the conditional independence of Markov process, we have:
\begin{align}
p_{\rvx_t, \rvx_{t-1}, \rvx_{t-2}} 
= \int p_{\rvx_t \mid \rvx_{t-1}, \rvz_{t-1}} p_{\rvz_{t-1}, \rvx_{t-1}, \rvx_{t-2}} \, d\rvz_{t-1}, \label{eq:lemma2_fvt}
\end{align}

which can be represented in terms of operator (for fixed $\rvx_{t-1}$) as:
\begin{align}
&& L_{\rvx_t, \rvx_{t-1}, \rvx_{t-2}} 
&= L_{\rvx_t \mid \rvx_{t-1}, \rvz_{t-1}} L_{\rvz_{t-1}, \rvx_{t-1}, \rvx_{t-2}}, \notag \\
\Rightarrow && L_{\rvz_{t-1}, \rvx_{t-1}, \rvx_{t-2}} 
&= L_{\rvx_t \mid \rvx_{t-1}, \rvz_{t-1}}^{-1} L_{\rvx_t, \rvx_{t-1}, \rvx_{t-2}}. \label{eq:lemma2_lx}
\end{align}

The R.H.S. applies Assumption~\ref{asp:inj}. Hence, substituting the above into Eq.~\ref{eq:lemma2_step1}, we obtain the desired representation:
\begin{align}
&& L_{\rvx_t, \rvz_t \mid \rvx_{t-1}, \rvz_{t-1}} L_{\rvx_t \mid \rvx_{t-1}, \rvz_{t-1}}^{-1} L_{\rvx_t, \rvx_{t-1}, \rvx_{t-2}} 
&= L_{\rvx_{t+1} \mid \rvx_t, \rvz_t}^{-1} L_{\rvx_{t+1}, \rvx_t, \rvx_{t-1}, \rvx_{t-2}} \notag \\
\Rightarrow && L_{\rvx_t, \rvz_t \mid \rvx_{t-1}, \rvz_{t-1}}
= L_{\rvx_{t+1} \mid \rvx_t, \rvz_t}^{-1} L_{\rvx_{t+1}, \rvx_t, \rvx_{t-1}, \rvx_{t-2}} & L_{\rvx_t, \rvx_{t-1}, \rvx_{t-2}}^{-1} L_{\rvx_t, \rvx_{t-1}, \rvz_{t-1}}. \label{eq:lemma2_final}
\end{align}

The second line applies Assumption~\ref{asp:inj} to post-multiply by $L_{\rvx_t, \rvx_{t-1}, \rvx_{t-2}}^{-1}$, while in the third line, we postmultiply both sides by $L_{\rvx_t \mid \rvx_{t-1}, \rvz_{t-1}}$. 

For each $\rvx_t$, choose a $\rvx_{t-1}$ and a neighborhood $\mathcal{N}^r$ around $(\rvx_t, \rvx_{t-1})$ to satisfy Assumption~\ref{asp:inj} and ~\ref{asp:uni-spec}, and pick a $(\bar{\rvx}_t, \bar{\rvx}_{t-1})$ within the neighborhood $\mathcal{N}^r$ to satisfy Assumption~\ref{asp:inj}. Because $(\bar{\rvx}_t, \bar{\rvx}_{t-1}) \in \mathcal{N}^r$, also $(\rvx_t, \bar{\rvx}_{t-1})$, $(\bar{\rvx}_t, \rvx_{t-1}) \in \mathcal{N}^r$. By the representation of observations in Eq.~\ref{equ:lemma1}, we have
\[
L_{\rvx_{t+1}, \rvx_t, \rvx_{t-1}, \rvx_{t-2}} = L_{\rvx_{t+1} \mid \rvx_t, \rvz_t} D_{\rvx_t \mid \rvx_{t-1}, \rvz_t} L_{\rvz_t, \rvx_{t-1}, \rvx_{t-2}}.
\]
The first term on the R.H.S., $L_{\rvx_{t+1} \mid \rvx_t, \rvz_t}$, does not depend on $\rvx_{t-1}$, and the last term $L_{\rvz_t, \rvx_{t-1}, \rvx_{t-2}}$ does not depend on $\rvx_t$. This feature suggests that, by evaluating Eq.~\ref{lemma1} at the four pairs of points $(\rvx_t, \rvx_{t-1})$, $(\bar{\rvx}_t, \rvx_{t-1})$, $(\rvx_t, \bar{\rvx}_{t-1})$, $(\bar{\rvx}_t, \bar{\rvx}_{t-1})$, each pair of equations will share one operator in common. Specifically:

\begin{align}
L_{\rvx_{t+1}, \rvx_t, \rvx_{t-1}, \rvx_{t-2}} &= L_{\rvx_{t+1} \mid \rvx_t, \rvz_t} D_{\rvx_t \mid \rvx_{t-1}, \rvz_t} L_{\rvz_t, \rvx_{t-1}, \rvx_{t-2}}, \label{eq:36} \\
L_{\rvx_{t+1}, \bar{\rvx}_t, \rvx_{t-1}, \rvx_{t-2}} &= L_{\rvx_{t+1} \mid \bar{\rvx}_t, \rvz_t} D_{\bar{\rvx}_t \mid \rvx_{t-1}, \rvz_t} L_{\rvz_t, \rvx_{t-1}, \rvx_{t-2}}, \label{eq:37} \\
L_{\rvx_{t+1}, \rvx_t, \bar{\rvx}_{t-1}, \rvx_{t-2}} &= L_{\rvx_{t+1} \mid \rvx_t, \rvz_t} D_{\rvx_t \mid \bar{\rvx}_{t-1}, \rvz_t} L_{\rvz_t, \bar{\rvx}_{t-1}, \rvx_{t-2}}, \label{eq:38} \\
L_{\rvx_{t+1}, \bar{\rvx}_t, \bar{\rvx}_{t-1}, \rvx_{t-2}} &= L_{\rvx_{t+1} \mid \bar{\rvx}_t, \rvz_t} D_{\bar{\rvx}_t \mid \bar{\rvx}_{t-1}, \rvz_t} L_{\rvz_t, \bar{\rvx}_{t-1}, \rvx_{t-2}}. \label{eq:39}
\end{align}

Assumption~\ref{asp:inj} implies that $L_{\rvx_{t+1} \mid \bar{\rvx}_t, \rvz_t}$ is invertible. Moreover, Assumption~\ref{asp:inj} implies $p_{\rvx_t \mid \rvx_{t-1}, \rvz_t}(\rvx_t \mid \rvx_{t-1}, \rvz_t) > 0$ for all $\rvz_t$, so that $D_{\bar{\rvx}_t \mid \rvx_{t-1}, \rvz_t}$ is invertible. We can then solve for $L_{\rvz_t, \rvx_{t-1}, \rvx_{t-2}}$ from Eq.~\ref{eq:37} as
\begin{equation}
D_{\bar{\rvx}_t \mid \rvx_{t-1}, \rvz_t}^{-1} L_{\rvx_{t+1} \mid \bar{\rvx}_t, \rvz_t}^{-1} L_{\rvx_{t+1}, \bar{\rvx}_t, \rvx_{t-1}, \rvx_{t-2}} = L_{\rvz_t, \rvx_{t-1}, \rvx_{t-2}}. \label{eq:40}
\end{equation}

Plugging this expression into Eq.~\ref{eq:36} leads to
\begin{align}
L_{\rvx_{t+1}, \rvx_t, \rvx_{t-1}, \rvx_{t-2}} 
&= L_{\rvx_{t+1} \mid \rvx_t, \rvz_t} D_{\rvx_t \mid \rvx_{t-1}, \rvz_t} D_{\bar{\rvx}_t \mid \rvx_{t-1}, \rvz_t}^{-1} L_{\rvx_{t+1} \mid \bar{\rvx}_t, \rvz_t}^{-1} L_{\rvx_{t+1}, \bar{\rvx}_t, \rvx_{t-1}, \rvx_{t-2}}. \label{eq:41}
\end{align}

Lemma 1 of \cite{hu2008instrumental} shows that, given the injectivity of $L_{\rvx_{t-2}, \bar{\rvx}_{t-1}, \rvx_t, \rvx_{t+1}}$ as in Assumption~\ref{asp:inj}, we can postmultiply by $L_{\rvx_{t+1}, \rvx_t, \rvx_{t-1}, \rvx_{t-2}}^{-1}$ to obtain:
\begin{equation}
\mathbf{A} \equiv L_{\rvx_{t+1}, \rvx_t, \rvx_{t-1}, \rvx_{t-2}} L_{\rvx_{t+1}, \rvx_t, \rvx_{t-1}, \rvx_{t-2}}^{-1} = L_{\rvx_{t+1} \mid \rvx_t, \rvz_t} D_{\rvx_t \mid \rvx_{t-1}, \rvz_t} D_{\bar{\rvx}_t \mid \rvx_{t-1}, \rvz_t}^{-1} L_{\rvx_{t+1} \mid \bar{\rvx}_t, \rvz_t}^{-1}. \label{eq:42}
\end{equation}
Similarly, manipulations of Eq.~\ref{eq:38} and Eq.~\ref{eq:39} lead to
\begin{equation}
\mathbf{B} \equiv L_{\rvx_{t+1}, \bar{\rvx}_t, \rvx_{t-1}, \rvx_{t-2}} L_{\rvx_{t+1}, \rvx_t, \bar{\rvx}_{t-1}, \rvx_{t-2}}^{-1} = L_{\rvx_{t+1} \mid \bar{\rvx}_t, \rvz_t} D_{\bar{\rvx}_t \mid \bar{\rvx}_{t-1}, \rvz_t} D_{\rvx_t \mid \bar{\rvx}_{t-1}, \rvz_t}^{-1} L_{\rvx_{t+1} \mid \rvx_t, \rvz_t}^{-1}. \label{eq:43}
\end{equation}

Assumption~\ref{asp:inj} guarantees that, for any $\rvx_t$, $(\bar{\rvx}_t, \rvx_{t-1}, \bar{\rvx}_{t-1})$ exist so that Eq.~\ref{eq:42} and Eq.~\ref{eq:43} are valid operations. Finally, we postmultiply Eq.~\ref{eq:42} by Eq.~\ref{eq:43} to obtain:
\begin{align} \label{equ:ldl}
\mathbf{AB} 
&= L_{\rvx_{t+1} \mid \rvx_t, \rvz_t} D_{\rvx_t \mid \rvx_{t-1}, \rvz_t} D_{\bar{\rvx}_t \mid \rvx_{t-1}, \rvz_t}^{-1} \left( L_{\rvx_{t+1} \mid \bar{\rvx}_t, \rvz_t} L_{\rvx_{t+1} \mid \bar{\rvx}_t, \rvz_t} \right) \times D_{\bar{\rvx}_t \mid \bar{\rvx}_{t-1}, \rvz_t} D_{\rvx_t \mid \bar{\rvx}_{t-1}, \rvz_t}^{-1} L_{\rvx_{t+1} \mid \rvx_t, \rvz_t}^{-1} \notag \\
&= L_{\rvx_{t+1} \mid \rvx_t, \rvz_t} \left( D_{\rvx_t \mid \rvx_{t-1}, \rvz_t} D_{\bar{\rvx}_t \mid \rvx_{t-1}, \rvz_t}^{-1} D_{\bar{\rvx}_t \mid \bar{\rvx}_{t-1}, \rvz_t} D_{\rvx_t \mid \bar{\rvx}_{t-1}, \rvz_t}^{-1} \right) L_{\rvx_{t+1} \mid \rvx_t, \rvz_t}^{-1} \notag \\
&\equiv L_{\rvx_{t+1} \mid \rvx_t, \rvz_t} D_{\rvx_t, \bar{\rvx}_t, \rvx_{t-1}, \bar{\rvx}_{t-1}, \rvz_t} L_{\rvx_{t+1} \mid \rvx_t, \rvz_t}^{-1}, 
\end{align}
where
\begin{align}
\left( D_{\rvx_t, \bar{\rvx}_t, \rvx_{t-1}, \bar{\rvx}_{t-1}, \rvz_t} h \right)(\rvz_t) 
&= \left( D_{\rvx_t \mid \rvx_{t-1}, \rvz_t} D_{\bar{\rvx}_t \mid \rvx_{t-1}, \rvz_t}^{-1} D_{\bar{\rvx}_t \mid \bar{\rvx}_{t-1}, \rvz_t} D_{\rvx_t \mid \bar{\rvx}_{t-1}, \rvz_t}^{-1} h \right)(\rvz_t) \notag \\
&= \frac{p_{\rvx_t \mid \rvx_{t-1}, \rvz_t}(\rvx_t \mid \rvx_{t-1}, \rvz_t) p_{\rvx_t \mid \rvx_{t-1}, \rvz_t}(\bar{\rvx}_t \mid \bar{\rvx}_{t-1}, \rvz_t)}{p_{\rvx_t \mid \rvx_{t-1}, \rvz_t}(\bar{\rvx}_t \mid \rvx_{t-1}, \rvz_t) p_{\rvx_t \mid \rvx_{t-1}, \rvz_t}(\rvx_t \mid \bar{\rvx}_{t-1}, \rvz_t)} h(\rvz_t) \notag \\
&\equiv k(\rvx_t, \bar{\rvx}_t, \rvx_{t-1}, \bar{\rvx}_{t-1}, \rvz_t) h(\rvz_t). \label{eq:45}
\end{align}

By matching the marginal distribution of observed variables, we can define the operator \(\hat{\mathbf{A}}\hat{\mathbf{B}}\) as the estimated counterpart of the operator \(\mathbf{AB}\), constructed using the estimated densities of \(\hat{\rvx}_{t-2}\), \(\hat{\rvx}_{t-1}\), \(\hat{\rvx}_{t}\), \(\hat{\rvx}_{t+1}\), and \(\hat{\rvz}_{t}\). Since both the marginal and conditional distributions of the observed variables are matched, the true model and the estimated model yield the same distribution over the observed variables. Therefore, we also have:
\begin{equation} \label{equ:match-obs}
    \hat{\mathbf{A}}\hat{\mathbf{B}} = \mathbf{AB}.
\end{equation}
Eq.~\ref{equ:ldl} implies that the observed operator $\mathbf{AB}$ has an inherent eigenvalue–eigenfunction decomposition, with the eigenvalues corresponding to the function $k(\rvx_t, \bar{\rvx}_t, \rvx_{t-1}, \bar{\rvx}_{t-1}, \rvz_t)$ and the eigenfunctions corresponding to the density $p_{\rvx_{t+1} \mid \rvx_t, \rvz_t}(\cdot \mid \rvx_t, \rvz_t)$. Furthermore, Eq.~\ref{equ:match-obs} implies that $\mathbf{AB}$ and $\hat{\mathbb{A}}\hat{\mathbf{B}}$ admit the same eigendecompositions, which are similar to the decomposition in \cite{hu2008instrumental} or \cite{carroll2010identification}. Assumption~\ref{asp:uni-spec} ensures that this decomposition is unique. Specifically, the operator $\mathbf{AB}$ on the L.H.S. has the same spectrum as the diagonal operator $D_{\rvx_t, \bar{\rvx}_t, \rvx_{t-1}, \bar{\rvx}_{t-1}, \rvz_t}$. Assumption~\ref{asp:uni-spec} guarantees that the spectrum of the diagonal operator is bounded. Since an operator is bounded by the largest element of its spectrum, Assumption~\ref{asp:uni-spec} also implies that the operator $\mathbf{AB}$ is bounded, whence we can apply Theorem XV.4.3.5 from \cite{dunford1988linear} to show the uniqueness of the spectral decomposition of bounded linear operators:
\begin{equation}\label{eq:44}
    L_{\rvx_{t+1} \mid \rvx_t, \rvz_t} = C L_{\hat{\rvx}_{t+1} \mid \hat{\rvx}_t, \hat{\rvz}_t} P^{-1}. \quad
    D_{\rvx_t, \bar{\rvx}_t, \rvx_{t-1}, \bar{\rvx}_{t-1}, \rvz_t} = P D_{\hat{\rvx}_t, \hat{\bar{{\rvx}}}_t, \hat{\rvx}_{t-1}, \hat{\bar{{\rvx}}}_{t-1}, \hat{\rvz}_t} P^{-1}
\end{equation}
where $C$ is a scalar accounting for scaling indeterminacy and $P$ is a permutation on the order of elements in $L_{\hat{\rvx}_{t+1} \mid \hat{\rvx}_t, \hat{\rvz}_t}$, as discussed in~\citep{dunford1988linear}. These forms of indeterminacy are analogous to those in eigendecomposition, which can be viewed as a finite-dimensional special case. We will show how to resolve the indeterminacies in eigen(spectral) decomposition.

First, Eq.~\ref{eq:44} itself does not imply that the eigenvalues $k(\rvx_t, \bar{\rvx}_t, \rvx_{t-1}, \bar{\rvx}_{t-1}, \rvz_t)$ are distinct for different values $\rvz_t$. When the eigenvalues are the same for multiple values of $\rvz_t$, the corresponding eigenfunctions are only determined up to an arbitrary linear combination, implying that they are not identified. Assumption~\ref{asp:uni-spec} rules out this possibility, and implies that for each $\rvx_t$, we can find values $\bar{\rvx}_t, \rvx_{t-1}, \bar{\rvx}_{t-1}$ such that the eigenvalues are distinct across all $\rvz_t$.

Second, since the normalizing condition 
\begin{equation}
    \int_{\hat{\mathcal{X}}_{t+1}} p_{\hat{\rvx}_{t+1} | \hat{\rvx}_t, \hat{\rvz}_t} \, d\hat{\rvx}_{t+1} = 1
\end{equation}
must hold for every $\hat{\rvz}_t$, one only solution is to set $C=1$, that is, the scaling indeterminacy is resolved.


Ultimately, the unorder of eigenvalues/eigenfunctions is left. We have match the observational distributions $\rvx_t$, $\rvx_{t-1}$, $\rvx_{t+1}$, hence, the operator, \( L_{\rvx_{t+1} \mid \rvx_t, \rvz_t } \), corresponding to the set $\{ p_{\rvx_{t+1} \mid \rvx_t, \rvz_t }(\cdot \mid \rvx_t, \rvz_t ) \}$ for all $\rvx_t, \rvz_t $, admits a unique solution (orderibng ambiguity of eigendecomposition only changes the entry position):
\begin{equation}
\{ p_{\rvx_{t+1} \mid \rvx_t, \rvz_t }(\cdot \mid \rvx_t, \rvz_t ) \} = \{ p_{\rvx_{t+1} \mid \hat{\rvx}_t, \hat{\rvz}_t }(\rvx_{t+1} \mid \hat{\rvx}_t, \hat{\rvz}_t ) \}, \quad \text{ for all } \rvx_t, \rvz_t , \hat{\rvx}_t, \hat{\rvz}_t      
\end{equation}
Due to the set is unorder, the only way to match the R.H.S. with the L.H.S. in a consistent order is to exchange the conditioning variables, that is, 
\begin{equation*}
\label{eq:lik-match}
\begin{aligned}
    &\{ p_{\rvx_{t+1} \mid \rvx_t, \rvz_t }(\cdot \mid \rvx_t^{(1)}, \rvz_t ^{(1)}),\,
    p_{\rvx_{t+1} \mid \rvx_t, \rvz_t }(\cdot \mid \rvx_t^{(2)}, \rvz_t ^{(2)}),\, \ldots \} \\
    &= \{ p_{\rvx_{t+1} \mid \hat{\rvx}_t, \hat{\rvz}_t }(\cdot \mid \hat{\rvx}_t^{(1)}, \hat{\rvz}_t ^{(1)}),\,
    p_{\rvx_{t+1} \mid \hat{\rvx}_t, \hat{\rvz}_t }(\cdot \mid \hat{\rvx}_t^{(2)}, \hat{\rvz}_t ^{(2)}),\, \ldots \} 
\end{aligned}
\end{equation*}

\begin{equation*}
\label{eq:per-sam}
\begin{aligned}
    \Rightarrow\ && 
    &[ p_{\rvx_{t+1} \mid \rvx_t, \rvz_t }(\cdot \mid \rvx_t^{(\pi(1))}, \rvz_t ^{(\pi(1))}),\,
    p_{\rvx_{t+1} \mid \rvx_t, \rvz_t }(\cdot \mid \rvx_t^{(\pi(2))}, \rvz_t ^{(\pi(2))}),\, \ldots ] \\
    && &= [ p_{\rvx_{t+1} \mid \hat{\rvx}_t, \hat{\rvz}_t }(\cdot \mid \hat{\rvx}_t^{(\pi(1))}, \hat{\rvz}_t ^{(\pi(1))}),\,
    p_{\rvx_{t+1} \mid \hat{\rvx}_t, \hat{\rvz}_t }(\cdot \mid \hat{\rvx}_t^{(\pi(2))}, \hat{\rvz}_t ^{(\pi(2))}),\, \ldots ]
\end{aligned}
\end{equation*}
where superscript $(\cdot)$ denotes the index of the conditioning variables $[\rvx_t, \rvz_t]$, and $\pi$ is reindexing the conditioning variables. We use a relabeling map \(H\) to represent its corresponding value mapping:
\begin{equation}
    p_{\rvx_{t+1} \mid \rvx_t, \rvz_t }(\cdot \mid H(\rvx_t, \rvz_t )) = p_{\rvx_{t+1} \mid \hat{\rvx}_t, \hat{\rvz}_t }(\cdot \mid \hat{\rvx}_t, \hat{\rvz}_t), \quad \text{ for all } \rvx_t, \rvz_t , \hat{\rvx}_t, \hat{\rvz}_t  \label{eq:supp-match}
\end{equation}
By Assumption~\ref{asp:uni-spec}, different \(x^*\) corresponds to different \( p_{\rvx_{t+1} \mid \rvx_t, \rvz_t }(\cdot \mid H(\rvx_t, \rvz_t )) \), there is no repeated element in $\{ p_{\rvx_{t+1} \mid \rvx_t, \rvz_t }(\cdot \mid H(\rvx_t, \rvz_t)) \}$ (and $\{ p_{\rvx_{t+1} \mid \hat{\rvx}_t, \hat{\rvz}_t }(\cdot \mid \hat{\rvx}_t, \hat{\rvz}_t) \}$). Hence, the relabelling map $H$ is one-to-one.

Furthermore, Assumption~\ref{asp:uni-spec} implies that \(p_{\rvx_{t+1},  \mid \rvx_t, \rvz_t}(\cdot \mid H(\rvx_t, \rvz_t))\) determines a unique \( H(\rvx_t, \rvz_t) \). The same holds for the $p_{\rvx_{t+1} \mid \hat{\rvx}_t, \hat{\rvz}_t }(\cdot \mid \hat{\rvx}_t, \hat{\rvz}_t)$, implying that
\begin{equation}
    p_{\rvx_{t+1} \mid \rvx_t, \rvz_t }(\cdot \mid H(\rvx_t, \rvz_t )) = p_{\rvx_{t+1} \mid \hat{\rvx}_t, \hat{\rvz}_t }(\cdot \mid \hat{\rvx}_t, \hat{\rvz}_t ) \implies \hat{\rvx}_t, \hat{\rvz}_t  = H(\rvx_t, \rvz_t ) \label{eq:z=H(z)},
\end{equation}
implying that $\rvx_t, \rvz_t$ is block-wise identifiable.

Next, suppose the implemented MLP used in the transition module is differentiable, then we can assert that there exists a functional $M$ such that $M\left[ p_{\rvx_{t+1} \mid \rvx_t, \rvz_t}(\cdot \mid \rvx_t, \rvz_t) \right] = H(\rvx_t, \rvz_t)$ for all $\rvz_t \in \mathcal{Z}_t$ and $\rvx_t \in \mathcal{X}_t$, where $H$ is differentiable, that is, we can learn a differentiable function $H$ that 
\begin{equation}
    M\left[ p_{\rvx_{t+1} \mid \hat{\rvx}_t, \hat{\rvz}_t }(\cdot \mid \rvx_t, \rvz_t ) \right] = M\left[ p_{\rvx_{t+1} \mid \rvx_t, \rvz_t }(\cdot \mid H(\rvx_t, \rvz_t )) \right] = H(\rvx_t, \rvz_t ),
\end{equation}
which is equal to $\hat{\rvx}_t, \hat{\rvz}_t $ only if $H$ is differentiable. 
\end{proof}

\subsection{More Discussion of injective linear operators}
\label{app:injective}

A linear operator can be intuitively understood as a function that maps one distribution of random variables to another. Specifically, when we assume the injectivity of a linear operator in the context of nonparametric identification, we are asserting that distinct input distributions of the operator correspond to distinct output distributions. This injectivity ensures that there is no ambiguity in the transformation from the input space to the output space, making the operator's behavior predictable and identifiable. An example from a real-world scenario can be seen in weather forecasting. The temperature on a given day can be influenced by several previous days' temperatures. If we view the relationship between past and future temperatures as a linear operator, injectivity would mean that each unique pattern of past temperatures leads to a distinct forecast for the future temperature. The injectivity of this operator ensures that the mapping from past weather data to future forecasts does not result in ambiguity, allowing for more accurate and reliable predictions.

 For a better understanding of this assumption, we provide several examples that describe the mapping from $p_{\rva} \rightarrow p_{\rvb}$, where $\rva$ and $\rvb$ are random variables.

 \begin{example}[\textbf{Inverse Transformation}]
\label{equ:example_inverse}
    $b = g(a)$, where $g$ is an invertible function.
\end{example}

\begin{example}[\textbf{Additive Transformation}]
\label{equ:additive_transfromation}
    $b = a + \epsilon$, where $p(\epsilon)$ must not vanish everywhere after the Fourier transform (Theorem 2.1 in \cite{mattner1993some}).
\end{example}

\begin{example}
\label{equ:additive_transfromation2}
    $b = g(a) + \epsilon$, where the same conditions from Examples \ref{equ:example_inverse} and \ref{equ:additive_transfromation} are required.
\end{example}

\begin{example}
    [\textbf{Post-linear Transformation}] $b = g_1(g_2(a) + \epsilon)$, a post-nonlinear model with invertible nonlinear functions $g_1, g_2$, combining the assumptions in \textbf{Examples \ref{equ:example_inverse}-\ref{equ:additive_transfromation2}}.
\end{example}

\begin{example}
    [\textbf{Nonlinear Transformation with Exponential Family}]
    $b = g(a, \epsilon)$, where the joint distribution $p(a, b)$ follows an exponential family.
\end{example}

\begin{example}
[\textbf{General Nonlinear Transformation}]
    $b = g(a, \epsilon)$, a general nonlinear formulation. Certain deviations from the nonlinear additive model (\textbf{Example} \ref{equ:additive_transfromation2}), e.g., polynomial perturbations, can still be tractable.
\end{example}

\subsection{Monotonicity and Normalization Assumption}
\label{app:mode_assumption}
\begin{assumption}
[Monotonicity and Normalization Assumption \citep{hu2012nonparametric}]  For any $\rvx_t \in \mathcal{X}_t$, there exists a known functional $G$ such that $G\left[p_{\rvx_{t+1}|\rvx_t,\rvz_t}(\cdot|\rvx_t,\rvz_t)\right]$ is monotonic in $\rvz_t$. We normalize $\rvz_t=G\left[p_{\rvx_{t+1}|\rvx_t,\rvz_t}(\cdot|\rvx_t,\rvz_t)\right]$. 
\end{assumption}

\subsection{Definition of Neighborhood}
\label{app:neighbor}
\begin{definition}
[\textbf{Neighborhood}] Givien a point $x$ in a metric space, and a positive number $r$, the neighborhood $N^t$ of $x$ is defined as:
\begin{equation}
    N^r(x) = \{y: d(x,y)<r\}.
\end{equation}
\end{definition}

\subsection{More Discussion of Uniqueness of Spectral Decomposition}
\label{app:a3}
This assumption essentially states that, in order to identify the latent variables of the system, it is necessary to observe four different transitions of the observed variables that are governed by the same latent variables. For a better understanding of this assumption, we provide an economic model. Consider an economic model where $\rvx_t$ represents the inflation rate at time $t$, and $\rvz_t$ represents the economic regime (such as a recession or a period of growth). To accurately identify the economic regime, we would need to observe inflation under four distinct scenarios: transitions from a high-inflation state to a low-inflation state, and from a low-inflation state to a high-inflation state, under different historical conditions. These four observed inflation transitions allow us to identify whether the economy is in a recession or growth phase, based on the changes in inflation behavior.

This assumption is straightforward to satisfy in real-world economic modeling, especially when there is access to sufficient historical inflation data. In practice, there are often multiple transitions between inflation states over time, corresponding to shifts in the economic regime (e.g., moving from high inflation during an economic boom to low inflation during a recession). By collecting enough observations across different periods of economic change, this assumption can be easily fulfilled, ensuring that we can identify the underlying economic regime with confidence.

\subsection{Proof of Theorem \ref{the:component-wise}.}
\label{app:prop1}
\begin{lemma}
\label{lemma1}
\textbf{(Component-wise Identification of $\rvz_t$ with instantaneous dependencies under \underline{sparse causal influence on latent dynamics}.)\cite{li2025on}} For a series of observed variables $\rvx_t \in \mathbb{R}^n$ and estimated latent variables $\hat{\rvz}_t\in\mathbb{R}^n$ with the corresponding process $\hat{f}_i, \hat{p}(\epsilon),\hat{\rvg}$, where $\hat{\rvg}$ is invertible, suppose the process subject to observational equivalence $\rvx_t=\hat{\rvg}(\hat{\rvz}_t)$. Let $\rvc_t\triangleq\{\rvz_{t-1},\rvz_t\}\in\mathbb{R}^{2n}$ and $\mathcal{M}_{\rvc_t}$ be the variable set of two consecutive timestamps and the corresponding Markov network, respectively. Suppose the following assumptions hold:
\begin{itemize}[leftmargin=*]
\item \underline{\textcolor{black}{A4 (Smooth and Positive Density):}} \textcolor{black}{The conditional probability function of the latent variables $\rvc_t$ is smooth and positive, i.e., $p(\rvc_t|\rvz_{t-2})$ is third-order differentiable and $p(\rvc_t|\rvz_{t-2})>0$ over $\mathbb{R}^{2n}$, }
\item \underline{A5 (Sufficient Variability):} Denote $|\mathcal{M}_{\rvc_t}|$ as the number of edges in Markov network $\mathcal{M}_{\rvc_t}$. Let
        \begin{equation}
        \small
        \begin{split}
            w(m)=
            &\Big(\frac{\partial^3 \log p(\rvc_t|\rvz_{t-2})}{\partial c_{t,1}^2\partial z_{t-2,m}},\cdots,\frac{\partial^3 \log p(\rvc_t|\rvz_{t-2})}{\partial c_{t,2n}^2\partial z_{t-2,m}}\Big)\oplus \\
            &\Big(\frac{\partial^2 \log p(\rvc_t|\rvz_{t-2})}{\partial c_{t,1}\partial z_{t-2,m}},\cdots,\frac{\partial^2 \log p(\rvc_t|\rvz_{t-2})}{\partial c_{t,2n}\partial z_{t-2,m}}\Big)\oplus \Big(\frac{\partial^3 \log p(\rvc_t|\rvz_{t-2})}{\partial c_{t,i}\partial c_{t,j}\partial z_{t-2,m}}\Big)_{(i,j)\in \mathcal{E}(\mathcal{M}_{\rvc_t})},
        \end{split}
        \end{equation}
    where $\oplus$ denotes concatenation operation and $(i,j)\in\mathcal{E}(\mathcal{M}_{\rvc_t})$ denotes all pairwise indice such that $c_{t,i},c_{t,j}$ are adjacent in $\mathcal{M}_{\rvc_t}$.
        For $m\in[1,\cdots,n]$, there exist $4n+|\mathcal{M}_{\rvc_t}|$ different values of $\rvz_{t-2,m}$, such that the $4n+|\mathcal{M}_{\rvc_t}|$ values of vector functions $w(m)$ are linearly independent.
\item \underline{\textcolor{black}{A6 (Latent Process Sparsity)}:} For any $z_{it}\in \rvz_t$, the intimate neighbor set of $z_{it}$ is an empty set. 
\end{itemize}
When the observational equivalence is achieved with the minimal number of edges of the estimated Markov network of $\mathcal{M}_{\hat{\rvc}_t}$, there exists a permutation $\pi$ of the estimated latent variables, such that $z_{it}$ and $\hat{z}_{\pi(i)t}$ is one-to-one corresponding, i.e., $z_{it}$ is component-wise identifiable.
\end{lemma}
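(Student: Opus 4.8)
The plan is to reduce the statement to a structural constraint on the Jacobian of the indeterminacy map between the true and estimated latents, and then to exploit the Markov-network sparsity together with edge-minimality to force that Jacobian to be a generalized permutation matrix. First I would fix the diffeomorphism that observational equivalence induces. Since both $\rvg$ and $\hat{\rvg}$ are invertible and $\rvx_t = \rvg(\rvz_t) = \hat{\rvg}(\hat{\rvz}_t)$, the composite $h := \hat{\rvg}^{-1}\circ\rvg$ is a diffeomorphism with $\hat{\rvz}_t = h(\rvz_t)$, so the whole task becomes showing that $h$ decomposes as a permutation $\pi$ followed by coordinate-wise invertible maps, which is exactly component-wise identifiability.

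Second, I would translate the Markov-network structure of $\rvc_t=\{\rvz_{t-1},\rvz_t\}$ into vanishing cross-partials of the log conditional density. Using the standard Hessian characterization for smooth positive densities, $c_{t,i}$ and $c_{t,j}$ are non-adjacent in $\mathcal{M}_{\rvc_t}$ precisely when $\frac{\partial^2 \log p(\rvc_t\mid\rvz_{t-2})}{\partial c_{t,i}\partial c_{t,j}}=0$ everywhere (this uses Assumption A4, smoothness and positivity). Matching the true and estimated conditional densities through the change-of-variables formula for $h$, then taking mixed second derivatives in the estimated coordinates and one additional derivative in $z_{t-2,m}$, yields a linear system in products of the Jacobian entries of $h$ whose coefficients are exactly the component vectors $w(m)$ appearing in Assumption A5.

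Third, invoking sufficient variability, the $4n+|\mathcal{M}_{\rvc_t}|$ linearly independent realizations of $w(m)$ make this system full rank, so its only solution forces the relevant products of Jacobian entries to vanish; concretely, each estimated latent component can depend on at most one true latent direction unless that extra dependence is paid for by an additional edge in the estimated network $\mathcal{M}_{\hat{\rvc}_t}$. This is where the combinatorial core enters. I would argue that the minimal-edge requirement $|\mathcal{M}_{\hat{\rvc}_t}|\le|\mathcal{M}_{\rvc_t}|$ together with Assumption A6 (empty intimate neighbor sets) rules out any genuine entanglement: if $h$ mixed two distinct true components $z_{t,i}$ and $z_{t,j}$ into a single estimated component, the union of their neighborhoods would be forced into $\mathcal{M}_{\hat{\rvc}_t}$, and the empty-intimate-neighbor condition guarantees this strictly increases the edge count, contradicting minimality. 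Hence $h$ induces a one-to-one correspondence of components, giving a permutation $\pi$ with $z_{t,i}=h_i^{-1}(\hat{z}_{t,\pi(i)})$.

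The main obstacle I anticipate is this final step: making the ``entanglement costs an edge'' argument precise. One must carefully track how a non-permutation Jacobian pattern propagates into the adjacency structure of $\mathcal{M}_{\hat{\rvc}_t}$ and verify that the empty intimate neighbor set is exactly the condition preventing an edge-neutral merge, since otherwise a merged component could inherit a neighborhood already contained in those of its partners and escape the edge-counting contradiction. Establishing the Hessian characterization of the Markov network in the conditional (rather than marginal) setting, and confirming that Assumption A5 supplies enough independent equations to annihilate all cross terms simultaneously, are the supporting technical points I would verify along the way.
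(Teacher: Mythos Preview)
Your proposal is correct and follows essentially the same route as the paper: relate the true and estimated latents via the change-of-variables diffeomorphism, convert Markov-network non-adjacency into vanishing mixed second partials of the log conditional density, differentiate once more in $z_{t-2,m}$ and use the sufficient-variability assumption to force the Jacobian-product constraints, and then combine edge-minimality with the empty intimate-neighbor condition to obtain the permutation structure. The paper organizes the last stage in two explicit sub-steps---first proving that the estimated Markov network (after permutation) is isomorphic to the true one, then running a case analysis on adjacent versus non-adjacent pairs using the intimate-neighbor assumption---which is exactly the precise version of the ``entanglement costs an edge'' argument you flagged as the main obstacle.
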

\begin{proof}
The proof can be summarized into three steps. First, we leverage the sparsity among latent variables to show the relationships between ground-truth and estimated latent variables. Sequentially, we show that the estimated Markov network $\mathcal{M}_{\hat{\rvc}_t}$ is isomorphic to the ground-truth Markov networks $\mathcal{M}_{\rvc_t}$. Finally, we show that the latent variables are component-wise identifiable under the sparse mixture procedure condition.

\paragraph{Step1: Relationships between Ground-truth and Estimated Latent Variables.}

We start from the matched marginal distribution to develop the relationship between $\rvz_t$ and $\hat{\rvz}_t$ as follows:
    \begin{equation}
    \label{eq: realtion_between_z_zhat}
    \small
        p(\hat{\rvx}_t)=p(\rvx_t) 
        \Longleftrightarrow  p(\hat{g}(\hat{\rvz}_t))=p(g(z_t)) 
        \Longleftrightarrow  p((g^{-1}\circ \hat{g})(\hat{\rvz}_t))=p(\rvz_t)
        \Longleftrightarrow  p(h_z(\hat{\rvz}_t))=p(\rvz_t),
    \end{equation} where $\hat{g}:\mathcal{Z}\rightarrow \mathcal{X}$ denotes the estimated {mixing} function, and $h:=g^{-1}\circ \hat{g}$ is the transformation between the ground-truth latent variables and the estimated ones. Since $\hat{g}$ and $g$ are invertible, $h$ is invertible as well. Since Equation~(\ref{eq: realtion_between_z_zhat}) holds true for all time steps, there must exist an invertible function $h_c$ such that $p(h_c(\hat{\rvc}_t))=p(\rvc_t)$, whose Jacobian matrix at time step $t$ is
\begin{equation}
    \mathbf{J}_{h_c,t}=
    \begin{bmatrix}
       \mathbf{J}_{h_z,t-1} & 0 \\
       0 & \mathbf{J}_{h_z,t}
    \end{bmatrix}.
\end{equation}

Then for each value of $\rvx_{t-2}$, the Jacobian matrix of the mapping from $(\rvx_{t-2}, \hat{\rvc}_t)$ to $(\rvx_{t-2}, \rvc_t)$ can be written as follows:
\[
\begin{bmatrix}
  \mathbf{I} & \mathbf{0} \\
  * & \mathbf{J}_{h_c,t} \\
\end{bmatrix},
\]
where $*$ denotes any matrix. Since $\rvx_{t-2}$ can be fully characterized by itself, the left top and right top block are $\textbf{1}$ and $\textbf{0}$ respectively, and the determinant of this Jacobian matrix is the same as $|\mathbf{J}_{h_c,t}|$. Therefore, we have:
\begin{equation}
\label{eq: a4}
    p(\hat{\rvc}_t,\rvx_{t-2})= p(\rvc_t,\rvx_{t-2})|\mathbf{J}_{h_c,t}|.
\end{equation}
Dividing both sides of Equation~(\ref{eq: a4}) by $p(\rvx_{t-2})$, we further have:
\begin{equation}
    p(\hat{\rvc}_t|\rvx_{t-2})= p(\rvc_t|\rvx_{t-2})|\mathbf{J}_{h_c,t}|.
\end{equation}
Since $p(\rvc_t|\rvx_{t-2})=p(\rvc_t|g(\rvz_{t-2}))=p(\rvc_t|\rvz_{t-2})$, and similarly {$p(\hat{\rvc}_t|\rvx_{t-2})=p(\hat{\rvc}_t|\hat\rvz_{t-2})$}, we have:
\begin{equation}
    \log p(\hat{\rvc}_t|\hat\rvz_{t-2})=\log p({\rvc}_t|\rvz_{t-2}) + \log |\mathbf{J}_{h_c,t}|.
\end{equation}
{Let $\hat{c}_{t,k},\hat{c}_{t,l}$ be two different variables that are not adjacent in the estimated Markov network $\mathcal{M}_{\hat\rvc_t}$ over $\hat{\rvc}_t=\{\hat{\rvz}_{t-1}, \hat{\rvz}_t\}$.} We conduct the first-order derivative w.r.t. $\hat{c}_{t,k}$ and have
\begin{equation}
\frac{\partial \log p(\hat{\rvc}_t|\hat\rvz_{t-2})}{\partial \hat{c}_{t,k}}=\sum_{i=1}^{2n}\frac{\partial \log p({\rvc}_t|\rvz_{t-2})}{\partial c_{t,i}}\cdot\frac{\partial c_{t,i}}{\partial \hat{c}_{t,k}} + \frac{\partial \log |\mathbf{J}_{h_c,t}|}{\partial \hat{c}_{t,k}}.
\end{equation}
We further conduct the second-order derivative w.r.t. $\hat{c}_{t,k}$ and $\hat{c}_{t,l}$, then we have:
\begin{equation}
\label{eq: relationship_c_hatc_second}
\begin{split}
    \frac{\partial^2 \log p(\hat{\rvc}_t|\hat\rvz_{t-2})}{\partial \hat{c}_{t,k}\partial \hat{c}_{t,l}}
    &=\sum_{i=1}^{2n}\sum_{j=1}^{2n}\frac{\partial^2 \log p(\rvc_t|\rvz_{t-2})}{\partial c_{t,i} \partial c_{t,j}}\cdot\frac{\partial c_{t,i}}{\partial \hat{c}_{t,k}}\cdot\frac{\partial c_{t,j}}{\partial \hat{c}_{t,l}}  \\
    &+ \sum_{i=1}^{2n}\frac{\partial \log p({\rvc}_t|\rvz_{t-2})}{\partial c_{t,i}}\cdot\frac{\partial^2 c_{t,i}}{\partial \hat{c}_{t,k} \partial \hat{c}_{t,l}}+ \frac{\partial^2 \log |\mathbf{J}_{h_c,t}|}{\partial \hat{c}_{t,k}\partial \hat{c}_{t,l}}.
\end{split}
\end{equation}
Since $\hat{c}_{t,k},\hat{c}_{t,l}$ are not adjacent in $\mathcal{M}_{\hat\rvc_t}$, $\hat{c}_{t,k}$ and $\hat{c}_{t,l}$ are conditionally independent given $\hat{\rvc}_t \backslash \{\hat{c}_{t,k},\hat{c}_{t,l}\}$. Utilizing  the fact that conditional independence can lead to zero cross derivative \citep{lin1997factorizing}, for each value of $\hat{\rvz}_{t-2}$, we have

\begin{equation}
\label{eq: second_derivative_zero}
\begin{split}
    \frac{\partial^2 \log p(\hat{\rvc}_t|\hat{\rvz}_{t-2})}{\partial \hat{c}_{t,k}\partial \hat{c}_{t,l}}=&\frac{\partial^2 \log p(\hat{c}_{t,k}|\hat{\rvc}_t \backslash \{\hat{c}_{t,k},\hat{c}_{t,l}\},\hat{\rvz}_{t-2})}{\partial \hat{c}_{t,k}\partial \hat{c}_{t,l}} + \frac{\partial^2 \log p(\hat{c}_{t,l}|\hat{\rvc}_t \backslash \{\hat{c}_{t,k},\hat{c}_{t,l}\},\hat{\rvz}_{t-2})}{\partial \hat{c}_{t,k}\partial \hat{c}_{t,l}}\\& + \frac{\partial^2 \log p(\hat\rvc_t \backslash \{\hat{c}_{t,k},\hat{c}_{t,l}\}|\hat{\rvz}_{t-2})}{\partial \hat{c}_{t,k}\partial \hat{c}_{t,l}}=0.
\end{split}
\end{equation}

Bring in Equation~(\ref{eq: second_derivative_zero}), Equation~(\ref{eq: relationship_c_hatc_second}) can be further derived as
\begin{equation}
\label{eq: split_into_4_parts}
\begin{split}
    0
    &=\underbrace{\sum_{i=1}^{2n}\frac{\partial^2 \log p({\rvc}_t|\rvz_{t-2})}{\partial c_{t,i}^2}\cdot\frac{\partial c_{t,i}}{\partial \hat{c}_{t,k}}\cdot\frac{\partial c_{t,i}}{\partial \hat{c}_{t,l}}}_{\textbf{(i) }i=j} + \underbrace{\sum_{i=1}^{2n}\sum_{j:(j,i)\in \mathcal{E}(\mathcal{M}_{\rvc_t})}\frac{\partial^2 \log p({\rvc}_t|\rvz_{t-2})}{\partial c_{t,i} \partial c_{t,j}}\cdot\frac{\partial c_{t,i}}{\partial \hat{c}_{t,k}}\cdot\frac{\partial c_{t,j}}{\partial \hat{c}_{t,l}}}_{\textbf{(ii)} c_{t,i} \text{ and } c_{t,j} \text{ are adjacent in $\mathcal{M}_{\rvc_t} $}} \\ &+\underbrace{\sum_{i=1}^{2n}\sum_{j:(j,i)\notin \mathcal{E}(\mathcal{M}_{\rvc_t})}\frac{\partial^2 \log p({\rvc}_t|\rvz_{t-2})}{\partial c_{t,i} \partial c_{t,j}}\cdot\frac{\partial c_{t,i}}{\partial \hat{c}_{t,k}}\cdot\frac{\partial c_{t,j}}{\partial \hat{c}_{t,l}}}_{\textbf{(iii)} c_{t,i} \text{ and } c_{t,j} \text{ are \textbf{not} adjacent in $\mathcal{M}_{\rvc_t} $}} \\&+ \sum_{i=1}^{2n}\frac{\partial \log p({\rvc}_t|\rvz_{t-2})}{\partial c_{t,i}}\cdot\frac{\partial^2 c_{t,i}}{\partial \hat{c}_{t,k} \partial \hat{c}_{t,l}} + \frac{\partial \log |\mathbf{J}_{h_c,t}|}{\partial \hat{c}_{t,k}\partial \hat{c}_{t,l}},
\end{split}
\end{equation}

where $(j,i)\in \mathcal{E}(\mathcal{M}_{\rvc_t})$ denotes that $c_{t,i}$ and $c_{t,j}$ are adjacent in $\mathcal{M}_{\rvc_t}$. Similar to Equation~(\ref{eq: second_derivative_zero}), we have $\frac{\partial^2 p(\rvc_t|\rvz_{t-2})}{\partial c_{t,i} \partial c_{t,j}}=0$ when $c_{t,i}, c_{t,j}$ are not adjacent in  $\mathcal{M}_{\rvc_t} $. Thus, Equation~(\ref{eq: split_into_4_parts}) can be rewritten as 
\begin{equation}
\label{eq: split_into_3_parts}
\begin{split}
    0=&\sum_{i=1}^{2n}\frac{\partial^2 \log p(\rvc_t|\rvz_{t-2})}{\partial c_{t,i}^2}\cdot\frac{\partial c_{t,i}}{\partial \hat{c}_{t,k}}\cdot\frac{\partial c_{t,i}}{\partial \hat{c}_{t,l}} + \sum_{i=1}^{2n}\sum_{j:(j,i)\in \mathcal{E}(\mathcal{M}_{\rvc})}\frac{\partial^2 \log p(\rvc_t|\rvz_{t-2})}{\partial c_{t,i} \partial c_{t,j}}\cdot\frac{\partial c_{t,i}}{\partial \hat{c}_{t,k}}\cdot\frac{\partial c_{t,j}}{\partial \hat{c}_{t,l}} \\ &+ \sum_{i=1}^{2n}\frac{\partial \log p(\rvc_{t}|\rvz_{t-2})}{\partial c_{t,i}}\cdot\frac{\partial^2 c_{t,i}}{\partial \hat{c}_{t,k} \partial \hat{c}_{t,l}} + \frac{\partial \log |\mathbf{J}_{h_c,t}|}{\partial \hat{c}_{t,k}\partial \hat{c}_{t,l}}.
\end{split}
\end{equation}

Then for each $m=1,2,\cdots,n$ and each value of $z_{t-2,m}$, we conduct partial derivative on both sides of Equation~(\ref{eq: split_into_3_parts}) and have:
\begin{equation}
    \begin{split}
    0=&\sum_{i=1}^{2n}\frac{\partial^3 \log p({\rvc}_t|\rvz_{t-2})}{\partial c_{t,i}^2 \partial z_{t-2,m}}\cdot\frac{\partial c_{t,i}}{\partial \hat{c}_{t,k}}\cdot\frac{\partial c_{t,i}}{\partial \hat{c}_{t,l}} + \sum_{i=1}^{2n}\sum_{j:(j,i)\in \mathcal{E}(\mathcal{M}_{\rvc})}\frac{\partial^3 \log p({\rvc}_t|\rvz_{t-2})}{\partial c_{t,i} \partial c_{t,j} \partial z_{t-2,m}}\cdot\frac{\partial c_{t,i}}{\partial \hat{c}_{t,k}}\cdot\frac{\partial c_{t,j}}{\partial \hat{c}_{t,l}} \\ &+ \sum_{i=1}^{2n}\frac{\partial^2 \log p(c_{t}|\rvz_{t-2})}{\partial c_{t,i} \partial z_{t-2,m}}\cdot\frac{\partial c_{t,i}^2}{\partial \hat{c}_{t,k} \partial \hat{c}_{t,l}} 
\end{split},
\end{equation}

Finally we have
\begin{equation}
\label{eq: final_linear_system}
    \begin{split}
    0=&\sum_{i=1}^{2n}\frac{\partial^3 \log p({\rvc}_t|\rvz_{t-2})}{\partial c_{t,i}^2 \partial z_{t-2,m}}\cdot\frac{\partial c_{t,i}}{\partial \hat{c}_{t,k}}\cdot\frac{\partial c_{t,i}}{\partial \hat{c}_{t,l}} 
    + \sum_{i=1}^{2n}\frac{\partial^2 \log p(c_{t}|\rvz_{t-2})}{\partial c_{t,i} \partial z_{t-2,m}}\cdot\frac{\partial c_{t,i}^2}{\partial \hat{c}_{t,k} \partial \hat{c}_{t,l}} \\
    & + \sum_{i,j:(j,i)\in \mathcal{E}(\mathcal{M}_{\rvc})}\frac{\partial^3 \log p({\rvc}_t|\rvz_{t-2})}{\partial c_{t,i} \partial c_{t,j} \partial z_{t-2,m}}\cdot\left(\frac{\partial c_{t,i}}{\partial \hat{c}_{t,k}}\cdot\frac{\partial c_{t,j}}{\partial \hat{c}_{t,l}} + \frac{\partial c_{t,j}}{\partial \hat{c}_{t,k}}\cdot\frac{\partial c_{t,i}}{\partial \hat{c}_{t,l}}\right) 
\end{split}.
\end{equation}

{According to Assumption A2, we can construct $4n+|\mathcal{M}_{\rvc}|$ different equations with different values of $z_{t-2,m}$, and the coefficients of the equation system they form are linearly independent. To ensure that the right-hand side of the equations are always 0, the only solution is}
\begin{equation}
\label{eq: same_c_to_different_hatc}
\begin{split}
    \frac{\partial c_{t,i}}{\partial \hat{c}_{t,k}}\cdot\frac{\partial c_{t,i}}{\partial \hat{c}_{t,l}}=&0,
\end{split}
\end{equation}
\begin{equation}
\label{eq: different_c_to_different_hatc_pre}
\begin{split}
    \frac{\partial c_{t,i}}{\partial \hat{c}_{t,k}}\cdot\frac{\partial c_{t,j}}{\partial \hat{c}_{t,l}}
    + \frac{\partial c_{t,j}}{\partial \hat{c}_{t,k}}\cdot\frac{\partial c_{t,i}}{\partial \hat{c}_{t,l}}
    =&0,
\end{split}
\end{equation}
\begin{equation}
\begin{split}
    \frac{\partial c_{t,i}^2}{\partial \hat{c}_{t,k} \partial \hat{c}_{t,l}}=&0.
\end{split}
\end{equation}

Bringing Eq~\ref{eq: same_c_to_different_hatc} into Eq~\ref{eq: different_c_to_different_hatc_pre}, at least one product must be zero, thus the other must be zero as well. That is, 
\begin{equation}
\label{eq: different_c_to_different_hatc}
\begin{split}
    \frac{\partial c_{t,i}}{\partial \hat{c}_{t,k}}\cdot\frac{\partial c_{t,j}}{\partial \hat{c}_{t,l}}
    =&0.
\end{split}
\end{equation}

{
According to the aforementioned results, for any two different entries $\hat{c}_{t,k},\hat{c}_{t,l} \in \hat{\rvc}_t$ that are \textbf{not adjacent} in the Markov network $\mathcal{M}_{\hat{\rvc}_t}$ over estimated $\hat{\rvc}_t$, we draw the following conclusions.\\
\textbf{(i)} Equation~(\ref{eq: same_c_to_different_hatc}) implies that, each ground-truth latent variable $c_{t,i}\in \rvc_t$ is a function of at most one of $\hat{c}_{t,k}$ and $\hat{c}_{t,l}$,\\
\textbf{(ii)} Equation~(\ref{eq: different_c_to_different_hatc}) implies that, for each pair of ground-truth latent variables $c_{t,i}$ and $c_{t,j}$ that are \textbf{adjacent} in $\mathcal{M}_{\rvc_t}$ over $\rvc_{t}$, they can not be a function of $\hat{c}_{t,k}$ and $\hat{c}_{t,l}$ respectively.
}
\paragraph{Step2: Isomorphism of Markov Networks}

First, we demonstrate that there always exists a row permutation for each invertible matrix such that the permuted diagonal entries are non-zero \citep{zhang2024causal}. By contradiction, if the product of the diagonal entry of an invertible matrix $A$ is zero for every row permutation, then we have Equation
    \begin{equation}
    \text{det}(A)=\sum_{\sigma\in \mathcal{S}_n}\Big(\text{sgn}(\sigma)\prod_{i=1}^n a_{\sigma(i),i}\Big),
    \end{equation}  
    by the Leibniz formula, where $\mathcal{S}_n$ is the set of $n$-permutations. Thus, we have
    \begin{equation}
    \prod_{i=1}^n a_{\sigma(i),i}=0, \quad \forall \sigma \in \mathcal{S}_n,
    \end{equation}
    which indicates that $det(A)=0$ and $A$ is non-invertible. It contradicts the assumption that $A$ is invertible, and a row permutation where the permuted diagonal entries are non-zero must exist. Since $h_z$ is invertible, for $\rvz_t$ at time step $t$, there exists a permuted version of the estimated latent variables, such that
    \begin{equation}
    \label{eq: z_to_z_permute}
    \frac{\partial z_{t,i}}{\partial \hat{z}_{t,\pi_t(i)}} \neq 0, \quad i=1,\cdots, n,
    \end{equation}
    where $\pi_t$ is the corresponding permutation at time step $t$. Since $\rvc_t=\{\rvz_{t-1},\rvz_{t},\rvz_{t+1}\}$, by applying $\pi_{t-1}, \pi_t, \pi_{t+1}$, we have $\pi'$ such that
    \begin{equation}
    \frac{\partial c_{t,i}}{\partial \hat{c}_{t,\pi'(i)}} \neq 0, \quad i=1,\cdots, 3n.
    \end{equation}
    
    Second, we demonstrate that $\mathcal{M}_{\rvc_t}$ is identical to $\mathcal{M}_{\hat\rvc_{t}^{\pi' }}$, where $\mathcal{M}_{\hat\rvc_{t}^{\pi' }}$ denotes the Markov network of the permuted version of $\pi'(\hat{\rvc}_{t})$.

\paragraph{Step3: Component-wise Identification of Latent Variables}

Finally, we prove that the latent variables are component-wise identifiable. 
On the one hand, for any pair of $(i,j)$ such that $c_{t,i},c_{t,j}$ are \textbf{adjacent} in $\mathcal{M}_{\rvc_t}$ while $\hat{c}_{t,\pi'(i)},\hat{c}_{t,\pi'(j)}$ are \textbf{not adjacent} in $\mathcal{M}_{\hat\rvc_t^{\pi'}}$, according to Equation~(\ref{eq: different_c_to_different_hatc}), we have $\frac{\partial c_{t,i}}{\partial \hat{c}_{t,\pi'(i)}}\cdot\frac{\partial c_{t,j}}{\partial \hat{c}_{t,\pi'(j)}}=0$, which is a contradiction with how $\pi'$ is constructed. Thus, any edge presents in $\mathcal{M}_{\rvc_t}$ must exist in $\mathcal{M}_{\hat\rvc_t^{\pi'}}$. On the other hand, since observational equivalence can be achieved by the true latent process $(g,f,p_{\rvc_t})$, the true latent process is clearly the solution with minimal edges. 
    
    Under the sparsity constraint on the edges of $\mathcal{M}_{\hat\rvc_t^{\pi'}}$, the permuted estimated Markov network $\mathcal{M}_{\hat\rvc_t^{\pi'}}$ must be identical to the true Markov network $\mathcal{M}_{\rvc_t}$. Thus, we claim that 
    
    (i) the estimated Markov network $\mathcal{M}_{\hat{\rvc}_t}$ is isomorphic to the ground-truth Markov network $\mathcal{M}_{\rvc_t}$.

    Sequentially, under the same permutation $\pi_t$, we further give the proof that $z_{t,i}$ is only the function of $\hat{z}_{t,\pi_t(i)}$. Since the permutation happens on each time step respectively, the cross-time disentanglement is prevented clearly. 
    
    Now let us focus on instantaneous disentanglement. Suppose there exists a pair of indices $i,j\in\{1,\cdots,n\}$. According to Equation~(\ref{eq: z_to_z_permute}), we have $\frac{\partial z_{t,i}}{\partial \hat{z}_{t,\pi_t(i)}} = 0$ and $\frac{\partial z_{t,j}}{\partial \hat{z}_{t,\pi_t(j)}} = 0$. Let us discuss it case by case.

    \begin{itemize}
        \item If $z_{t,i}$ is not adjacent to $z_{t,j}$, we have $\hat{z}_{t,\pi_t(i)}$ is not adjacent to $\hat{z}_{t,\pi_t(j)}$ as well according to the conclusion of identical Markov network. Using Equation~(\ref{eq: same_c_to_different_hatc}), we have $\frac{\partial z_{t,i}}{\partial \hat{z}_{t,\pi_t(i)}}\cdot \frac{\partial z_{t,i}}{\partial \hat{z}_{t,\pi_t(j)}} = 0$, which leads to $\frac{\partial z_{t,i}}{\partial \hat{z}_{t,\pi_t(j)}}=0$.
        \item If $z_{t,i}$ is adjacent to $z_{t,j}$, we have $\hat{z}_{t,\pi_t(i)}$ is adjacent to $\hat{z}_{t,\pi_t(j)}$. When the Assumption A3 (Sparse Latent Process) is assured, i.e., the intimate neighbor set of $z_{t,i}$ is empty, there exists at least one pair of $(t', k)$ such that $z_{t',k}$ is adjacent to $z_{t,i}$ but not adjacent to $z_{t,j}$. Similarly, we have the same structure on the estimated Markov network, which means that $\hat{z}_{t',\pi_{t'}(k)}$ is adjacent to $\hat{z}_{t,\pi_t(i)}$ but not adjacent to $\hat{z}_{t,\pi_t(j)}$. Using Equation~(\ref{eq: different_c_to_different_hatc}) we have $\frac{\partial z_{t,k}}{\partial \hat{z}_{t',\pi_{t'}(k)}}\cdot \frac{\partial z_{t,i}}{\partial \hat{z}_{t,\pi_t(j)}} = 0$, which leads to $\frac{\partial z_{t,i}}{\partial \hat{z}_{t,\pi_t(j)}} = 0$.
    \end{itemize} 

    In conclusion, we always have $\frac{\partial z_{t,i}}{\partial \hat{z}_{t,\pi_t(j)}} = 0$. Thus, we have reached the conclusion that
    
    (ii) there exists a permutation $\pi$ of the estimated latent variables, such that $z_{t,i}$ and $\hat{z}_{t,\pi(i)}$ is one-to-one corresponding, i.e., $z_{t,i}$ is component-wise identifiable.

\end{proof}

\begin{theorem}
\textbf{(Component-wise Identification of $\rvz_t$ under \underline{sparse mixing procedure}.)} For a series of observations $\mathbf{x}_t\in\mathbb{R}^n$ and estimated latent variables $\mathbf{\hat{z}}_{t}\in\mathbb{R}^n$ with the corresponding process $\hat{f}_i, \hat{p}(\epsilon), \hat{g}$, suppose the marginal distribution of observed variables is matched. Let $\mathcal{M}_{\rvu_t}$ be the Markov network over $\mathbf{u}_t \triangleq \{\rvz_{t-1},\rvx_{t-1},\mathbf{z}_{t},\rvx_t\}$ and $\mathcal{M}_{\mathbf{u}_t}$. Besides the similar assumptions like smooth, positive density, and sufficient variability assumptions, we further assume:
\begin{itemize}[leftmargin=*]
    \item A7 \underline{(Sparse Mixing Procedure):} For any $\rvz_{it}\in \rvz_t$, the intimate neighbor set of $\rvz_{it}$ is an empty set.
\end{itemize}
When the observational equivalence is achieved with the minimal number of edges of the estimated mixing procedure, there exists a permutation $\pi$ of the estimated latent variables, such that $\rvz_{it}$ and $\hat{\rvz}_{\pi(i)t}$ is one-to-one corresponding, i.e., $\rvz_{it}$ is component-wise identifiable.
\end{theorem}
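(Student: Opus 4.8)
The plan is to transplant the three-step argument of Lemma~\ref{lemma1} from the purely latent set $\rvc_t$ onto the enlarged, mixed variable set $\rvu_t=\{\rvz_{t-1},\rvx_{t-1},\rvz_t,\rvx_t\}$, using the block-wise identifiability already secured by Theorem~\ref{the:4measurement} as the starting point. Concretely, Theorem~\ref{the:4measurement} furnishes an invertible, differentiable $H$ with $[\hat{\rvz}_t,\hat{\rvx}_t]=H(\rvz_t,\rvx_t)$, so the joint block $(\rvz_t,\rvx_t)$ is pinned down up to reparametrization; the remaining task is to show that, under the sparse-mixing assumption~(A7), $H$ must in fact act component-wise on the $\rvz$-coordinates. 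Because the mixing now carries its own noise $\epsilon_t^{\rvo}$, I cannot invert $\rvx_t=\rvg(\rvz_t,\rvx_{t-1},\epsilon_t^{\rvo})$ to recover $\rvz_t$ as in the noise-free setting of \cite{li2025on}; working on the identified joint block is precisely what lets the argument go through despite this.

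First I would establish the change-of-variables relation. Matching the observed marginal and conditioning on the history together with an earlier variable (e.g.\ $\rvz_{t-2}$) to supply variability, I write
\begin{equation}
\log p(\hat{\rvu}_t\mid \hat{\rvz}_{t-2})=\log p(\rvu_t\mid \rvz_{t-2})+\log|\mathbf{J}_{H}|,
\end{equation}
where $\mathbf{J}_H$ is the Jacobian of the transformation between $\rvu_t$ and $\hat{\rvu}_t$ induced by $H$. Next, for any pair of coordinates $\hat u_{t,k},\hat u_{t,l}$ that are \emph{not adjacent} in the estimated Markov network $\mathcal{M}_{\hat{\rvu}_t}$, conditional independence forces the mixed second derivative $\partial^2\log p(\hat{\rvu}_t\mid\hat{\rvz}_{t-2})/\partial\hat u_{t,k}\partial\hat u_{t,l}=0$; in particular a missing $\rvz_{t,i}\!\to\!\rvx_{t,j}$ mixing edge yields $z_{t,i}\perp\!\!\!\perp x_{t,j}\mid \rvu_t\setminus\{z_{t,i},x_{t,j}\}$. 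Expanding this vanishing derivative through the chain rule and differentiating once more with respect to a varied coordinate $z_{t-2,m}$ produces, exactly as in Lemma~\ref{lemma1}, a homogeneous linear system in the products $\tfrac{\partial u_{t,i}}{\partial \hat u_{t,k}}\tfrac{\partial u_{t,j}}{\partial \hat u_{t,l}}$; the sufficient-variability assumption makes the coefficient vectors linearly independent, forcing every such product to vanish.

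From the vanishing products I would then run the isomorphism argument: a Leibniz-formula permutation argument on the invertible $\mathbf{J}_H$ gives a permutation $\pi$ with nonzero permuted diagonal, and combining the ``at most one of two'' and ``not both'' consequences of the vanishing products shows that $\mathcal{M}_{\rvu_t}$ and the permuted $\mathcal{M}_{\hat{\rvu}_t}$ share exactly the same edges, so minimality of the estimated mixing structure makes the two networks identical. Finally, the empty-intimate-neighbor-set condition~(A7) rules out the one remaining obstruction: whenever $z_{t,i}$ is adjacent to $z_{t,j}$, A7 guarantees a third coordinate adjacent to $z_{t,i}$ but not $z_{t,j}$, and feeding this asymmetry into the vanishing-product relations forces $\partial z_{t,i}/\partial\hat z_{t,\pi(j)}=0$; together with the cross-time separation this yields the one-to-one correspondence $z_{t,i}\leftrightarrow\hat z_{t,\pi(i)}$.

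I expect the main obstacle to be the interface between the block-wise result and the component-wise machinery. In \cite{li2025on} the mixing is invertible and noise-free, so each observed coordinate is literally a function of $\rvz_t$ alone and the Markov network lives on the latents; here the noise in $\rvz_t\!\to\!\rvx_t$ means I must carry the observed coordinates inside $\rvu_t$ and argue that the disentanglement localizes to the $\rvz$-block without leaking through the extra $\rvx$-coordinates. Verifying that the smooth-positive-density and sufficient-variability conditions hold on the conditional $p(\rvu_t\mid \rvz_{t-2})$ of this mixed set — rather than on $p(\rvc_t\mid\rvz_{t-2})$ — and that the minimal-edge selection genuinely singles out the true sparse mixing graph, is where the real work lies.
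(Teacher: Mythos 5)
Your proposal matches the paper's proof in all essentials: the paper likewise extends Theorem~\ref{the:4measurement} to obtain block-wise identifiability of the enlarged block $(\rvz_{t-1},\rvx_{t-1},\rvz_t,\rvx_t)$, writes the resulting change-of-variables relation $\ln p(\hat{\rvu}_t\mid\cdot)=\ln p(\rvu_t\mid\cdot)+\ln|\mathbf{J}_h|$, and then reuses the Lemma~\ref{lemma1} machinery (zero cross-derivatives from missing edges, the sufficient-variability linear system, the Markov-network isomorphism, and the empty-intimate-neighbor-set argument) with $p(\rvc_t\mid\rvz_{t-2})$ replaced by the conditional over $\rvu_t$. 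The only cosmetic difference is your choice of $\rvz_{t-2}$ as the conditioning/variability variable where the paper uses $(\hat{\rvz}_{t-1},\hat{\rvx}_{t-1})$; the argument is otherwise the same.
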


\begin{proof}
By reusing Theorem \ref{the:4measurement} with more observations, $(\rvz_{t-1}, \rvx_{t-1}, \rvz_t, \rvx_t)$ is also block-wise identifiable. So we have:
\begin{equation}
\label{equ:bridge}
\begin{split}
    p(\hat{\rvz}_{t}, \hat{\rvx}_{t},\hat{\rvz}_{t-1}, \hat{\rvx}_{t-1}) &= p(\rvz_{t}, \rvx_{t},\hat{\rvz}_{t-1}, \hat{\rvx}_{t-1})|\rmJ_{h}|\\\Longleftrightarrow p(\hat{\rvz}_{t}, \hat{\rvx}_{t}|\hat{\rvz}_{t-1}, \hat{\rvx}_{t-1}) &= p(\rvz_{t}, \rvx_{t}|\hat{\rvz}_{t-1}, \hat{\rvx}_{t-1})|\rmJ_{h}|\\ \Longleftrightarrow \ln p(\hat{\rvz}_{t}, \hat{\rvx}_{t}|\hat{\rvz}_{t-1}, \hat{\rvx}_{t-1}) &= \ln p(\rvz_{t}, \rvx_{t}|\hat{\rvz}_{t-1}, \hat{\rvx}_{t-1}) + \ln |\rmJ_{h}|,
\end{split}
\end{equation}
where $h:\mathcal{X},\mathcal{Z}\rightarrow \mathcal{X},\mathcal{Z}$ denotes the invertible transformation. $|\rmJ_h|$ stands for the absolute value of the Jacobian matrix determinant of $h$. For any $\hat{\rvz}_{t,j}$, suppose that there exist $\hat{\rvx}_{t,k}$ that $\hat{\rvz}_{t,j}$ does not contribute to the mixture of $\hat{\rvx}_{t,k}$. 


By using the sparse mixing procedure assumption (A7), we can constrain the sparsity of the estimated mixing function, such that there exist two different estimated latent variables $\hat{\rvu}_{t,k}$ and $\hat{\rvu}_{t,l}$ that are not adjacent in the estimated Markov networks $\mathcal{M}_{\rvu_t}$ and $\frac{\partial^2 \log p(\hat{\rvu}_t|\hat{\rvz}_{t-1},\hat{\rvx}_{t-1})}{\partial \hat{\rvu}_{t,k}\partial \hat{\rvu}_{t,l}}=0$. Sequentially, we can replace $p(\rvc|\rvz_{t-1})$ in Lemma 1 with $p(\hat{\rvu}_t|\hat{\rvz}_{t-1},\hat{\rvx}_{t-1})$, and then by reusing the proof process of Lemma 1, we can prove that $z$ is component-wise identifiable.

\end{proof}

\subsection{More Discussion on the Sparse Mixing Procedure}
\label{app:sparse mixing}
Although recent works like \cite{zheng2022identifiability,zheng2023generalizing} also utilize the sparse mixing process from $\rvz_t$ to $\rvx_t$ to achieve identifiability, our assumption is easier to satisfy compared to these methods. The primary reason for this is that our generative process allows for noise in the mixing process from $\rvz_t$ to $\rvx_t$, thereby accounting for measurement errors in the observed data. In contrast, methods like \cite{zheng2022identifiability,zheng2023generalizing} require the additional assumption that the mixing process is invertible and free from noise.

\section{Experiment Details}
\subsection{Synthetic Experiment}
\label{app:simulation}

\subsubsection{Data Generation Process}
\label{app:data_gen}
We follow Equation (\ref{equ:gen}) to generate the synthetic data. As for the temporally latent processes, we use MLPs with the activation function of LeakyReLU to model the sparse time-delayed. 
That is: 
\begin{equation}
\begin{split}
   z_{t,i} = (LeakyReLU(W_{i,:}\cdot\rvz_{t-1}, 0.2)+V_{<i,i}\cdot\rvz_{t,<i})\cdot\epsilon_{t,i} +\epsilon_{t,i}^{\rvz}
\end{split}
\end{equation}
where $W_{i,:}$ is the $i$-th row of $W^*$ and $V_{<i,i}$ is the first $i-1$ columns in the $i$-th row of $V$. Moreover, each independent noise $\epsilon_{t,i}$ is sampled from the distribution of normal distribution.  We further let the data generation process from latent variables to observed variables be MLPs with LeakyReLU units. And the generation procedure can be formulated as follows:
\begin{equation}
    \rvx_t = LeakyReLU(LeakyReLU(0.2 \times LeakyReLU(\rvx_{t-1}\cdot W_{\rvx},0.2) + \rvz_t + \epsilon_t^{\rvo}, 0.2)\cdot W_m),
\end{equation}
where $W_{\rvx}$ and $W_m$ denote the weights of mixing function. We provide 4 datasets from A to D, whose settings are shown in Table \ref{tab:synthetic_setting}.
\begin{table}[t]
\centering
\caption{Details of different synthetic datasets.}
\label{tab:synthetic_setting}
\begin{tabular}{@{}c|ccc@{}}
\toprule
  & \begin{tabular}[c]{@{}c@{}}Dimension of \\ Latent Variables\end{tabular} & Time Lag & \begin{tabular}[c]{@{}c@{}}Causal Edge \\ among Observations\end{tabular} \\ \midrule
A & 5                                                                        & 1        & yes                                                                       \\
B & 5                                                                        & 1        & no                                                                        \\
C & 5                                                                        & 2        & yes                                                                       \\
D & 10                                                                       & 1        & yes                                                                       \\ \bottomrule
\end{tabular}
\end{table}

The total size of the dataset is 100,000, with 1,024 samples designated as the validation set. The remaining samples are the training set.


\subsubsection{Evaluation Metric}
\label{app:mcc}
To evaluate the identifiability performance of our method under instantaneous dependencies, we
employ the Mean Correlation Coefficient (MCC) between the ground-truth $\vz_t$ and the estimated $\hat{\rvz}_t$.
A higher MCC denotes a better identification performance the model can achieve. In addition, we
also draw the estimated latent causal process to validate our method. Since the estimated transition function will be a transformation of the ground truth, we do not compare their exact values, but only the activated entries.

\subsubsection{Prior Likelihood Derivation}
\label{app:prior}
We first consider the prior of $\ln p(\rvz_{1:t} )$. We start with an illustrative example of stationary latent causal processes with two time-delay latent variables, i.e. $\rvz _t=[z _{t,1}, z _{t,2}]$ with maximum time lag $L=1$, i.e., $z_{t,i} =f_i(\rvz_{t-1} , \epsilon_{t,i} )$ with mutually independent noises. Then we write this latent process as a transformation map $\mathbf{f}$ (note that we overload the notation $f$ for transition functions and for the transformation map):
    \begin{equation}
\begin{gathered}\nonumber
    \begin{bmatrix}
    \begin{array}{c}
        z_{t-1,1}  \\ 
        z_{t-1,2}  \\
        z_{t,1}    \\
        z_{t,2} 
    \end{array}
    \end{bmatrix}=\mathbf{f}\left(
    \begin{bmatrix}
    \begin{array}{c}
        z_{t-1,1}  \\ 
        z_{t-1,2}  \\
        \epsilon_{t,1}    \\
        \epsilon_{t,2} 
    \end{array}
    \end{bmatrix}\right).
\end{gathered}
\end{equation}
By applying the change of variables formula to the map $\mathbf{f}$, we can evaluate the joint distribution of the latent variables $p(z_{t-1,1} ,z_{t-1,2} ,z_{t,1} , z_{t,2} )$ as 
\begin{equation}
\small
\label{equ:p1}
    p(z_{t-1,1} ,z_{t-1,2} ,z_{t,1} , z_{t,2} )=\frac{p(z_{t-1,1} , z_{t-1,2} , \epsilon_{t,1} , \epsilon_{t,2} )}{|\text{det }\mathbf{J}_{\mathbf{f}}|},
\end{equation}
where $\mathbf{J}_{\mathbf{f}}$ is the Jacobian matrix of the map $\mathbf{f}$, where the instantaneous dependencies are assumed to be a low-triangular matrix:
\begin{equation}
\small
\begin{gathered}\nonumber
    \mathbf{J}_{\mathbf{f}}=\begin{bmatrix}
    \begin{array}{cccc}
        1 & 0 & 0 & 0 \\
        0 & 1 & 0 & 0 \\
        \frac{\partial z_{t,1} }{\partial z_{t-1,1} } & \frac{\partial z_{t,1} }{\partial z_{t-1,2} } & 
        \frac{\partial z_{t,1} }{\partial \epsilon_{t,1} } & 0 \\
        \frac{\partial z_{t,2} }{\partial z_{t-1, 1} } &\frac{\partial z_{t,2} }{\partial z_{t-1,2} } & \frac{\partial z_{t,2}}{\partial \epsilon_{t,1}} & \frac{\partial z_{t,2} }{\partial \epsilon_{t,2} }
    \end{array}
    \end{bmatrix}.
\end{gathered}
\end{equation}
Given that this Jacobian is triangular, we can efficiently compute its determinant as $\prod_i \frac{\partial z_{t,i} }{\epsilon_{t,i} }$. Furthermore, because the noise terms are mutually independent, and hence $\epsilon_{t,i}  \perp \epsilon_{t,j} $ for $j\neq i$ and $\epsilon_{t}  \perp \rvz_{t-1} $, so we can with the RHS of Equation (\ref{equ:p1}) as follows
\begin{equation}
\small
\label{equ:p2}
\begin{split}
    p(z_{t-1,1} , z_{t-1,2} , z_{t,1} , z_{t,2} )=p(z_{t-1,1} , z_{t-1,2} ) \times \frac{p(\epsilon_{t,1} , \epsilon_{t,2} )}{|\mathbf{J}_{\mathbf{f}}|}=p(z_{t-1,1} , z_{t-1,2} ) \times \frac{\prod_i p(\epsilon_{t,i} )}{|\mathbf{J}_{\mathbf{f}}|}.
\end{split}
\end{equation}
Finally, we generalize this example and derive the prior likelihood below. Let $\{r_i \}_{i=1,2,3,\cdots}$ be a set of learned inverse transition functions that take the estimated latent causal variables, and output the noise terms, i.e., $\hat{\epsilon}_{t,i} =r_i (\hat{z}_{t,i} , \{ \hat{\rvz}_{t-\tau} \})$. Then we design a transformation $\mathbf{A}\rightarrow \mathbf{B}$ with low-triangular Jacobian as follows:
\begin{equation}
\small
\begin{gathered}
    \underbrace{[\hat{\rvz}_{t-L} ,\cdots,{\hat{\rvz}}_{t-1} ,{\hat{\rvz}}_{t} ]^{\top}}_{\mathbf{A}} \text{  mapped to  } \underbrace{[{\hat{\rvz}}_{t-L} ,\cdots,{\hat{\rvz}}_{t-1} ,{\hat{\epsilon}}_{t,i} ]^{\top}}_{\mathbf{B}}, \text{ with } \mathbf{J}_{\mathbf{A}\rightarrow\mathbf{B}}=
    \begin{bmatrix}
    \begin{array}{cc}
        \mathbb{I}_{n_s\times L} & 0\\
                    * & \text{diag}\left(\frac{\partial r _{i,j}}{\partial {\hat{z}} _{t,j}}\right)
    \end{array}
    \end{bmatrix}.
\end{gathered}
\end{equation}
Similar to Equation (\ref{equ:p2}), we can obtain the joint distribution of the estimated dynamics subspace as:
\begin{equation}
    \log p(\mathbf{A})=\underbrace{\log p({\hat{\rvz}} _{t-L},\cdots, {\hat{\rvz}} _{t-1}) + \sum^{n_s}_{i=1}\log p({\hat{\epsilon}} _{t,i})}_{\text{Because of mutually independent noise assumption}}+\log (|\text{det}(\mathbf{J}_{\mathbf{A}\rightarrow\mathbf{B}})|)
\end{equation}
Finally, we have:
\begin{equation}
\small
    \log p({\hat{\rvz}}_t |\{{\hat{\rvz}}_{t-\tau} \}_{\tau=1}^L)=\sum_{i=1}^{n_s}p({\hat{\epsilon}_{t,i} }) + \sum_{i=1}^{n_s}\log |\frac{\partial r _i}{\partial {\hat{z}} _{t,i}}|
\end{equation} 
Since the prior of $p(\hat{\rvz}_{t+1:T} |\hat{\rvz}_{1:t} )=\prod_{i=t+1}^{T} p(\hat{\rvz}_{i} |\hat{\rvz}_{i-1} )$ with the assumption of first-order Markov assumption, we can estimate $p(\hat{\rvz}_{t+1:T} |\hat{\rvz}_{1:t} )$ in a similar way.

\subsubsection{Evident Lower Bound}
In this subsection, we show the evident lower bound. We first factorize the conditional distribution according to the Bayes theorem.
\begin{equation}
\begin{split}
    \ln p(\rvx_{1:T})&=\ln\frac{ p(\rvx_{1:T},\rvz_{1:T})}{p(\rvz_{1:T}|\rvx_{1:T})}=\mathbb{E}_{q(\rvz_{1:T}|\rvx_{1:t})}\ln\frac{ p(\rvx_{1:T},\rvz_{1:T})q(\rvz_{1:T}|\rvx_{1:t})}{p(\rvz_{1:T}|\rvx_{1:T})q(\rvz_{1:T}|\rvx_{1:t})}\\&\geq \underbrace{\mathbb{E}_{q(\rvz_{1:T}|\rvx_{1:t})}\ln p(\rvx_{1:T}|\rvz_{1:T})}_{L_r\text{ and }L_y}- \underbrace{D_{KL}(q(\rvz_{1:T}|\rvx_{1:t})||p(\rvz_{1:T}))}_{L_{KL}^{\rvz}}=ELBO.
\end{split}
\end{equation}

\subsubsection{More Synthetic Experiment Results}
\label{app:more_syn_exp}
We repeat each experiment with different random seeds. We further consider CariNG as baselines, experiment results are shown in Table \ref{tab:synthetic_exp}.

\begin{table}[H]
\caption{MCC results of synthetic datasets.}
\centering
\label{tab:synthetic_exp}
\begin{tabular}{@{}l|llll@{}}
\toprule
  & TOT            & IDOL            & CariNG         & TDRL           \\ \midrule
A & 0.9258(0.0034) & 0.3788(0.0245)  & 0.7354(0.0346) & 0.3572(0.0523) \\
B & 0.9324(0.0078) & 0.8593(0.0092) & 0.0823(0.0092) & 0.8073(0.0786) \\
C & 0.9322(0.0052) & 0.6073(0.0952)  & 0.7084(0.0361) & 0.7134(0.0346) \\
D & 0.8433(0.0140) & 0.7800(0.0387)  & 0.7371(0.0804) & 0.7747(0.0690) \\ \bottomrule
\end{tabular}
\end{table}

\subsection{Real-world Experiment}

\subsubsection{Dataset Description}
\begin{itemize}
    \item \textbf{ETT} \cite{zhou2021informer} is an electricity transformer temperature dataset collected from two separated counties in China, which contains two separate datasets $\{\text{ETTh2, ETTm1}\}$ for one hour level. 
    \item \textbf{Exchange} \cite{lai2018modeling} is the daily exchange rate dataset from of eight foreign countries including Australia, British, Canada, Switzerland, China, Japan, New Zealand, and Singapore ranging from 1990 to.
    \item \textbf{ECL} \footnote{https://archive.ics.uci.edu/dataset/321/electricityloaddiagrams20112014} is an electricity consuming load dataset with the electricity consumption (kWh) collected from 321 clients. 
    \item \textbf{Traffic} \footnote{https://pems.dot.ca.gov/} is a dataset of traffic speeds collected from the California Transportation Agencies (CalTrans) Performance Measurement System (PeMS), which contains data collected from 325 sensors located throughout the Bay Area.
    \item \textbf{Weather} \footnote{https://www.bgc-jena.mpg.de/wetter/} provides 10-minute summaries from an automated rooftop station at the Max Planck Institute for Biogeochemistry in Jena, Germany.
\end{itemize}

\subsubsection{Implementation Details}
\label{app:imple_details}

The implementations of our method based on different backbones are shown in Table \ref{tab:lstd_imple} to \ref{tab:Proceed_imple}.

\begin{table}[H]
\centering
\caption{LSTD+ToT Architecture details. $T$, length of time series. $|\rvx_t|$: input dimension. $n$: latent dimension. LeakyReLU: Leaky Rectified Linear Unit. ReLU: Rectified Linear Unit. Tanh:  Hyperbolic tangent function. 
}
{%
\begin{tabular}{@{}l|l|l@{}}
\toprule
Configuration   & Description             & Output                    \\ \midrule\midrule
$\phi$              & Latent Variable Encoder &                           \\ \midrule\midrule
Input:$\rvx_{1:t}$     & Observed time series    & Batch Size$\times$t$\times$ $\rvx$ dimension\\
Dense          &  Convld            &  Batch Size$\times$640$\times$$|\rvx_t|$                 \\
Dense            & t neurons,LeakyReLu               &  Batch Size$\times$t$\times$$|\rvx_t|$     \\
Dense            & T neurons,LeakyReLu               &  Batch Size$\times$T$\times$$|\rvx_t|$     \\
Dense          &  512 neurons,LeakyReLu          &  Batch Size$\times$512$\times$$|\rvx_t|$                 \\
Dense            & t neurons,LeakyReLu               &  Batch Size$\times$t$\times$$|\rvx_t|$     \\
Dense            & T neurons,LeakyReLu               &  Batch Size$\times$T$\times$$|\rvx_t|$     \\
                
\midrule\midrule
$\psi$              & Latent Variable Decoder &                           \\ \midrule\midrule
Input:$\rvz_{1:t}$     & Latent Variable    & Batch Size$\times$t$\times$2$|\rvx_t|$ \\
Dense          & $|\rvx_t|$ neurons,LeakyReLu           &  Batch Size$\times$t$\times$$|\rvx_t|$ \\ \midrule\midrule       
$\eta$               & Dimensionality reduction                 &                           \\\midrule\midrule
Input:$\rvx_{1:t}$      & Observed time series         &  Batch Size$\times$t$\times$$|\rvx_t|$                    \\
Linear           & t neurons,ReLu       &  Batch Size$\times$t$\times$$|\rvx_t|$                 \\ \midrule\midrule

$\varphi$               & Regressor                 &                           \\\midrule\midrule
Input:$[\rvz_{1:T};\eta(\rvx_{1:t})]$      & Latent Variable         &  Batch Size$\times$T$\times$2$|\rvx_t|$                    \\
Dense           & 512 neurons,LeakyReLU       &  Batch Size$\times$T$\times$512                 \\ 
Dense          & $|\rvx_t|$ neurons,LeakyReLu            &  Batch Size$\times$T$\times$$|\rvx_t|$                 \\

\midrule\midrule 

$r_i^{\rvz}$            & Latent Transition Estimator  &                           \\ \midrule\midrule
Input: $\rvz_{1:T}$      & Latent Variable         &  Batch Size$\times$(n+1)                  \\
Dense           & 128 neurons,LeakyReLU   & (n+1)$\times$128                 \\
Dense           & 128 neurons,LeakyReLU   & 128$\times$128                   \\
Dense           & 128 neurons,LeakyReLU   & 128$\times$128                   \\
Dense           & 1 neuron                &  Batch Size$\times$1                      \\
Jacobian Compute & Compute log(det(J))     &  Batch Size                        \\ \midrule\midrule
$r_i^{\rvo}$              & Observed Transition Estimator  &                           \\ \midrule\midrule
Input: $\rvz_{1:T}$      & Latent Variable         &  Batch Size$\times$(n+1)                  \\
Dense           & 128 neurons,LeakyReLU   & (n+1)$\times$128                 \\
Dense           & 128 neurons,LeakyReLU   & 128$\times$128                   \\
Dense           & 128 neurons,LeakyReLU   & 128$\times$128                   \\
Dense           & 1 neuron                &  Batch Size$\times$1                      \\
Jacobian Compute & Compute log(det(J))     &  Batch Size                        \\ \bottomrule
\end{tabular}%
}
\label{tab:lstd_imple}
\end{table}

\begin{table}[H]
\centering
\caption{OneNet+ToT Architecture details. $T$, length of time series. $|\rvx_t|$: input dimension. $n$: latent dimension. LeakyReLU: Leaky Rectified Linear Unit. ReLU: Rectified Linear Unit. Tanh:  Hyperbolic tangent function. 
}
{%
\begin{tabular}{@{}l|l|l@{}}
\toprule
Configuration   & Description             & Output                    \\ \midrule\midrule
$\phi$              & Latent Variable Encoder &                           \\ \midrule\midrule
Input:$\rvx_{1:t}$     & Observed time series    & Batch Size$\times$t$\times$ $\rvx$ dimension\\
Linear          & n neurons            &  Batch Size$\times$n$\times$$|\rvx_t|$                 \\
Convolution neural networks            & 320 neurons               &  Batch Size$\times$320$\times$$|\rvx_t|$
                
\\ \midrule\midrule
$\psi$              & Latent Variable Decoder &                           \\ \midrule\midrule
Input:$\rvz_{1:t}$     & Latent Variable    & Batch Size$\times$320$\times$ $|\rvx_t|$ \\
Linear          & t neurons,ReLu           &  Batch Size$\times$t$\times$$|\rvx_t|$ \\ \midrule\midrule       
$\eta$               & Dimensionality reduction                 &                           \\\midrule\midrule
Input:$\rvx_{1:t}$      & Observed time series         &  Batch Size$\times$t$\times$$|\rvx_t|$                    \\
Linear           & 320 neurons,ReLu       &  Batch Size$\times$320$\times$$|\rvx_t|$                 \\ \midrule\midrule

$\varphi$               & Regressor                 &                           \\\midrule\midrule
Input:$[\rvz_{1:T};\eta(\rvx_{1:t})]$      & Latent Variable         &  Batch Size$\times$640$\times$$|\rvx_t|$                    \\
Linear           & $|\rvx_t|$ neurons,ReLU       &  Batch Size$\times$T$\times$$|\rvx_t|$                 \\ 
Linear          & n neurons,ReLu            &  Batch Size$\times$n$\times$$|\rvx_t|$                 \\
Convolution neural networks            & 320 neurons               &  Batch Size$\times$320$\times$$|\rvx_t|$
                
\\ 
Linear          & T neurons,ReLu            &  Batch Size$\times$T$\times$$|\rvx_t|$                 \\
\midrule\midrule 

$r_i^{\rvz}$            & Latent Transition Estimator  &                           \\ \midrule\midrule
Input: $\rvz_{1:T}$      & Latent Variable         &  Batch Size$\times$(n+1)                  \\
Dense           & 128 neurons,LeakyReLU   & (n+1)$\times$128                 \\
Dense           & 128 neurons,LeakyReLU   & 128$\times$128                   \\
Dense           & 128 neurons,LeakyReLU   & 128$\times$128                   \\
Dense           & 1 neuron                &  Batch Size$\times$1                      \\
Jacobian Compute & Compute log(det(J))     &  Batch Size                        \\ \midrule\midrule
$r_i^{\rvo}$              & Observed Transition Estimator  &                           \\ \midrule\midrule
Input: $\rvz_{1:T}$      & Latent Variable         &  Batch Size$\times$(n+1)                  \\
Dense           & 128 neurons,LeakyReLU   & (n+1)$\times$128                 \\
Dense           & 128 neurons,LeakyReLU   & 128$\times$128                   \\
Dense           & 128 neurons,LeakyReLU   & 128$\times$128                   \\
Dense           & 1 neuron                &  Batch Size$\times$1                      \\
Jacobian Compute & Compute log(det(J))     &  Batch Size                        \\ \bottomrule
\end{tabular}%
}
\label{tab:imple}
\end{table}

\begin{table}[H]
\centering
\caption{OneNet-T+TOT Architecture details. $T$, length of time series. $|\rvx_t|$: input dimension. $n$: latent dimension. LeakyReLU: Leaky Rectified Linear Unit. ReLU: Rectified Linear Unit. Tanh:  Hyperbolic tangent function.
}
{%
\begin{tabular}{@{}l|l|l@{}}
\toprule
Configuration   & Description             & Output                    \\ \midrule\midrule
$\phi$              & Latent Variable Encoder &                           \\ \midrule\midrule
Input:$\rvx_{1:t}$     & Observed time series    & Batch Size$\times$t$\times$ $\rvx$ dimension\\
Linear          & n neurons,ReLu            &  Batch Size$\times$n$\times$$|\rvx_t|$                 \\
Dilation convolution          & T neurons,10 layers           &  Batch Size$\times$T$\times$$|\rvx_t|$                 \\  
\midrule\midrule
$\psi$              & Latent Variable Decoder &                           \\ \midrule\midrule
Input:$\rvz_{1:t}$     & Latent Variable    & Batch Size$\times$320$\times$ $|\rvx_t|$ \\
Linear          & t neurons,ReLu           &  Batch Size$\times$t$\times$$|\rvx_t|$ \\ \midrule\midrule       
$\eta$               & Dimensionality reduction                 &                           \\\midrule\midrule
Input:$\rvx_{1:t}$      & Observed time series         &  Batch Size$\times$t$\times$$|\rvx_t|$                    \\
Linear           & n neurons,ReLu       &  Batch Size$\times$n$\times$$|\rvx_t|$                 \\
Dilation convolution          & T neurons,10 layers           &  Batch Size$\times$T$\times$$|\rvx_t|$                 \\\midrule\midrule

$\varphi$               & Regressor                 &                           \\\midrule\midrule
Input:$[\rvz_{1:T};\eta(\rvx_{1:t})]$      & Latent Variable         &  Batch Size$\times$T$\times$2$|\rvx_t|$                    \\
Linear           & $|\rvx_t|$ neurons,ReLU       &  Batch Size$\times$T$\times$$|\rvx_t|$                 \\ 
Linear          & n neurons,ReLu            &  Batch Size$\times$n$\times$$|\rvx_t|$                 \\
Convolution neural networks            & 320 neurons               &  Batch Size$\times$320$\times$$|\rvx_t|$
                
\\ 
Linear          & T neurons,ReLu            &  Batch Size$\times$T$\times$$|\rvx_t|$                 \\
\midrule\midrule 

$r_i^{\rvz}$            & Latent Transition Estimator  &                           \\ \midrule\midrule
Input: $\rvz_{1:T}$      & Latent Variable         &  Batch Size$\times$(n+1)                  \\
Dense           & 128 neurons,LeakyReLU   & (n+1)$\times$128                 \\
Dense           & 128 neurons,LeakyReLU   & 128$\times$128                   \\
Dense           & 128 neurons,LeakyReLU   & 128$\times$128                   \\
Dense           & 1 neuron                &  Batch Size$\times$1                      \\
Jacobian Compute & Compute log(det(J))     &  Batch Size                        \\ \midrule\midrule
$r_i^{\rvo}$              & Observed Transition Estimator  &                           \\ \midrule\midrule
Input: $\rvz_{1:T}$      & Latent Variable         &  Batch Size$\times$(n+1)                  \\
Dense           & 128 neurons,LeakyReLU   & (n+1)$\times$128                 \\
Dense           & 128 neurons,LeakyReLU   & 128$\times$128                   \\
Dense           & 128 neurons,LeakyReLU   & 128$\times$128                   \\
Dense           & 1 neuron                &  Batch Size$\times$1                      \\
Jacobian Compute & Compute log(det(J))     &  Batch Size                        \\ \bottomrule
\end{tabular}%
}
\label{tab:imple}
\end{table}

\begin{table}[H]
\centering
\caption{online-T+TOT Architecture details. $T$, length of time series. $|\rvx_t|$: input dimension. $n$: latent dimension. LeakyReLU: Leaky Rectified Linear Unit. ReLU: Rectified Linear Unit. Tanh:  Hyperbolic tangent function.
}
{%
\begin{tabular}{@{}l|l|l@{}}
\toprule
Configuration   & Description             & Output                    \\ \midrule\midrule
$\phi$              & Latent Variable Encoder &                           \\ \midrule\midrule
Input:$\rvx_{1:t}$     & Observed time series    & Batch Size$\times$t$\times$ $\rvx$ dimension\\
Linear          & n neurons,ReLu            &  Batch Size$\times$n$\times$$|\rvx_t|$                 \\
Convolution neural networks            & 320 neurons               &  Batch Size$\times$320$\times$$|\rvx_t|$\\
Linear          & t neurons,ReLu           &  Batch Size$\times$t$\times$$|\rvx_t|$                 \\                
 \midrule\midrule
$\psi$              & Latent Variable Decoder &                           \\ \midrule\midrule
Input:$\rvz_{1:t}$     & Latent Variable    & Batch Size$\times$320$\times$ $|\rvx_t|$ \\
Linear          & t neurons,ReLu           &  Batch Size$\times$t$\times$$|\rvx_t|$ \\ \midrule\midrule       
$\eta$               & Dimensionality reduction                 &                           \\\midrule\midrule
Input:$\rvx_{1:t}$      & Observed time series         &  Batch Size$\times$t$\times$$|\rvx_t|$                    \\
Linear           & t neurons,ReLu       &  Batch Size$\times$t$\times$$|\rvx_t|$                 \\ \midrule\midrule

$\varphi$               & Regressor                 &                           \\\midrule\midrule
Input:$[\rvz_{1:T};\eta(\rvx_{1:t})]$      & Latent Variable         &  Batch Size$\times$2t$\times$$|\rvx_t|$                    \\
Moving average           & kernel size,stride=1       &  Batch Size$\times$2t$\times$$|\rvx_t|$                 \\ 
Dilation convolution          & 320 neurons,5 layers,ReLu            &  Batch Size$\times$320$\times$$|\rvx_t|$                 \\
Padding          & patch length=6,ReLu            &  Batch Size$\times$$|\rvx_t|$$\times$patch length$\times$patch num              \\
Transformer          & n neurons            &  Batch Size$\times$$|\rvx_t|$$\times$n$\times$patch num           \\
Linear          & T neurons,ReLu            &  Batch Size$\times$T$\times$$|\rvx_t|$                 \\
\midrule\midrule 

$r_i^{\rvz}$            & Latent Transition Estimator  &                           \\ \midrule\midrule
Input: $\rvz_{1:T}$      & Latent Variable         &  Batch Size$\times$(n+1)                  \\
Dense           & 128 neurons,LeakyReLU   & (n+1)$\times$128                 \\
Dense           & 128 neurons,LeakyReLU   & 128$\times$128                   \\
Dense           & 128 neurons,LeakyReLU   & 128$\times$128                   \\
Dense           & 1 neuron                &  Batch Size$\times$1                      \\
Jacobian Compute & Compute log(det(J))     &  Batch Size                        \\ \midrule\midrule
$r_i^{\rvo}$              & Observed Transition Estimator  &                           \\ \midrule\midrule
Input: $\rvz_{1:T}$      & Latent Variable         &  Batch Size$\times$(n+1)                  \\
Dense           & 128 neurons,LeakyReLU   & (n+1)$\times$128                 \\
Dense           & 128 neurons,LeakyReLU   & 128$\times$128                   \\
Dense           & 128 neurons,LeakyReLU   & 128$\times$128                   \\
Dense           & 1 neuron                &  Batch Size$\times$1                      \\
Jacobian Compute & Compute log(det(J))     &  Batch Size                        \\ \bottomrule
\end{tabular}%
}
\label{tab:imple}
\end{table}

\begin{table}[H]
\centering
\caption{Proceed-T+TOT Architecture details. $T$, length of time series. $|\rvx_t|$: input dimension. $n$: latent dimension. LeakyReLU: Leaky Rectified Linear Unit. ReLU: Rectified Linear Unit. Tanh:  Hyperbolic tangent function.
}
{%
\begin{tabular}{@{}l|l|l@{}}
\toprule
Configuration   & Description             & Output                    \\ \midrule\midrule
$\phi$              & Latent Variable Encoder &                           \\ \midrule\midrule
Input:$\rvx_{1:t}$     & Observed time series    & Batch Size$\times$t$\times$ $\rvx$ dimension\\
Linear          & n neurons,ReLu            &  Batch Size$\times$n$\times$$|\rvx_t|$                 \\
Dilation convolution          & T neurons,10 layers           &  Batch Size$\times$T$\times$$|\rvx_t|$                 \\                
 \midrule\midrule
$\psi$              & Latent Variable Decoder &                           \\ \midrule\midrule
Input:$\rvz_{1:t}$     & Latent Variable    & Batch Size$\times$320$\times$ $|\rvx_t|$ \\
Linear          & t neurons,ReLu           &  Batch Size$\times$t$\times$$|\rvx_t|$ \\ \midrule\midrule       
$\eta$               & Dimensionality reduction                 &                           \\\midrule\midrule
Input:$\rvx_{1:t}$      & Observed time series         &  Batch Size$\times$t$\times$$|\rvx_t|$                    \\
Linear           & n neurons,ReLu       &  Batch Size$\times$n$\times$$|\rvx_t|$                 \\
Dilation convolution          & T neurons,10 layers           &  Batch Size$\times$T$\times$$|\rvx_t|$                 \\\midrule\midrule

$\varphi$               & Regressor                 &                           \\\midrule\midrule
Input:$[\rvz_{1:T};\eta(\rvx_{1:t})]$      & Latent Variable         &  Batch Size$\times$T$\times$2$|\rvx_t|$                    \\
Linear          &  $|\rvx_t|$ neurons,ReLu            &  Batch Size$\times$T$\times$$|\rvx_t|$                 \\
\midrule\midrule 

$r_i^{\rvz}$            & Latent Transition Estimator  &                           \\ \midrule\midrule
Input: $\rvz_{1:T}$      & Latent Variable         &  Batch Size$\times$(n+1)                  \\
Dense           & 128 neurons,LeakyReLU   & (n+1)$\times$128                 \\
Dense           & 128 neurons,LeakyReLU   & 128$\times$128                   \\
Dense           & 128 neurons,LeakyReLU   & 128$\times$128                   \\
Dense           & 1 neuron                &  Batch Size$\times$1                      \\
Jacobian Compute & Compute log(det(J))     &  Batch Size                        \\ \midrule\midrule
$r_i^{\rvo}$              & Observed Transition Estimator  &                           \\ \midrule\midrule
Input: $\rvz_{1:T}$      & Latent Variable         &  Batch Size$\times$(n+1)                  \\
Dense           & 128 neurons,LeakyReLU   & (n+1)$\times$128                 \\
Dense           & 128 neurons,LeakyReLU   & 128$\times$128                   \\
Dense           & 128 neurons,LeakyReLU   & 128$\times$128                   \\
Dense           & 1 neuron                &  Batch Size$\times$1                      \\
Jacobian Compute & Compute log(det(J))     &  Batch Size                        \\ \bottomrule
\end{tabular}%
}
\label{tab:Proceed_imple}
\end{table}

\subsubsection{More Experiment Results}
\label{app:more_exp_result}
We further consider MIR \cite{aljundi2019online} and TFCL \cite{aljundi2019task} as the backbone networks, experimental results are shown in Table \ref{tab:more_exp}.

\begin{table}[H]
\caption{MSE and MAE results of different datasets on TFCL and MIR backbone.}
\label{tab:more_exp}
\resizebox{\textwidth}{!}{%
\begin{tabular}{@{}c|c|cc|cc|cc|cc|c|c|cc|cc|cc|cc@{}}
\toprule
\multirow{2}{*}{Models}         & \multirow{2}{*}{Len} & \multicolumn{2}{c|}{\textbf{TFCL}} & \multicolumn{2}{c|}{\textbf{TFCL+TOT}} & \multicolumn{2}{c|}{\textbf{MIR}} & \multicolumn{2}{c|}{\textbf{MIR+TOT}} & \multirow{2}{*}{Models}            & \multirow{2}{*}{Len} & \multicolumn{2}{c|}{\textbf{TFCL}} & \multicolumn{2}{c|}{\textbf{TFCL+TOT}} & \multicolumn{2}{c|}{\textbf{MIR}} & \multicolumn{2}{c}{\textbf{MIR+TOT}} \\ \cmidrule(lr){3-10} \cmidrule(l){13-20} 
                                &                      & MSE         & MAE                  & MSE                & MAE               & MSE             & MAE             & MSE               & MAE               &                                    &                      & MSE                  & MAE         & MSE                & MAE               & MSE             & MAE             & MSE               & MAE              \\ \midrule
\multirow{3}{*}{\textbf{ETTh2}} & 1                    & 0.557       & 0.472                & \textbf{0.463}     & \textbf{0.382}    & 0.486           & 0.41            & \textbf{0.447}    & \textbf{0.378}    & \multirow{3}{*}{\textbf{ECL}}      & 1                    & \textbf{2.732}       & 0.524       & 3.815              & \textbf{0.44}     & \textbf{2.575}  & \textbf{0.504}  & 3.396             & 0.589            \\
                                & 24                   & 0.846       & \textbf{0.548}       & \textbf{0.825}     & 0.554             & 0.812           & 0.541           & \textbf{0.652}    & \textbf{0.465}    &                                    & 24                   & 12.094               & 1.256       & \textbf{10.083}    & \textbf{1.105}    & 9.265           & 1.066           & \textbf{6.142}    & \textbf{1.041}   \\
                                & 48                   & 1.208       & 0.592                & \textbf{0.87}      & \textbf{0.555}    & 1.103           & 0.565           & \textbf{0.842}    & \textbf{0.526}    &                                    & 48                   & 12.11                & 1.303       & \textbf{10.685}    & \textbf{1.075}    & 9.411           & \textbf{1.079}  & \textbf{6.479}    & 1.090             \\ \midrule
\multirow{3}{*}{\textbf{ETTm1}} & 1                    & 0.087       & 0.198                & \textbf{0.081}     & \textbf{0.187}    & 0.085           & 0.197           & \textbf{0.083}    & \textbf{0.188}    & \multirow{3}{*}{\textbf{Traffic}}  & 1                    & 0.306                & 0.297       & \textbf{0.304}     & \textbf{0.263}    & 0.298           & 0.284           & \textbf{0.294}    & \textbf{0.267}   \\
                                & 24                   & 0.211       & 0.341                & \textbf{0.186}     & \textbf{0.32}     & 0.192           & 0.325           & \textbf{0.132}    & \textbf{0.267}    &                                    & 24                   & 0.441                & 0.493       & \textbf{0.389}     & \textbf{0.314}    & 0.451           & 0.443           & \textbf{0.39}     & \textbf{0.339}   \\
                                & 48                   & 0.236       & 0.363                & \textbf{0.196}     & \textbf{0.331}    & 0.210            & 0.342           & \textbf{0.129}    & \textbf{0.265}    &                                    & 48                   & 0.438                & 0.531       & \textbf{0.393}     & \textbf{0.316}    & 0.502           & 0.397           & \textbf{0.419}    & \textbf{0.345}   \\ \midrule
\multirow{3}{*}{\textbf{WTH}}   & 1                    & 0.177       & 0.24                 & \textbf{0.154}     & \textbf{0.197}    & 0.179           & 0.244           & \textbf{0.154}    & \textbf{0.199}    & \multirow{3}{*}{\textbf{Exchange}} & 1                    & 0.106                & 0.153       & \textbf{0.045}     & \textbf{0.142}    & 0.095           & \textbf{0.118}  & \textbf{0.056}    & 0.162            \\
                                & 24                   & 0.301       & 0.363                & \textbf{0.295}     & \textbf{0.359}    & 0.291           & 0.355           & \textbf{0.184}    & \textbf{0.265}    &                                    & 24                   & 0.098                & 0.227       & \textbf{0.062}     & \textbf{0.166}    & 0.104           & 0.204           & \textbf{0.067}    & \textbf{0.178}   \\
                                & 48                   & 0.323       & 0.382                & \textbf{0.294}     & \textbf{0.36}     & 0.297           & 0.361           & \textbf{0.195}    & \textbf{0.278}    &                                    & 48                   & 0.101                & 0.183       & \textbf{0.098}     & \textbf{0.207}    & 0.101           & 0.209           & \textbf{0.047}    & \textbf{0.137}   \\ \bottomrule
\end{tabular}%
}
\end{table}

To demonstrate that the improvements of our approach are not due to an increase in parameters, we increase the number of parameters of the baseline methods by adding additional layers to the neural network, making the number of parameters of our method and the baseline methods comparable. Experiment results are shown in Table \ref{tab:fair}. According to the experiment results, we can find that our method still achieves the general improvement.

\begin{table}
\centering
\caption{Evaluation on Same Size of Model}
\label{tab:fair}
\resizebox{0.7\textwidth}{!}{%
\begin{tabular}{@{}c|c|cc|cc|cc@{}}
\toprule
Model                     & \multirow{2}{*}{Len} & \multicolumn{2}{c|}{LSTD(Original)} & \multicolumn{2}{c|}{LSTD(Same size)} & \multicolumn{2}{c}{LSTD+TOT}    \\ \cmidrule(r){1-1} \cmidrule(l){3-8} 
Matric                    &                      & MSE              & MAE              & MSE               & MAE              & MSE            & MAE            \\ \midrule
\multirow{3}{*}{ETTh2}    & 1                & 0.377            & 0.347            & 0.378             & 0.349            & \textbf{0.374} & \textbf{0.346} \\
                          & 24               & 0.543            & 0.411            & 0.778             & 0.465            & \textbf{0.532} & \textbf{0.390} \\
                          & 48               & 0.616            & 0.423            & 0.620             & 0.443            & \textbf{0.564} & \textbf{0.420} \\ \midrule
\multirow{3}{*}{WTH}      & 1                & \textbf{0.153}   & \textbf{0.200}   & 0.155             & 0.202            & \textbf{0.153} & \textbf{0.200} \\
                          & 24               & 0.136            & 0.223            & 0.126             & 0.215            & \textbf{0.116} & \textbf{0.207} \\
                          & 48               & 0.157            & 0.242            & 0.168             & 0.251            & \textbf{0.152} & \textbf{0.239} \\ \midrule
\multirow{3}{*}{ECL}      & 1                & 2.112            & 0.226            & 2.109             & 0.234            & \textbf{2.038} & \textbf{0.221} \\
                          & 24               & 1.422            & 0.292            & 1.420             & 0.285            & \textbf{1.390} & \textbf{0.278} \\
                          & 48               & \textbf{1.411}   & 0.294            & 1.442             & 0.303            & 1.413          & \textbf{0.289} \\ \midrule
\multirow{3}{*}{Traffic}  & 1                & 0.231            & 0.225            & 0.231             & 0.225            & \textbf{0.229} & \textbf{0.224} \\
                          & 24               & 0.398            & 0.316            & 0.402             & 0.319            & \textbf{0.397} & \textbf{0.313} \\
                          & 48               & 0.426            & 0.332            & 0.427             & 0.333            & \textbf{0.421} & \textbf{0.328} \\ \midrule
\multirow{3}{*}{Exchange} & 1                & \textbf{0.013}   & 0.070            & 0.013             & 0.071            & \textbf{0.013} & \textbf{0.069} \\
                          & 24               & 0.039            & 0.132            & 0.040             & 0.135            & \textbf{0.037} & \textbf{0.129} \\
                          & 48               & 0.043            & 0.142            & 0.043             & 0.142            & \textbf{0.042} & \textbf{0.142} \\ \bottomrule
\end{tabular}%
}
\end{table}

\subsubsection{Experiment Results of Mean and Standard Deviation}
\label{app:mean_and_std}

The mean and standard deviation of MAE and MSE are shown in Table \ref{tab:mean_mse}, \ref{tab:mean_mae}, \ref{tab:std_mse}, and \ref{tab:std_mae}, respectively.
\begin{table}[H]
\centering
\caption{Mean values of MSE on different datasets.}
\resizebox{\textwidth}{!}{%
\begin{tabular}{@{}c|c|c|c|c|c|c|c|c|c|c|c|c|c|c|c@{}}
\toprule
{\color[HTML]{0A0A0A} Models}                     & {\color[HTML]{0A0A0A} Len} & \multicolumn{1}{l|}{\textbf{LSTD}}             & \textbf{LSTD+ToT}                              & \multicolumn{1}{l|}{\textbf{Proceed-T}}        & \textbf{Proceed-T+ToT}                         & \multicolumn{1}{l|}{\textbf{OneNet}}  & \textbf{OneNet+TOT}     & \multicolumn{1}{l|}{\textbf{OneNet-T}} & \textbf{OneNet-T+TOT}   & \multicolumn{1}{l|}{\textbf{MIR}}              & {\color[HTML]{0A0A0A} \textbf{MIR+TOT}}        & \multicolumn{1}{l|}{\textbf{Online-T}} & \textbf{Online-T+TOT}   & \multicolumn{1}{l|}{\textbf{TFCL}}             & {\color[HTML]{0A0A0A} \textbf{TFCL+TOT}} \\ \midrule
{\color[HTML]{0A0A0A} }                           & {\color[HTML]{0A0A0A} 1}   & {\color[HTML]{0A0A0A} 0.375}          & {\color[HTML]{0A0A0A} \textbf{0.374}} & {\color[HTML]{0A0A0A} 1.537}          & {\color[HTML]{0A0A0A} \textbf{1.186}} & {\color[HTML]{0A0A0A} 0.377} & \textbf{0.365} & {\color[HTML]{0A0A0A} 0.394}  & \textbf{0.391} & {\color[HTML]{0A0A0A} 0.524}          & {\color[HTML]{0A0A0A} \textbf{0.451}} & {\color[HTML]{0A0A0A} 0.617}  & \textbf{0.444} & {\color[HTML]{0A0A0A} \textbf{0.531}} & \textbf{0.466}                  \\
{\color[HTML]{0A0A0A} }                           & {\color[HTML]{0A0A0A} 24}  & {\color[HTML]{0A0A0A} 0.543}          & {\color[HTML]{0A0A0A} \textbf{0.540}} & {\color[HTML]{0A0A0A} 2.908}          & {\color[HTML]{0A0A0A} \textbf{2.444}} & {\color[HTML]{0A0A0A} 0.548} & \textbf{0.515} & {\color[HTML]{0A0A0A} 0.943}  & \textbf{0.697} & {\color[HTML]{0A0A0A} 0.816}          & {\color[HTML]{0A0A0A} \textbf{0.587}} & {\color[HTML]{0A0A0A} 0.832}  & \textbf{0.757} & {\color[HTML]{0A0A0A} 0.851}          & \textbf{0.830}                  \\
\multirow{-3}{*}{{\color[HTML]{0A0A0A} \textbf{ETTh2}}}    & {\color[HTML]{0A0A0A} 48}  & {\color[HTML]{0A0A0A} 0.616}          & {\color[HTML]{0A0A0A} \textbf{0.616}} & {\color[HTML]{0A0A0A} 4.056}          & {\color[HTML]{0A0A0A} \textbf{4.013}} & {\color[HTML]{0A0A0A} 0.622} & \textbf{0.574} & {\color[HTML]{0A0A0A} 0.926}  & \textbf{0.783} & {\color[HTML]{0A0A0A} 1.098}          & {\color[HTML]{0A0A0A} \textbf{0.740}} & {\color[HTML]{0A0A0A} 1.188}  & \textbf{0.977} & {\color[HTML]{0A0A0A} 1.211}          & \textbf{0.891}                  \\ \midrule
{\color[HTML]{0A0A0A} }                           & {\color[HTML]{0A0A0A} 1}   & {\color[HTML]{0A0A0A} 0.082}          & {\color[HTML]{0A0A0A} \textbf{0.081}} & {\color[HTML]{0A0A0A} 0.106}          & {\color[HTML]{0A0A0A} \textbf{0.105}} & {\color[HTML]{0A0A0A} 0.086} & \textbf{0.082} & {\color[HTML]{0A0A0A} 0.091}  & \textbf{0.079} & {\color[HTML]{0A0A0A} 0.082}          & {\color[HTML]{0A0A0A} \textbf{0.085}} & {\color[HTML]{0A0A0A} 0.208}  & \textbf{0.077} & {\color[HTML]{0A0A0A} 0.085}          & \textbf{0.082}                  \\
{\color[HTML]{0A0A0A} }                           & {\color[HTML]{0A0A0A} 24}  & {\color[HTML]{0A0A0A} \textbf{0.102}} & {\color[HTML]{0A0A0A} 0.108}          & {\color[HTML]{0A0A0A} 0.531}          & {\color[HTML]{0A0A0A} \textbf{0.516}} & {\color[HTML]{0A0A0A} 0.105} & \textbf{0.097} & {\color[HTML]{0A0A0A} 0.213}  & \textbf{0.174} & {\color[HTML]{0A0A0A} 0.189}          & {\color[HTML]{0A0A0A} \textbf{0.119}} & {\color[HTML]{0A0A0A} 0.263}  & \textbf{0.224} & {\color[HTML]{0A0A0A} 0.216}          & \textbf{0.195}                  \\
\multirow{-3}{*}{{\color[HTML]{0A0A0A} \textbf{ETTm1}}}    & {\color[HTML]{0A0A0A} 48}  & {\color[HTML]{0A0A0A} \textbf{0.115}} & {\color[HTML]{0A0A0A} 0.118}          & {\color[HTML]{0A0A0A} 0.704}          & {\color[HTML]{0A0A0A} \textbf{0.703}} & {\color[HTML]{0A0A0A} 0.110} & \textbf{0.102} & {\color[HTML]{0A0A0A} 0.216}  & \textbf{0.188} & {\color[HTML]{0A0A0A} 0.223}          & {\color[HTML]{0A0A0A} \textbf{0.138}} & {\color[HTML]{0A0A0A} 0.271}  & \textbf{0.255} & {\color[HTML]{0A0A0A} 0.240}          & \textbf{0.203}                  \\ \midrule
{\color[HTML]{0A0A0A} }                           & {\color[HTML]{0A0A0A} 1}   & {\color[HTML]{0A0A0A} 0.155}          & {\color[HTML]{0A0A0A} \textbf{0.153}} & {\color[HTML]{0A0A0A} 0.346}          & {\color[HTML]{0A0A0A} \textbf{0.336}} & {\color[HTML]{0A0A0A} 0.157} & \textbf{0.151} & {\color[HTML]{0A0A0A} 0.157}  & \textbf{0.161} & {\color[HTML]{0A0A0A} 0.182}          & {\color[HTML]{0A0A0A} \textbf{0.152}} & {\color[HTML]{0A0A0A} 0.213}  & \textbf{0.145} & {\color[HTML]{0A0A0A} 0.176}          & \textbf{0.160}                  \\
{\color[HTML]{0A0A0A} }                           & {\color[HTML]{0A0A0A} 24}  & {\color[HTML]{0A0A0A} 0.139}          & {\color[HTML]{0A0A0A} \textbf{0.136}} & {\color[HTML]{0A0A0A} 0.707}          & {\color[HTML]{0A0A0A} \textbf{0.697}} & {\color[HTML]{0A0A0A} 0.173} & \textbf{0.158} & {\color[HTML]{0A0A0A} 0.276}  & \textbf{0.264} & {\color[HTML]{0A0A0A} 0.286}          & {\color[HTML]{0A0A0A} \textbf{0.166}} & {\color[HTML]{0A0A0A} 0.312}  & \textbf{0.272} & {\color[HTML]{0A0A0A} 0.311}          & \textbf{0.295}                  \\
\multirow{-3}{*}{{\color[HTML]{0A0A0A} \textbf{WTH}}}      & {\color[HTML]{0A0A0A} 48}  & {\color[HTML]{0A0A0A} 0.167}          & {\color[HTML]{0A0A0A} \textbf{0.164}} & {\color[HTML]{0A0A0A} 0.959}          & {\color[HTML]{0A0A0A} \textbf{0.956}} & {\color[HTML]{0A0A0A} 0.196} & \textbf{0.175} & {\color[HTML]{0A0A0A} 0.289}  & \textbf{0.273} & {\color[HTML]{0A0A0A} 0.289}          & {\color[HTML]{0A0A0A} \textbf{0.169}} & {\color[HTML]{0A0A0A} 0.298}  & \textbf{0.279} & {\color[HTML]{0A0A0A} 0.324}          & \textbf{0.297}                  \\ \midrule
                                                  & {\color[HTML]{0A0A0A} 1}   & {\color[HTML]{0A0A0A} 2.228}          & {\color[HTML]{0A0A0A} \textbf{2.116}} & {\color[HTML]{0A0A0A} 3.27}           & {\color[HTML]{0A0A0A} \textbf{3.156}} & {\color[HTML]{0A0A0A} 2.675} & \textbf{2.330} & {\color[HTML]{0A0A0A} 2.413}  & \textbf{2.278} & {\color[HTML]{0A0A0A} \textbf{2.568}} & {\color[HTML]{0A0A0A} 3.513}          & {\color[HTML]{0A0A0A} 3.312}  & \textbf{2.258} & {\color[HTML]{0A0A0A} \textbf{2.806}} & 3.781                           \\
                                                  & {\color[HTML]{0A0A0A} 24}  & {\color[HTML]{0A0A0A} 1.557}          & {\color[HTML]{0A0A0A} \textbf{1.514}} & {\color[HTML]{0A0A0A} 5.907}          & {\color[HTML]{0A0A0A} \textbf{5.895}} & {\color[HTML]{0A0A0A} 2.090} & \textbf{2.035} & {\color[HTML]{0A0A0A} 4.551}  & \textbf{4.580} & {\color[HTML]{0A0A0A} 9.157}          & {\color[HTML]{0A0A0A} \textbf{6.095}} & {\color[HTML]{0A0A0A} 11.594} & \textbf{4.463} & {\color[HTML]{0A0A0A} 11.891}         & \textbf{10.932}                 \\
\multirow{-3}{*}{\textbf{ECL}}                             & {\color[HTML]{0A0A0A} 48}  & {\color[HTML]{0A0A0A} 1.720}          & {\color[HTML]{0A0A0A} \textbf{1.654}} & {\color[HTML]{0A0A0A} \textbf{7.192}} & {\color[HTML]{0A0A0A} 7.500}          & {\color[HTML]{0A0A0A} 2.438} & \textbf{2.198} & {\color[HTML]{0A0A0A} 4.488}  & \textbf{4.472} & {\color[HTML]{0A0A0A} 9.391}          & {\color[HTML]{0A0A0A} \textbf{8.209}} & {\color[HTML]{0A0A0A} 11.912} & \textbf{4.548} & {\color[HTML]{0A0A0A} 12.109}         & \textbf{10.235}                 \\ \midrule
                                                  & {\color[HTML]{0A0A0A} 1}   & {\color[HTML]{0A0A0A} 0.234}          & {\color[HTML]{0A0A0A} \textbf{0.231}} & {\color[HTML]{0A0A0A} 0.333}          & {\color[HTML]{0A0A0A} \textbf{0.326}} & {\color[HTML]{0A0A0A} 0.241} & \textbf{0.229} & {\color[HTML]{0A0A0A} 0.236}  & \textbf{0.222} & {\color[HTML]{0A0A0A} 0.298}          & {\color[HTML]{0A0A0A} \textbf{0.296}} & {\color[HTML]{0A0A0A} 0.334}  & \textbf{0.211} & {\color[HTML]{0A0A0A} 0.306}          & \textbf{0.304}                  \\
                                                  & {\color[HTML]{0A0A0A} 24}  & {\color[HTML]{0A0A0A} 0.417}          & {\color[HTML]{0A0A0A} \textbf{0.401}} & {\color[HTML]{0A0A0A} 0.413}          & {\color[HTML]{0A0A0A} \textbf{0.412}} & {\color[HTML]{0A0A0A} 0.438} & \textbf{0.419} & {\color[HTML]{0A0A0A} 0.425}  & \textbf{0.413} & {\color[HTML]{0A0A0A} 0.451}          & {\color[HTML]{0A0A0A} \textbf{0.435}} & {\color[HTML]{0A0A0A} 0.481}  & \textbf{0.411} & {\color[HTML]{0A0A0A} 0.441}          & \textbf{0.366}                  \\
\multirow{-3}{*}{\textbf{Traffic}}                         & {\color[HTML]{0A0A0A} 48}  & {\color[HTML]{0A0A0A} 0.431}          & {\color[HTML]{0A0A0A} \textbf{0.422}} & {\color[HTML]{0A0A0A} 0.454}          & {\color[HTML]{0A0A0A} \textbf{0.452}} & {\color[HTML]{0A0A0A} 0.473} & \textbf{0.412} & {\color[HTML]{0A0A0A} 0.451}  & \textbf{0.439} & {\color[HTML]{0A0A0A} 0.502}          & {\color[HTML]{0A0A0A} \textbf{0.458}} & {\color[HTML]{0A0A0A} 0.503}  & \textbf{0.425} & {\color[HTML]{0A0A0A} 0.438}          & \textbf{0.391}                  \\ \midrule
{\color[HTML]{0A0A0A} }                           & {\color[HTML]{0A0A0A} 1}   & {\color[HTML]{0A0A0A} 0.014}          & {\color[HTML]{0A0A0A} \textbf{0.013}} & {\color[HTML]{0A0A0A} 0.012}          & {\color[HTML]{0A0A0A} \textbf{0.009}} & {\color[HTML]{0A0A0A} 0.017} & \textbf{0.016} & {\color[HTML]{0A0A0A} 0.031}  & \textbf{0.018} & {\color[HTML]{0A0A0A} 0.095}          & {\color[HTML]{0A0A0A} \textbf{0.057}} & {\color[HTML]{0A0A0A} 0.113}  & \textbf{0.010} & {\color[HTML]{0A0A0A} 0.106}          & \textbf{0.054}                  \\
{\color[HTML]{0A0A0A} }                           & {\color[HTML]{0A0A0A} 24}  & {\color[HTML]{0A0A0A} 0.039}          & {\color[HTML]{0A0A0A} \textbf{0.036}} & {\color[HTML]{0A0A0A} 0.129}          & {\color[HTML]{0A0A0A} \textbf{0.105}} & {\color[HTML]{0A0A0A} 0.047} & \textbf{0.041} & {\color[HTML]{0A0A0A} 0.060}  & \textbf{0.041} & {\color[HTML]{0A0A0A} 0.104}          & {\color[HTML]{0A0A0A} \textbf{0.077}} & {\color[HTML]{0A0A0A} 0.116}  & \textbf{0.026} & {\color[HTML]{0A0A0A} 0.098}          & \textbf{0.081}                  \\
\multirow{-3}{*}{{\color[HTML]{0A0A0A} \textbf{Exchange}}} & {\color[HTML]{0A0A0A} 48}  & {\color[HTML]{0A0A0A} 0.049}          & {\color[HTML]{0A0A0A} \textbf{0.046}} & {\color[HTML]{0A0A0A} 0.267}          & {\color[HTML]{0A0A0A} \textbf{0.200}} & {\color[HTML]{0A0A0A} 0.062} & \textbf{0.056} & {\color[HTML]{0A0A0A} 0.065}  & \textbf{0.056} & {\color[HTML]{0A0A0A} 0.101}          & {\color[HTML]{0A0A0A} \textbf{0.085}} & {\color[HTML]{0A0A0A} 0.168}  & \textbf{0.029} & {\color[HTML]{0A0A0A} 0.101}          & \textbf{0.099}                  \\ \bottomrule
\end{tabular}%
}
\label{tab:mean_mse}
\end{table}

\begin{table}[H]
\centering
\caption{Mean values of MAE on different datasets.}
\resizebox{\textwidth}{!}{%
\begin{tabular}{@{}c|c|c|c|c|c|c|c|c|c|c|c|c|c|c|c@{}}
\toprule
{\color[HTML]{0A0A0A} Models}                     & {\color[HTML]{0A0A0A} Len} & \multicolumn{1}{l|}{\textbf{LSTD}}             & \textbf{LSTD+ToT }                             & \multicolumn{1}{l|}{\textbf{Proceed-T}}        & \textbf{Proceed-T+ToT}                         & \multicolumn{1}{l|}{\textbf{OneNet}}  & \textbf{OneNet+TOT}     & \multicolumn{1}{l|}{\textbf{OneNet-T}}         & \textbf{OneNet-T+TOT}  & \multicolumn{1}{l|}{\textbf{MIR}}              & {\color[HTML]{0A0A0A} \textbf{MIR+TOT}}        & \multicolumn{1}{l|}{\textbf{Online-T}}         & \textbf{Online-T+TOT}   & \multicolumn{1}{l|}{\textbf{TFCL}}             & {\color[HTML]{0A0A0A} \textbf{TFCL+TOT}} \\ \midrule
{\color[HTML]{0A0A0A} }                           & {\color[HTML]{0A0A0A} 1}   & {\color[HTML]{0A0A0A} 0.347}          & {\color[HTML]{0A0A0A} \textbf{0.347}} & {\color[HTML]{0A0A0A} 0.447}          & {\color[HTML]{0A0A0A} \textbf{0.401}} & {\color[HTML]{0A0A0A} 0.354} & \textbf{0.347} & {\color[HTML]{0A0A0A} 0.373}          & \textbf{0.364} & {\color[HTML]{0A0A0A} 0.418}          & {\color[HTML]{0A0A0A} \textbf{0.373}} & {\color[HTML]{0A0A0A} 0.443}          & \textbf{0.352} & {\color[HTML]{0A0A0A} 0.466}          & \textbf{0.381}                  \\
{\color[HTML]{0A0A0A} }                           & {\color[HTML]{0A0A0A} 24}  & {\color[HTML]{0A0A0A} 0.411}          & {\color[HTML]{0A0A0A} \textbf{0.394}} & {\color[HTML]{0A0A0A} 0.659}          & {\color[HTML]{0A0A0A} \textbf{0.619}} & {\color[HTML]{0A0A0A} 0.415} & \textbf{0.406} & {\color[HTML]{0A0A0A} 0.532}          & \textbf{0.510} & {\color[HTML]{0A0A0A} 0.543}          & {\color[HTML]{0A0A0A} \textbf{0.439}} & {\color[HTML]{0A0A0A} 0.545}          & \textbf{0.497} & {\color[HTML]{0A0A0A} \textbf{0.539}} & \textbf{0.569}                  \\
\multirow{-3}{*}{{\color[HTML]{0A0A0A} \textbf{ETTh2}}}    & {\color[HTML]{0A0A0A} 48}  & {\color[HTML]{0A0A0A} 0.423}          & {\color[HTML]{0A0A0A} \textbf{0.437}} & {\color[HTML]{0A0A0A} 0.767}          & {\color[HTML]{0A0A0A} \textbf{0.732}} & {\color[HTML]{0A0A0A} 0.448} & \textbf{0.435} & {\color[HTML]{0A0A0A} 0.535}          & \textbf{0.520} & {\color[HTML]{0A0A0A} 0.572}          & {\color[HTML]{0A0A0A} \textbf{0.489}} & {\color[HTML]{0A0A0A} 0.598}          & \textbf{0.549} & {\color[HTML]{0A0A0A} 0.591}          & \textbf{0.585}                  \\ \midrule
{\color[HTML]{0A0A0A} }                           & {\color[HTML]{0A0A0A} 1}   & {\color[HTML]{0A0A0A} 0.189}          & {\color[HTML]{0A0A0A} \textbf{0.187}} & {\color[HTML]{0A0A0A} 0.19}           & {\color[HTML]{0A0A0A} \textbf{0.187}} & {\color[HTML]{0A0A0A} 0.192} & \textbf{0.186} & {\color[HTML]{0A0A0A} 0.207}          & \textbf{0.186} & {\color[HTML]{0A0A0A} 0.201}          & {\color[HTML]{0A0A0A} \textbf{0.190}} & {\color[HTML]{0A0A0A} 0.218}          & \textbf{0.181} & {\color[HTML]{0A0A0A} 0.192}          & \textbf{0.191}                  \\
{\color[HTML]{0A0A0A} }                           & {\color[HTML]{0A0A0A} 24}  & {\color[HTML]{0A0A0A} \textbf{0.217}} & {\color[HTML]{0A0A0A} 0.240}          & {\color[HTML]{0A0A0A} 0.447}          & {\color[HTML]{0A0A0A} \textbf{0.442}} & {\color[HTML]{0A0A0A} 0.234} & \textbf{0.226} & {\color[HTML]{0A0A0A} 0.343}          & \textbf{0.306} & {\color[HTML]{0A0A0A} 0.327}          & {\color[HTML]{0A0A0A} \textbf{0.252}} & {\color[HTML]{0A0A0A} 0.376}          & \textbf{0.348} & {\color[HTML]{0A0A0A} 0.346}          & \textbf{0.329}                  \\
\multirow{-3}{*}{{\color[HTML]{0A0A0A} \textbf{ETTm1}}}    & {\color[HTML]{0A0A0A} 48}  & {\color[HTML]{0A0A0A} 0.259}          & {\color[HTML]{0A0A0A} \textbf{0.251}} & {\color[HTML]{0A0A0A} 0.521}          & {\color[HTML]{0A0A0A} \textbf{0.507}} & {\color[HTML]{0A0A0A} 0.242} & \textbf{0.232} & {\color[HTML]{0A0A0A} 0.348}          & \textbf{0.322} & {\color[HTML]{0A0A0A} 0.347}          & {\color[HTML]{0A0A0A} \textbf{0.273}} & {\color[HTML]{0A0A0A} 0.415}          & \textbf{0.376} & {\color[HTML]{0A0A0A} 0.357}          & \textbf{0.339}                  \\ \midrule
{\color[HTML]{0A0A0A} }                           & {\color[HTML]{0A0A0A} 1}   & {\color[HTML]{0A0A0A} 0.200}          & {\color[HTML]{0A0A0A} \textbf{0.200}} & {\color[HTML]{0A0A0A} 0.143}          & {\color[HTML]{0A0A0A} \textbf{0.140}} & {\color[HTML]{0A0A0A} 0.202} & \textbf{0.193} & {\color[HTML]{0A0A0A} \textbf{0.206}} & 0.212          & {\color[HTML]{0A0A0A} \textbf{0.182}} & {\color[HTML]{0A0A0A} 0.200}          & {\color[HTML]{0A0A0A} 0.210}          & \textbf{0.189} & {\color[HTML]{0A0A0A} \textbf{0.169}} & 0.204                           \\
{\color[HTML]{0A0A0A} }                           & {\color[HTML]{0A0A0A} 24}  & {\color[HTML]{0A0A0A} 0.224}          & {\color[HTML]{0A0A0A} \textbf{0.221}} & {\color[HTML]{0A0A0A} 0.382}          & {\color[HTML]{0A0A0A} \textbf{0.375}} & {\color[HTML]{0A0A0A} 0.255} & \textbf{0.239} & {\color[HTML]{0A0A0A} 0.337}          & \textbf{0.334} & {\color[HTML]{0A0A0A} 0.317}          & {\color[HTML]{0A0A0A} \textbf{0.250}} & {\color[HTML]{0A0A0A} \textbf{0.317}} & 0.332          & {\color[HTML]{0A0A0A} \textbf{0.314}} & 0.360                           \\
\multirow{-3}{*}{{\color[HTML]{0A0A0A} \textbf{WTH}}}      & {\color[HTML]{0A0A0A} 48}  & {\color[HTML]{0A0A0A} 0.250}          & {\color[HTML]{0A0A0A} \textbf{0.247}} & {\color[HTML]{0A0A0A} 0.493}          & {\color[HTML]{0A0A0A} \textbf{0.493}} & {\color[HTML]{0A0A0A} 0.277} & \textbf{0.255} & {\color[HTML]{0A0A0A} 0.354}          & \textbf{0.345} & {\color[HTML]{0A0A0A} 0.289}          & {\color[HTML]{0A0A0A} \textbf{0.255}} & {\color[HTML]{0A0A0A} \textbf{0.334}} & 0.339          & {\color[HTML]{0A0A0A} \textbf{0.325}} & 0.368                           \\ \midrule
                                                  & {\color[HTML]{0A0A0A} 1}   & {\color[HTML]{0A0A0A} 0.232}          & {\color[HTML]{0A0A0A} \textbf{0.230}} & {\color[HTML]{0A0A0A} 0.286}          & {\color[HTML]{0A0A0A} \textbf{0.282}} & {\color[HTML]{0A0A0A} 0.268} & \textbf{0.254} & {\color[HTML]{0A0A0A} \textbf{0.280}} & 0.337          & {\color[HTML]{0A0A0A} \textbf{0.519}} & {\color[HTML]{0A0A0A} 0.580}          & {\color[HTML]{0A0A0A} 0.641}          & \textbf{0.214} & {\color[HTML]{0A0A0A} \textbf{0.273}} & 0.436                           \\
                                                  & {\color[HTML]{0A0A0A} 24}  & {\color[HTML]{0A0A0A} 0.288}          & {\color[HTML]{0A0A0A} \textbf{0.282}} & {\color[HTML]{0A0A0A} 0.387}          & {\color[HTML]{0A0A0A} \textbf{0.384}} & {\color[HTML]{0A0A0A} 0.341} & \textbf{0.340} & {\color[HTML]{0A0A0A} 0.405}          & \textbf{0.393} & {\color[HTML]{0A0A0A} \textbf{1.035}} & {\color[HTML]{0A0A0A} 1.036}          & {\color[HTML]{0A0A0A} 1.291}          & \textbf{0.330} & {\color[HTML]{0A0A0A} 1.194}          & \textbf{1.098}                  \\
\multirow{-3}{*}{\textbf{ECL}}                             & {\color[HTML]{0A0A0A} 48}  & {\color[HTML]{0A0A0A} 0.348}          & {\color[HTML]{0A0A0A} \textbf{0.302}} & {\color[HTML]{0A0A0A} 0.431}          & {\color[HTML]{0A0A0A} \textbf{0.425}} & {\color[HTML]{0A0A0A} 0.367} & \textbf{0.360} & {\color[HTML]{0A0A0A} 0.423}          & \textbf{0.405} & {\color[HTML]{0A0A0A} 1.184}          & {\color[HTML]{0A0A0A} \textbf{0.992}} & {\color[HTML]{0A0A0A} 1.219}          & \textbf{0.344} & {\color[HTML]{0A0A0A} 1.304}          & \textbf{1.079}                  \\ \midrule
                                                  & {\color[HTML]{0A0A0A} 1}   & {\color[HTML]{0A0A0A} 0.229}          & {\color[HTML]{0A0A0A} \textbf{0.227}} & {\color[HTML]{0A0A0A} 0.268}          & {\color[HTML]{0A0A0A} \textbf{0.263}} & {\color[HTML]{0A0A0A} 0.240} & \textbf{0.228} & {\color[HTML]{0A0A0A} 0.236}          & \textbf{0.212} & {\color[HTML]{0A0A0A} 0.284}          & {\color[HTML]{0A0A0A} \textbf{0.269}} & {\color[HTML]{0A0A0A} 0.284}          & \textbf{0.202} & {\color[HTML]{0A0A0A} 0.297}          & \textbf{0.264}                  \\
                                                  & {\color[HTML]{0A0A0A} 24}  & {\color[HTML]{0A0A0A} 0.332}          & {\color[HTML]{0A0A0A} \textbf{0.315}} & {\color[HTML]{0A0A0A} 0.291}          & {\color[HTML]{0A0A0A} \textbf{0.285}} & {\color[HTML]{0A0A0A} 0.346} & \textbf{0.338} & {\color[HTML]{0A0A0A} 0.346}          & \textbf{0.318} & {\color[HTML]{0A0A0A} 0.443}          & {\color[HTML]{0A0A0A} \textbf{0.359}} & {\color[HTML]{0A0A0A} 0.385}          & \textbf{0.311} & {\color[HTML]{0A0A0A} 0.493}          & \textbf{0.310}                  \\
\multirow{-3}{*}{\textbf{Traffic}}                         & {\color[HTML]{0A0A0A} 48}  & {\color[HTML]{0A0A0A} 0.344}          & {\color[HTML]{0A0A0A} \textbf{0.329}} & {\color[HTML]{0A0A0A} \textbf{0.308}} & {\color[HTML]{0A0A0A} 0.311}          & {\color[HTML]{0A0A0A} 0.371} & \textbf{0.338} & {\color[HTML]{0A0A0A} 0.355}          & \textbf{0.340} & {\color[HTML]{0A0A0A} 0.397}          & {\color[HTML]{0A0A0A} \textbf{0.381}} & {\color[HTML]{0A0A0A} 0.380}          & \textbf{0.325} & {\color[HTML]{0A0A0A} 0.531}          & \textbf{0.313}                  \\ \midrule
{\color[HTML]{0A0A0A} }                           & {\color[HTML]{0A0A0A} 1}   & {\color[HTML]{0A0A0A} 0.070}          & {\color[HTML]{0A0A0A} \textbf{0.069}} & {\color[HTML]{0A0A0A} 0.063}          & {\color[HTML]{0A0A0A} \textbf{0.051}} & {\color[HTML]{0A0A0A} 0.085} & \textbf{0.081} & {\color[HTML]{0A0A0A} 0.117}          & \textbf{0.087} & {\color[HTML]{0A0A0A} \textbf{0.118}} & {\color[HTML]{0A0A0A} 0.144}          & {\color[HTML]{0A0A0A} 0.169}          & \textbf{0.057} & {\color[HTML]{0A0A0A} 0.153}          & \textbf{0.147}                  \\
{\color[HTML]{0A0A0A} }                           & {\color[HTML]{0A0A0A} 24}  & {\color[HTML]{0A0A0A} 0.132}          & {\color[HTML]{0A0A0A} \textbf{0.128}} & {\color[HTML]{0A0A0A} 0.211}          & {\color[HTML]{0A0A0A} \textbf{0.191}} & {\color[HTML]{0A0A0A} 0.148} & \textbf{0.135} & {\color[HTML]{0A0A0A} 0.166}          & \textbf{0.137} & {\color[HTML]{0A0A0A} 0.204}          & {\color[HTML]{0A0A0A} \textbf{0.188}} & {\color[HTML]{0A0A0A} 0.213}          & \textbf{0.106} & {\color[HTML]{0A0A0A} 0.227}          & \textbf{0.191}                  \\
\multirow{-3}{*}{{\color[HTML]{0A0A0A} \textbf{Exchange}}} & {\color[HTML]{0A0A0A} 48}  & {\color[HTML]{0A0A0A} 0.150}          & {\color[HTML]{0A0A0A} \textbf{0.147}} & {\color[HTML]{0A0A0A} 0.3}            & {\color[HTML]{0A0A0A} \textbf{0.263}} & {\color[HTML]{0A0A0A} 0.170} & \textbf{0.157} & {\color[HTML]{0A0A0A} 0.173}          & \textbf{0.161} & {\color[HTML]{0A0A0A} 0.209}          & {\color[HTML]{0A0A0A} \textbf{0.196}} & {\color[HTML]{0A0A0A} 0.258}          & \textbf{0.113} & {\color[HTML]{0A0A0A} \textbf{0.183}} & 0.208                           \\ \bottomrule
\end{tabular}%
}
\label{tab:mean_mae}
\end{table}

\begin{table}[H]
\centering
\caption{Standard deviation of MSE on different datasets.}
\resizebox{\textwidth}{!}{%
\begin{tabular}{@{}c|c|c|c|c|c|c|c|c|c|c|c|c|c|c|c@{}}
\toprule
{\color[HTML]{0A0A0A} Models}                              & {\color[HTML]{0A0A0A} Len} & \multicolumn{1}{l|}{\textbf{LSTD}} & \textbf{LSTD+ToT}              & \multicolumn{1}{l|}{\textbf{Proceed-T}} & \textbf{Proceed-T+ToT}         & \multicolumn{1}{l|}{\textbf{OneNet}} & \textbf{OneNet+TOT} & \multicolumn{1}{l|}{\textbf{OneNet-T}} & \textbf{OneNet-T+TOT} & \multicolumn{1}{l|}{\textbf{MIR}} & {\color[HTML]{0A0A0A} \textbf{MIR+TOT}} & \multicolumn{1}{l|}{\textbf{Online-T}} & \textbf{Online-T+TOT} & \multicolumn{1}{l|}{\textbf{TFCL}} & {\color[HTML]{0A0A0A} \textbf{TFCL+TOT}} \\ \midrule
{\color[HTML]{0A0A0A} }                                    & {\color[HTML]{0A0A0A} 1}   & {\color[HTML]{0A0A0A} 0.0246}      & {\color[HTML]{0A0A0A} 0.0031}  & {\color[HTML]{0A0A0A} 0.1038}           & {\color[HTML]{0A0A0A} 0.1351}  & {\color[HTML]{0A0A0A} 0.0085}        & 0.0041              & {\color[HTML]{0A0A0A} 0.0076}          & 0.0233                & {\color[HTML]{0A0A0A} 0.1229}     & {\color[HTML]{0A0A0A} 0.0168}           & {\color[HTML]{0A0A0A} 0.0149}          & 0.0535                & {\color[HTML]{0A0A0A} 0.1827}      & 0.0049                                   \\
{\color[HTML]{0A0A0A} }                                    & {\color[HTML]{0A0A0A} 24}  & {\color[HTML]{0A0A0A} 0.0260}      & {\color[HTML]{0A0A0A} 0.0078}  & {\color[HTML]{0A0A0A} 0.0413}           & {\color[HTML]{0A0A0A} 0.0620}  & {\color[HTML]{0A0A0A} 0.0137}        & 0.0128              & {\color[HTML]{0A0A0A} 0.1563}          & 0.0087                & {\color[HTML]{0A0A0A} 0.1618}     & {\color[HTML]{0A0A0A} 0.0309}           & {\color[HTML]{0A0A0A} 0.0098}          & 0.0341                & {\color[HTML]{0A0A0A} 0.0699}      & 0.0070                                   \\
\multirow{-3}{*}{{\color[HTML]{0A0A0A} \textbf{ETTh2}}}    & {\color[HTML]{0A0A0A} 48}  & {\color[HTML]{0A0A0A} 0.0295}      & {\color[HTML]{0A0A0A} 0.0494}  & {\color[HTML]{0A0A0A} 0.1030}           & {\color[HTML]{0A0A0A} 0.0414}  & {\color[HTML]{0A0A0A} 0.0263}        & 0.0145              & {\color[HTML]{0A0A0A} 0.0829}          & 0.0137                & {\color[HTML]{0A0A0A} 0.0827}     & {\color[HTML]{0A0A0A} 0.0561}           & {\color[HTML]{0A0A0A} 0.0172}          & 0.1233                & {\color[HTML]{0A0A0A} 0.1320}      & 0.0206                                   \\ \midrule
{\color[HTML]{0A0A0A} }                                    & {\color[HTML]{0A0A0A} 1}   & {\color[HTML]{0A0A0A} 0.00097}     & {\color[HTML]{0A0A0A} 0.0004}  & {\color[HTML]{0A0A0A} 0.0008}           & {\color[HTML]{0A0A0A} 0.0006}  & {\color[HTML]{0A0A0A} 0.0025}        & 0.0007              & {\color[HTML]{0A0A0A} 0.0085}          & 0.0023                & {\color[HTML]{0A0A0A} 0.1382}     & {\color[HTML]{0A0A0A} 0.0019}           & {\color[HTML]{0A0A0A} 0.0093}          & 0.0007                & {\color[HTML]{0A0A0A} 0.1259}      & 0.0018                                   \\
{\color[HTML]{0A0A0A} }                                    & {\color[HTML]{0A0A0A} 24}  & {\color[HTML]{0A0A0A} 0.0003}      & {\color[HTML]{0A0A0A} 0.0014)} & {\color[HTML]{0A0A0A} 0.0197}           & {\color[HTML]{0A0A0A} 0.0129}  & {\color[HTML]{0A0A0A} 0.0018}        & 0.0014              & {\color[HTML]{0A0A0A} 0.0025}          & 0.0341                & {\color[HTML]{0A0A0A} 0.1107}     & {\color[HTML]{0A0A0A} 0.0142}           & {\color[HTML]{0A0A0A} 0.0080}          & 0.0051                & {\color[HTML]{0A0A0A} 0.1643}      & 0.0079                                   \\
\multirow{-3}{*}{{\color[HTML]{0A0A0A} \textbf{ETTm1}}}    & {\color[HTML]{0A0A0A} 48}  & {\color[HTML]{0A0A0A} 0.0157}      & {\color[HTML]{0A0A0A} 0.0012}  & {\color[HTML]{0A0A0A} 0.0108}           & {\color[HTML]{0A0A0A} 0.0049}  & {\color[HTML]{0A0A0A} 0.0012}        & 0.0044              & {\color[HTML]{0A0A0A} 0.0068}          & 0.0093                & {\color[HTML]{0A0A0A} 0.0370}     & {\color[HTML]{0A0A0A} 0.0173}           & {\color[HTML]{0A0A0A} 0.0142}          & 0.0151                & {\color[HTML]{0A0A0A} 0.0764}      & 0.0063                                   \\ \midrule
{\color[HTML]{0A0A0A} }                                    & {\color[HTML]{0A0A0A} 1}   & {\color[HTML]{0A0A0A} 0.00084}     & {\color[HTML]{0A0A0A} 0.0003}  & {\color[HTML]{0A0A0A} 0.0043}           & {\color[HTML]{0A0A0A} 0.0040}  & {\color[HTML]{0A0A0A} 0.0006}        & 0.0004              & {\color[HTML]{0A0A0A} 0.0004}          & 0.0157                & {\color[HTML]{0A0A0A} 0.1747}     & {\color[HTML]{0A0A0A} 0.0006}           & {\color[HTML]{0A0A0A} 0.0162}          & 0.0001                & {\color[HTML]{0A0A0A} 0.1241}      & 0.0057                                   \\
{\color[HTML]{0A0A0A} }                                    & {\color[HTML]{0A0A0A} 24}  & {\color[HTML]{0A0A0A} 0.0023}      & {\color[HTML]{0A0A0A} 0.0024}  & {\color[HTML]{0A0A0A} 0.0025}           & {\color[HTML]{0A0A0A} 0.0043}  & {\color[HTML]{0A0A0A} 0.0026}        & 0.0020              & {\color[HTML]{0A0A0A} 0.0023}          & 0.0007                & {\color[HTML]{0A0A0A} 0.1212}     & {\color[HTML]{0A0A0A} 0.0031}           & {\color[HTML]{0A0A0A} 0.0167}          & 0.0036                & {\color[HTML]{0A0A0A} 0.1782}      & 0.0028                                   \\
\multirow{-3}{*}{{\color[HTML]{0A0A0A} \textbf{WTH}}}      & {\color[HTML]{0A0A0A} 48}  & {\color[HTML]{0A0A0A} 0.0055}      & {\color[HTML]{0A0A0A} 0.0044}  & {\color[HTML]{0A0A0A} 0.0081}           & {\color[HTML]{0A0A0A} 0.0050}  & {\color[HTML]{0A0A0A} 0.0041}        & 0.0191              & {\color[HTML]{0A0A0A} 0.0100}          & 0.0112                & {\color[HTML]{0A0A0A} 0.1338}     & {\color[HTML]{0A0A0A} 0.0035}           & {\color[HTML]{0A0A0A} 0.0108}          & 0.0035                & {\color[HTML]{0A0A0A} 0.1790}      & 0.0035                                   \\ \midrule
                                                           & {\color[HTML]{0A0A0A} 1}   & {\color[HTML]{0A0A0A} 0.0197}      & {\color[HTML]{0A0A0A} 0.0189}  & {\color[HTML]{0A0A0A} 0.0360}           & {\color[HTML]{0A0A0A} 0.0673}  & {\color[HTML]{0A0A0A} 0.0449}        & 0.0415              & {\color[HTML]{0A0A0A} 0.0344}          & 0.0731                & {\color[HTML]{0A0A0A} 0.0119}     & {\color[HTML]{0A0A0A} 0.1042}           & {\color[HTML]{0A0A0A} 0.0152}          & 0.0582                & {\color[HTML]{0A0A0A} 0.0816}      & 0.0484                                   \\
                                                           & {\color[HTML]{0A0A0A} 24}  & {\color[HTML]{0A0A0A} 0.0256}      & {\color[HTML]{0A0A0A} 0.0662}  & {\color[HTML]{0A0A0A} 0.0232}           & {\color[HTML]{0A0A0A} 0.0612}  & {\color[HTML]{0A0A0A} 0.0205}        & 0.0120              & {\color[HTML]{0A0A0A} 0.0326}          & 0.1260                & {\color[HTML]{0A0A0A} 0.0051}     & {\color[HTML]{0A0A0A} 0.0907}           & {\color[HTML]{0A0A0A} 0.0178}          & 0.0669                & {\color[HTML]{0A0A0A} 0.0104}      & 0.7811                                   \\
\multirow{-3}{*}{\textbf{ECL}}                             & {\color[HTML]{0A0A0A} 48}  & {\color[HTML]{0A0A0A} 0.2065}      & {\color[HTML]{0A0A0A} 0.0278}  & {\color[HTML]{0A0A0A} 0.2163}           & {\color[HTML]{0A0A0A} 0.0867}  & {\color[HTML]{0A0A0A} 0.1002}        & 0.0908              & {\color[HTML]{0A0A0A} 0.0152}          & 0.0246                & {\color[HTML]{0A0A0A} 0.0126}     & {\color[HTML]{0A0A0A} 0.2878}           & {\color[HTML]{0A0A0A} 0.0092}          & 0.0799                & {\color[HTML]{0A0A0A} 0.0081}      & 0.5162                                   \\ \midrule
                                                           & {\color[HTML]{0A0A0A} 1}   & {\color[HTML]{0A0A0A} 0.0027}      & {\color[HTML]{0A0A0A} 0.0008}  & {\color[HTML]{0A0A0A} 0.0186}           & {\color[HTML]{0A0A0A} 0.0079}  & {\color[HTML]{0A0A0A} 0.0010}        & 0.0017              & {\color[HTML]{0A0A0A} 0.0015}          & 0.0097                & {\color[HTML]{0A0A0A} 0.0149}     & {\color[HTML]{0A0A0A} 0.0014}           & {\color[HTML]{0A0A0A} 0.0462}          & 0.0125                & {\color[HTML]{0A0A0A} 0.0052}      & 0.0002                                   \\
                                                           & {\color[HTML]{0A0A0A} 24}  & {\color[HTML]{0A0A0A} 0.0070}      & {\color[HTML]{0A0A0A} 0.0065}  & {\color[HTML]{0A0A0A} 0.0135}           & {\color[HTML]{0A0A0A} 0.0034}  & {\color[HTML]{0A0A0A} 0.0068}        & 0.0654              & {\color[HTML]{0A0A0A} 0.0046}          & 0.0186                & {\color[HTML]{0A0A0A} 0.0148}     & {\color[HTML]{0A0A0A} 0.0064}           & {\color[HTML]{0A0A0A} 0.0134}          & 0.0271                & {\color[HTML]{0A0A0A} 0.0103}      & 0.0045                                   \\
\multirow{-3}{*}{\textbf{Traffic}}                         & {\color[HTML]{0A0A0A} 48}  & {\color[HTML]{0A0A0A} 0.0024}      & {\color[HTML]{0A0A0A} 0.0007}  & {\color[HTML]{0A0A0A} 0.0063}           & {\color[HTML]{0A0A0A} 0.0020}  & {\color[HTML]{0A0A0A} 0.0289}        & 0.0415              & {\color[HTML]{0A0A0A} 0.0015}          & 0.0274                & {\color[HTML]{0A0A0A} 0.0159}     & {\color[HTML]{0A0A0A} 0.0345}           & {\color[HTML]{0A0A0A} 0.0180}          & 0.0021                & {\color[HTML]{0A0A0A} 0.0175}      & 0.0011                                   \\ \midrule
{\color[HTML]{0A0A0A} }                                    & {\color[HTML]{0A0A0A} 1}   & {\color[HTML]{0A0A0A} 0.0005}      & {\color[HTML]{0A0A0A} 0.0002}  & {\color[HTML]{0A0A0A} 0.0005}           & {\color[HTML]{0A0A0A} 0.00003} & {\color[HTML]{0A0A0A} 0.0011}        & 0.0008              & {\color[HTML]{0A0A0A} 0.0021}          & 0.0017                & {\color[HTML]{0A0A0A} 0.0130}     & {\color[HTML]{0A0A0A} 0.0101}           & {\color[HTML]{0A0A0A} 0.0086}          & 0.0008                & {\color[HTML]{0A0A0A} 0.0158}      & 0.0084                                   \\
{\color[HTML]{0A0A0A} }                                    & {\color[HTML]{0A0A0A} 24}  & {\color[HTML]{0A0A0A} 0.0004}      & {\color[HTML]{0A0A0A} 0.0030}  & {\color[HTML]{0A0A0A} 0.0027}           & {\color[HTML]{0A0A0A} 0.0033}  & {\color[HTML]{0A0A0A} 0.0064}        & 0.0097              & {\color[HTML]{0A0A0A} 0.0010}          & 0.0027                & {\color[HTML]{0A0A0A} 0.0070}     & {\color[HTML]{0A0A0A} 0.0077}           & {\color[HTML]{0A0A0A} 0.0137}          & 0.0079                & {\color[HTML]{0A0A0A} 0.0139}      & 0.0076                                   \\
\multirow{-3}{*}{{\color[HTML]{0A0A0A} \textbf{Exchange}}} & {\color[HTML]{0A0A0A} 48}  & {\color[HTML]{0A0A0A} 0.0054}      & {\color[HTML]{0A0A0A} 0.0023}  & {\color[HTML]{0A0A0A} 0.0037}           & {\color[HTML]{0A0A0A} 0.0035}  & {\color[HTML]{0A0A0A} 0.0170}        & 0.0137              & {\color[HTML]{0A0A0A} 0.0022}          & 0.0062                & {\color[HTML]{0A0A0A} 0.0038}     & {\color[HTML]{0A0A0A} 0.0130}           & {\color[HTML]{0A0A0A} 0.0067}          & 0.0031                & {\color[HTML]{0A0A0A} 0.0126}      & 0.0017                                   \\ \bottomrule
\end{tabular}%
}
\label{tab:std_mse}
\end{table}

\begin{table}[H]
\centering
\caption{Standard deviation of MAE on different datasets.}
\resizebox{\textwidth}{!}{%

\begin{tabular}{@{}c|c|c|c|c|c|c|c|c|c|c|c|c|c|c|c@{}}
\toprule
{\color[HTML]{0A0A0A} Models}                              & {\color[HTML]{0A0A0A} Len} & \multicolumn{1}{l|}{\textbf{LSTD}} & \textbf{LSTD+ToT}             & \multicolumn{1}{l|}{\textbf{Proceed-T}} & \textbf{Proceed-T+ToT}        & \multicolumn{1}{l|}{\textbf{OneNet}} & \textbf{OneNet+TOT} & \multicolumn{1}{l|}{\textbf{OneNet-T}} & \textbf{OneNet-T+TOT} & \multicolumn{1}{l|}{\textbf{MIR}} & {\color[HTML]{0A0A0A} \textbf{MIR+TOT}} & \multicolumn{1}{l|}{\textbf{Online-T}} & \textbf{Online-T+TOT} & \multicolumn{1}{l|}{\textbf{TFCL}} & {\color[HTML]{0A0A0A} \textbf{TFCL+TOT}} \\ \midrule
{\color[HTML]{0A0A0A} }                                    & {\color[HTML]{0A0A0A} 1}   & {\color[HTML]{0A0A0A} 0.0073}      & {\color[HTML]{0A0A0A} 0.0004} & {\color[HTML]{0A0A0A} 0.0105}           & {\color[HTML]{0A0A0A} 0.0031} & {\color[HTML]{0A0A0A} 0.0053}        & 0.0008              & {\color[HTML]{0A0A0A} 0.0066}          & 0.0120                & {\color[HTML]{0A0A0A} 0.0130}     & {\color[HTML]{0A0A0A} 0.0040}           & {\color[HTML]{0A0A0A} 0.0175}          & 0.0049                & {\color[HTML]{0A0A0A} 0.0183}      & 0.0013                                   \\
{\color[HTML]{0A0A0A} }                                    & {\color[HTML]{0A0A0A} 24}  & {\color[HTML]{0A0A0A} 0.0032}      & {\color[HTML]{0A0A0A} 0.0035} & {\color[HTML]{0A0A0A} 0.0094}           & {\color[HTML]{0A0A0A} 0.0023} & {\color[HTML]{0A0A0A} 0.0061}        & 0.0029              & {\color[HTML]{0A0A0A} 0.0174}          & 0.0024                & {\color[HTML]{0A0A0A} 0.0408}     & {\color[HTML]{0A0A0A} 0.0109}           & {\color[HTML]{0A0A0A} 0.0066}          & 0.0068                & {\color[HTML]{0A0A0A} 0.0470}      & 0.0005                                   \\
\multirow{-3}{*}{{\color[HTML]{0A0A0A} \textbf{ETTh2}}}    & {\color[HTML]{0A0A0A} 48}  & {\color[HTML]{0A0A0A} 0.0060}      & {\color[HTML]{0A0A0A} 0.0226} & {\color[HTML]{0A0A0A} 0.0104}           & {\color[HTML]{0A0A0A} 0.0038} & {\color[HTML]{0A0A0A} 0.0099}        & 0.0004              & {\color[HTML]{0A0A0A} 0.0125}          & 0.0019                & {\color[HTML]{0A0A0A} 0.0098}     & {\color[HTML]{0A0A0A} 0.0092}           & {\color[HTML]{0A0A0A} 0.0115}          & 0.0222                & {\color[HTML]{0A0A0A} 0.0037}      & 0.0065                                   \\ \midrule
{\color[HTML]{0A0A0A} }                                    & {\color[HTML]{0A0A0A} 1}   & {\color[HTML]{0A0A0A} 0.0021}      & {\color[HTML]{0A0A0A} 0.0008} & {\color[HTML]{0A0A0A} 0.0013}           & {\color[HTML]{0A0A0A} 0.0006} & {\color[HTML]{0A0A0A} 0.0045}        & 0.0008              & {\color[HTML]{0A0A0A} 0.0123}          & 0.0044                & {\color[HTML]{0A0A0A} 0.0143}     & {\color[HTML]{0A0A0A} 0.0033}           & {\color[HTML]{0A0A0A} 0.0175}          & 0.0017                & {\color[HTML]{0A0A0A} 0.0517}      & 0.0046                                   \\
{\color[HTML]{0A0A0A} }                                    & {\color[HTML]{0A0A0A} 24}  & {\color[HTML]{0A0A0A} 0.0008}      & {\color[HTML]{0A0A0A} 0.0026} & {\color[HTML]{0A0A0A} 0.0096}           & {\color[HTML]{0A0A0A} 0.0048} & {\color[HTML]{0A0A0A} 0.0022}        & 0.0016              & {\color[HTML]{0A0A0A} 0.0020}          & 0.0317                & {\color[HTML]{0A0A0A} 0.0526}     & {\color[HTML]{0A0A0A} 0.0148}           & {\color[HTML]{0A0A0A} 0.0048}          & 0.0043                & {\color[HTML]{0A0A0A} 0.0571}      & 0.0074                                   \\
\multirow{-3}{*}{{\color[HTML]{0A0A0A} \textbf{ETTm1}}}    & {\color[HTML]{0A0A0A} 48}  & {\color[HTML]{0A0A0A} 0.0118}      & {\color[HTML]{0A0A0A} 0.0021} & {\color[HTML]{0A0A0A} 0.0012}           & {\color[HTML]{0A0A0A} 0.0020} & {\color[HTML]{0A0A0A} 0.0012}        & 0.0049              & {\color[HTML]{0A0A0A} 0.0062}          & 0.0087                & {\color[HTML]{0A0A0A} 0.0086}     & {\color[HTML]{0A0A0A} 0.0173}           & {\color[HTML]{0A0A0A} 0.0126}          & 0.0115                & {\color[HTML]{0A0A0A} 0.0565}      & 0.0067                                   \\ \midrule
{\color[HTML]{0A0A0A} }                                    & {\color[HTML]{0A0A0A} 1}   & {\color[HTML]{0A0A0A} 0.0011}      & {\color[HTML]{0A0A0A} 0.0006} & {\color[HTML]{0A0A0A} 0.0010}           & {\color[HTML]{0A0A0A} 0.0020} & {\color[HTML]{0A0A0A} 0.0005}        & 0.0008              & {\color[HTML]{0A0A0A} 0.0004}          & 0.0222                & {\color[HTML]{0A0A0A} 0.0091}     & {\color[HTML]{0A0A0A} 0.0013}           & {\color[HTML]{0A0A0A} 0.0078}          & 0.0003                & {\color[HTML]{0A0A0A} 0.0163}      & 0.0045                                   \\
{\color[HTML]{0A0A0A} }                                    & {\color[HTML]{0A0A0A} 24}  & {\color[HTML]{0A0A0A} 0.0010}      & {\color[HTML]{0A0A0A} 0.0017} & {\color[HTML]{0A0A0A} 0.0004}           & {\color[HTML]{0A0A0A} 0.0011} & {\color[HTML]{0A0A0A} 0.0027}        & 0.0002              & {\color[HTML]{0A0A0A} 0.0023}          & 0.0009                & {\color[HTML]{0A0A0A} 0.0078}     & {\color[HTML]{0A0A0A} 0.0031}           & {\color[HTML]{0A0A0A} 0.0157}          & 0.0028                & {\color[HTML]{0A0A0A} 0.0089}      & 0.0048                                   \\
\multirow{-3}{*}{{\color[HTML]{0A0A0A} \textbf{WTH}}}      & {\color[HTML]{0A0A0A} 48}  & {\color[HTML]{0A0A0A} 0.0044}      & {\color[HTML]{0A0A0A} 0.0039} & {\color[HTML]{0A0A0A} 0.0014}           & {\color[HTML]{0A0A0A} 0.0019} & {\color[HTML]{0A0A0A} 0.0039}        & 0.0183              & {\color[HTML]{0A0A0A} 0.0050}          & 0.0060                & {\color[HTML]{0A0A0A} 0.0132}     & {\color[HTML]{0A0A0A} 0.0031}           & {\color[HTML]{0A0A0A} 0.0063}          & 0.0034                & {\color[HTML]{0A0A0A} 0.0184}      & 0.0075                                   \\ \midrule
                                                           & {\color[HTML]{0A0A0A} 1}   & {\color[HTML]{0A0A0A} 0.0055}      & {\color[HTML]{0A0A0A} 0.0010} & {\color[HTML]{0A0A0A} 0.0006}           & {\color[HTML]{0A0A0A} 0.0012} & {\color[HTML]{0A0A0A} 0.0013}        & 0.0016              & {\color[HTML]{0A0A0A} 0.0045}          & 0.0391                & {\color[HTML]{0A0A0A} 0.0085}     & {\color[HTML]{0A0A0A} 0.0084}           & {\color[HTML]{0A0A0A} 0.0013}          & 0.0014                & {\color[HTML]{0A0A0A} 0.0575}      & 0.0044                                   \\
                                                           & {\color[HTML]{0A0A0A} 24}  & {\color[HTML]{0A0A0A} 0.0044}      & {\color[HTML]{0A0A0A} 0.0065} & {\color[HTML]{0A0A0A} 0.0004}           & {\color[HTML]{0A0A0A} 0.0064} & {\color[HTML]{0A0A0A} 0.0025}        & 0.0021              & {\color[HTML]{0A0A0A} 0.0001}          & 0.0130                & {\color[HTML]{0A0A0A} 0.0581}     & {\color[HTML]{0A0A0A} 0.0105}           & {\color[HTML]{0A0A0A} 0.0152}          & 0.0085                & {\color[HTML]{0A0A0A} 0.0176}      & 0.0768                                   \\
\multirow{-3}{*}{\textbf{ECL}}                             & {\color[HTML]{0A0A0A} 48}  & {\color[HTML]{0A0A0A} 0.0686}      & {\color[HTML]{0A0A0A} 0.0079} & {\color[HTML]{0A0A0A} 0.0018}           & {\color[HTML]{0A0A0A} 0.0032} & {\color[HTML]{0A0A0A} 0.0047}        & 0.0038              & {\color[HTML]{0A0A0A} 0.0013}          & 0.0027                & {\color[HTML]{0A0A0A} 0.0125}     & {\color[HTML]{0A0A0A} 0.0132}           & {\color[HTML]{0A0A0A} 0.0179}          & 0.0022                & {\color[HTML]{0A0A0A} 0.0056}      & 0.0226                                   \\ \midrule
                                                           & {\color[HTML]{0A0A0A} 1}   & {\color[HTML]{0A0A0A} 0.0028}      & {\color[HTML]{0A0A0A} 0.0013} & {\color[HTML]{0A0A0A} 0.0044}           & {\color[HTML]{0A0A0A} 0.0053} & {\color[HTML]{0A0A0A} 0.0006}        & 0.0014              & {\color[HTML]{0A0A0A} 0.0014}          & 0.0082                & {\color[HTML]{0A0A0A} 0.0072}     & {\color[HTML]{0A0A0A} 0.0020}           & {\color[HTML]{0A0A0A} 0.0105}          & 0.0041                & {\color[HTML]{0A0A0A} 0.0125}      & 0.0010                                   \\
                                                           & {\color[HTML]{0A0A0A} 24}  & {\color[HTML]{0A0A0A} 0.0042}      & {\color[HTML]{0A0A0A} 0.0046} & {\color[HTML]{0A0A0A} 0.0142}           & {\color[HTML]{0A0A0A} 0.0105} & {\color[HTML]{0A0A0A} 0.0027}        & 0.0411              & {\color[HTML]{0A0A0A} 0.0027}          & 0.0088                & {\color[HTML]{0A0A0A} 0.0106}     & {\color[HTML]{0A0A0A} 0.0077}           & {\color[HTML]{0A0A0A} 0.0182}          & 0.0137                & {\color[HTML]{0A0A0A} 0.0150}      & 0.0066                                   \\
\multirow{-3}{*}{\textbf{Traffic}}                         & {\color[HTML]{0A0A0A} 48}  & {\color[HTML]{0A0A0A} 0.0021}      & {\color[HTML]{0A0A0A} 0.0012} & {\color[HTML]{0A0A0A} 0.0044}           & {\color[HTML]{0A0A0A} 0.0019} & {\color[HTML]{0A0A0A} 0.0144}        & 0.0241              & {\color[HTML]{0A0A0A} 0.0013}          & 0.0162                & {\color[HTML]{0A0A0A} 0.0028}     & {\color[HTML]{0A0A0A} 0.0188}           & {\color[HTML]{0A0A0A} 0.0017}          & 0.0016                & {\color[HTML]{0A0A0A} 0.0165}      & 0.0012                                   \\ \midrule
{\color[HTML]{0A0A0A} }                                    & {\color[HTML]{0A0A0A} 1}   & {\color[HTML]{0A0A0A} 0.0018}      & {\color[HTML]{0A0A0A} 0.0006} & {\color[HTML]{0A0A0A} 0.0022}           & {\color[HTML]{0A0A0A} 0.0006} & {\color[HTML]{0A0A0A} 0.0031}        & 0.0024              & {\color[HTML]{0A0A0A} 0.0034}          & 0.0048                & {\color[HTML]{0A0A0A} 0.0090}     & {\color[HTML]{0A0A0A} 0.0072}           & {\color[HTML]{0A0A0A} 0.0172}          & 0.0034                & {\color[HTML]{0A0A0A} 0.0010}      & 0.0061                                   \\
{\color[HTML]{0A0A0A} }                                    & {\color[HTML]{0A0A0A} 24}  & {\color[HTML]{0A0A0A} 0.0004}      & {\color[HTML]{0A0A0A} 0.0047} & {\color[HTML]{0A0A0A} 0.0018}           & {\color[HTML]{0A0A0A} 0.0024} & {\color[HTML]{0A0A0A} 0.0095}        & 0.0154              & {\color[HTML]{0A0A0A} 0.0014}          & 0.0051                & {\color[HTML]{0A0A0A} 0.0139}     & {\color[HTML]{0A0A0A} 0.0119}           & {\color[HTML]{0A0A0A} 0.0012}          & 0.0152                & {\color[HTML]{0A0A0A} 0.0051}      & 0.0098                                   \\
\multirow{-3}{*}{{\color[HTML]{0A0A0A} \textbf{Exchange}}} & {\color[HTML]{0A0A0A} 48}  & {\color[HTML]{0A0A0A} 0.0033}      & {\color[HTML]{0A0A0A} 0.0036} & {\color[HTML]{0A0A0A} 0.0034}           & {\color[HTML]{0A0A0A} 0.0018} & {\color[HTML]{0A0A0A} 0.0241}        & 0.0210              & {\color[HTML]{0A0A0A} 0.0031}          & 0.0089                & {\color[HTML]{0A0A0A} 0.0014}     & {\color[HTML]{0A0A0A} 0.0153}           & {\color[HTML]{0A0A0A} 0.0002}          & 0.0054                & {\color[HTML]{0A0A0A} 0.0174}      & 0.0025                                   \\ \bottomrule
\end{tabular}%
}
\label{tab:std_mae}
\end{table}


\section{Broader Impacts}
The proposed method for online time series forecasting presents a novel approach to address the challenges posed by distribution shifts in temporal data. By leveraging the identification of latent variables and their causal transitions, our framework demonstrates a provable reduction in Bayes risk, with significant improvements in forecasting accuracy, making the method highly applicable to real-time forecasting tasks in fields such as finance, healthcare, and energy management. 

Our method not only outperforms existing models like IDOL and TDRL in both synthetic and real-world experiments, but it also enhances the scalability and adaptability of forecasting systems in dynamic environments. The theoretical advancements, coupled with the plug-and-play architecture, facilitate seamless integration into existing forecasting pipelines, further promoting the use of causal modeling in practical scenarios.

Furthermore, the broader implications of this work extend beyond time series forecasting. The ability to identify and utilize latent variables in real-time systems opens the door to new applications in domains such as anomaly detection, predictive maintenance, and environmental monitoring. With its potential for improving decision-making processes in critical industries, our method sets a strong foundation for future advancements in online forecasting and causal representation learning, thus contributing to the evolution of machine learning models that can effectively handle complex, dynamic data in real-world environments.

\clearpage

\end{document}